\pgfplotsset{compat=1.18}
\theoremstyle{plain}
\newtheorem{theorem}{Theorem}[section]
\newtheorem{proposition}[theorem]{Proposition}
\theoremstyle{definition}
\newtheorem{definition}[theorem]{Definition}
\theoremstyle{remark}
\newtheorem{remark}[theorem]{Remark}
\DeclareMathOperator*{\argmin}{arg\,min}
\definecolor{cvprblue}{rgb}{0.21,0.49,0.74}
\title{Evaluating Data Influence in Meta Learning}
\author{Chenyang Ren$^{1,2,3}$, Huanyi Xie$^{1,2,4}$, Shu Yang$^{1,2}$, {Meng Ding$^{1,2,5}$, Lijie Hu$^{1,2}$, and Di Wang$^{1,2}$}\\
$^1$Provable Responsible AI and Data Analytics (PRADA) Lab\\
$^2$King Abdullah University of Science and Technology \\
$^3$Shanghai Jiao Tong University \quad $^4$KTH Royal Institute of Technology \\ 
$^5$State University of New York at Buffalo\\
}
\begin{document}
\maketitle
\begin{abstract}
As one of the most fundamental models, meta learning aims to effectively address few-shot learning challenges. However, it still faces significant issues related to the training data, such as training inefficiencies due to numerous low-contribution tasks in large datasets and substantial noise from incorrect labels. 
Thus, training data attribution methods are needed for meta learning. However, the dual-layer structure of mata learning complicates the modeling of training data contributions because of the interdependent influence between meta-parameters and task-specific parameters, making existing data influence evaluation tools inapplicable or inaccurate. To address these challenges, based on the influence function, we propose a general data attribution evaluation framework for meta-learning within the bilevel optimization framework. 
Our approach introduces task influence functions (task-IF) and instance influence functions (instance-IF) to accurately assess the impact of specific tasks and individual data points in closed forms. This framework comprehensively models data contributions across both the inner and outer training processes, capturing the direct effects of data points on meta-parameters as well as their indirect influence through task-specific parameters. We also provide several strategies to enhance computational efficiency and scalability. Experimental results demonstrate the framework's effectiveness in training data evaluation via several downstream tasks.
\end{abstract}
\section{Introduction}
% \begin{figure*}[htbp]
% \centering
% \includegraphics[scale=0.3]{}
% \caption{\di{The figure is hard to understand}}
% \label{fig_framework}
% \end{figure*}
Bilevel Optimization (BLO) has received significant attention and has become an influential framework in various machine learning applications including hyperparameter optimization~\cite{sun2022learning,okuno2021lp}, data selection~\cite{borsos2020coresets, borsos2024data}, meta learning~\citep{finn2017model}, and reinforcement learning~\cite{pmlr-v124-stadie20a}. A general BLO framework consists of two hierarchical optimization levels (outer and inner levels) and can be formulated as the following: 
\begin{equation*}
    \min_{\lambda} f(\lambda, \theta^*(\lambda)) \quad s.t. \quad \theta^*(\lambda) \in \arg \min_{\theta'} g(\lambda, \theta'),
\end{equation*}
where the objective and variables of the outer-level problem $f$ are influenced by  the inner-level problem $g$. 

Among the various instantiations of BLO, meta learning has gained considerable interest due to its effectiveness in addressing the challenges of few-shot learning~\cite{finn2017model,jamal2019task,hospedales2021meta, franceschi2018bilevel, yang2021provablyfasteralgorithmsbilevel}. Meta learning involves two interdependent sets of parameters: meta parameters and task-specific parameters. Its inner level focuses on independently training across multiple few-shot tasks using the meta parameters $\lambda$ supplied by the outer level to derive the task-specific parameters $\theta^*(\lambda)$. The outer level, in turn, evaluates the performance of the inner level's task-specific parameters trained based on the meta parameter. Specifically, the outer level guides and assesses the training process of the inner level, while the outcomes of the inner level training provide essential task-specific model support for the outer level.

Recent advancements highlight the applicability of meta learning across various domains, including federated learning~\cite{fallah2020personalized} and multi-task learning~\cite{wang2021bridging}, underscoring its versatility and potential. However, due to the large number of tasks, the training efficiency of meta learning often suffers, as these tasks do not contribute equally or effectively within extensive meta-datasets. This inefficiency highlights the need to discern the contributions of individual tasks to overall model training. Although various task selection methods have been proposed to improve training efficiency~\citep{jamal2019task, liu2020adaptive, achille2019task2vec}, the dual-layer structure complicates the relationship between tasks and meta parameters, making analysis challenging. Additionally, the large datasets used in meta learning often contain some degree of noise, which poses a significant issue because model training typically assumes that all labels are accurate, and thus the presence of noisy data can severely hinder model performance~\citep{ding2024robust}. Furthermore, the two-layer bilevel structure complicates the understanding of learning processes and the interpretation of meta parameter predictions for specific tasks, undermining transparency and confidence—especially in high-stakes sectors like healthcare and finance~\citep{kosan2023robust}. Thus, it is necessary to develop methods for assessing data quality and attribution of meta learning.

Influence functions have emerged as a powerful tool for evaluating training data attribution as they efficiently estimate the contributions of data points using gradient information without requiring retraining, in contrast to the Shapley Value. However, it is essential to note that influence functions were initially designed for M-estimators~\citep{huber1981robust}, which limits their direct applicability to bilevel structures. Due to the bilevel structure, data in meta learning can be evaluated at two distinct levels: (1) the task level, which includes the training and validation sets for each task used to learn task-specific parameters and assess their effectiveness, and (2) the instance level, which focuses on the specific data points within the train or validation dataset of individual tasks. Each of these levels presents unique challenges for meta learning. First, unlike classical supervised learning models whose objective is a single-level optimization problem that is directly influenced by each training data point, the data in meta learning from each task influences the meta parameters directly and indirectly via task-specific parameters.  While the influence function approach can be adapted to the meta-parameter by neglecting the inner-level training and indirect influences (as discussed in Section~???\ref{method:task}), this adaptation presents a challenge in accurately quantifying the data's attribution. Secondly, evaluating data is challenging because they do not explicitly appear in the training process of the outer meta parameter; instead, their influence is mediated through task-specific parameters. %This two-layer structure complicates the evaluation in two ways: first, the impact of instances on task-specific parameters is related to the meta-parameters, making it difficult to estimate; second, this impact propagates nonlinearly to the meta-parameters and is not directly reflected in the outer-level training loss.

To bridge the gap, in this paper, we propose an efficient data evaluation framework for meta learning (under the BLO setting). Specifically, we provide two methods: task-IF for measuring the influence of specific tasks in the meta learning dataset and instance-IF for measuring the influence of specific instances. For the task-level evaluation, our task-IF method emphasizes that the optimal conditions for model training are not only determined by the meta parameters but also incorporate the gradient information from task-specific parameters. 
%Moreover, our method addresses the computational challenges associated with calculating the total gradient and total Hessian, which requires extensive computations of Hessian matrices and evaluation of third-order partial derivatives, as discussed in Section ??????????????????????????????????????????????????????????????????????????????????????????????????????????????????????????????????????????????????????????????????????????????????????????????????????????????????????????????????????????????????????????????????????????????????????????????????????????????????????????????????????????????????????????????????????????????????????????????????????????????????\ref{accelerate}. 
At the instance level, we separately analyze data points in the validation and training datasets. We use a method similar to the task-IF to estimate data point impact for validation data. For training, we apply a novel two-stage closed-form estimation process across the model's inner and outer levels. First, we estimate the influence of instances on task-specific parameters, then introduce this influence at the outer level to approximate the effect of removing the instance. Our contribution can be summarized as follows:

\begin{itemize}
    \item  We conceptualize the data valuation problem in meta learning under the BLO framework, categorizing it into task and instance levels. At the instance level, we propose evaluation methods for training and validation samples. Specifically, we introduce a novel task-IF, which allows for precise modeling of task influence in meta learning by effectively capturing influences on both meta parameters and task-specific parameters. Additionally, we address the challenging instance-level evaluation through a two-stage estimation process that comprehensively accounts for both the direct and indirect effects of instances on meta parameters.
    \item To address the computational intensity and instability of inverse Hessian vector products, which have appeared several times in our method, we introduce several acceleration techniques, such as the EK-FAC method and Neumann expansion, to enhance computational efficiency and numerical stability, which allows for scaling the method to large-scale neural networks.
    \item Our method has potential applications in several real-world scenarios, including the automatic identification and removal of harmful tasks, and model editing, and the improvement of the interpretability of meta parameters. We verify the effectiveness and efficiency of task-IF and instance-IF through comprehensive experimental evaluation. Experimental results indicate that our framework performs well in data evaluation and editing of meta parameters. 
\end{itemize}

\section{Related Work}
\paragraph{Meta Learning.} Meta learning, often described as "learning to learn," has emerged as a pivotal approach for enabling models to rapidly adapt to new tasks by leveraging prior experience \cite{thrun1998learning}. 
\cite{finn2017model} introduced Model-Agnostic Meta-Learning (MAML), framing the meta learning process as a bilevel optimization problem where the inner loop adapts model parameters to specific tasks, and the outer loop optimizes for performance across tasks. This framework has inspired a range of optimization-based meta learning methods \cite{finn2017meta,finn2018probabilistic,li2017meta,lee2018gradient,grant2018recasting,flennerhag2019meta}. For example, \cite{li2017meta} extended MAML to Meta-SGD, which learns not only the initial parameters but also the learning rates, enhancing adaptability across tasks. \cite{grant2018recasting} reformulate MAML as a method for probabilistic inference in a hierarchical Bayesian model. \cite{finn2018probabilistic} proposed Probabilistic MAML (PMAML), which incorporates uncertainty into the meta learning process by modeling the task distribution probabilistically. \cite{flennerhag2019meta} introduced the Warped Gradient Descent, addressing the non-convergence due to bias and limited scale. Moreover, metric-based methods such as Matching Networks \cite{vinyals2016matching} and Prototypical Networks \cite{snell2017prototypical} focus on learning similarity metrics to classify new tasks effectively. \cite{oreshkin2018tadam} advanced Prototypical Networks by proposing a task-specific adaptive scaling mechanism to learn a task-dependent metric. 

\noindent {\bf Influence Function.} The influence function, initially a staple in robust statistics \cite{cook2000detection,cook1980characterizations}, has seen extensive adoption within machine learning since \cite{koh2017understanding} introduced it to the field. Its versatility spans various applications, including detecting mislabeled data, interpreting models, addressing model bias, and facilitating machine unlearning tasks. Notable works in machine unlearning encompass unlearning features and labels \cite{warnecke2021machine}, minimax unlearning \cite{liu2024certified}, forgetting a subset of image data for training deep neural networks \cite{golatkar2020eternal,golatkar2021mixed}, graph unlearning involving nodes, edges, and features. Recent advancements, such as the LiSSA method \cite{agarwal2017second,kwon2023datainf} and kNN-based techniques \cite{guo2021fastif}, have been proposed to enhance computational efficiency. Besides, various studies have applied influence functions to interpret models across different domains, including natural language processing \cite{han2020explaining} and image classification \cite{basu2021influence}, while also addressing biases in classification models \cite{wang2019repairing}, word embeddings \cite{brunet2019understanding}, and finetuned models \cite{chen2020multi}. 

Despite numerous studies on influence functions, we are the first to introduce this concept into bilevel optimization. In traditional models, the application of influence functions is relatively straightforward; however, in the bilevel optimization framework, it becomes significantly more complex due to the interactions between the variables across the outer and inner layers. This complexity necessitates a simultaneous consideration of the relationships between the outer objective function and the inner constraints. This paper provides a more comprehensive perspective on quantifying data influence and introduces, for the first time, a stratified approach to quantifying data within bilevel optimization.

\section{Preliminaries}

\noindent{\bf Notation.}
For a twice differentiable function $L(\lambda, \theta(\lambda); D)$, $\partial_{\lambda} L( \lambda, \theta(\lambda); D)$ denotes the direct gradient (partial derivative) of $L$ w.r.t. $\lambda$ and $\partial_{\theta} L( \lambda,  \theta(\lambda); D)$ denotes the direct gradient of $L$ w.r.t. $\theta(\lambda)$. And the total gradient (total derivative) of $L( \lambda,\theta(\lambda); D)$ w.r.t. $\lambda$ is calculated as $D_\lambda L(\lambda, \theta(\lambda); D)=\partial_{\lambda} L(\lambda, \theta(\lambda); D) + \frac{\mathrm{d}\theta(\lambda)}{\mathrm{d} \mathrm{\lambda}}\cdot\partial_{\theta} L(\lambda,  \theta(\lambda); D)$. 

\noindent {\bf Influence Function.} 
The influence function~\citep{huber1981robust} quantifies how an estimator relies on the value of each individual point in the sample. Consider a neural network $\hat{\theta}= \argmin_\theta L(\theta, D)=\sum_{i=1}^n \ell(z_i; \theta)$ with loss function $\ell$ and dataset $D=\{z_i\}_{i=1}^n$. When an individual data point $z_m$ is removed from the training set, the retrained optimal retrained model is denoted as $\hat{\theta}_{-z_m}$. The influence function method provides an efficient way to approximate $\hat{\theta}_{-z_m}$ without the need of retraining. By up-weighing $z_m$-related term in the loss function by $\epsilon$, a series of $\epsilon$-parameterized optimal models $\hat{\theta}_{-z_m, \epsilon}$ will be obtained by 
$$
    \hat{\theta}_{-z_m, \epsilon} = \argmin_{\theta}\left[L(\theta, D) + \epsilon\cdot \ell(z_m; \theta)\right].
$$
Consider the term 
\begin{equation}\label{eq:1}
    \nabla L( \hat{\theta}_{-z_m, \epsilon} , D)+ \epsilon\cdot \nabla\ell(z_m; \hat{\theta}_{-z_m, \epsilon} )=0,
\end{equation}
     we perform a Taylor expansion at $\hat{\theta}$ and incorporate the optimal gradient condition at $\hat{\theta}_{-z_m}$ and $\hat{\theta}$:
$$
\sum_{i=1}^{n} \nabla \ell(z_{i}; \hat{\theta} ) + \epsilon\cdot \nabla \ell(z_m; \hat{\theta}) + H_{\hat{\theta}} \cdot \left(\hat{\theta}_{-z_m,\epsilon} - \hat{\theta}\right) \approx 0
$$
where ${H}_{\hat{\theta}} = \sum_{i=1}^n\nabla_{\hat{\theta}}^2 \ell(z_i; {\hat{\theta}})$ is the Hessian matrix. %, with $\delta$ as a small constant to make the matrix invertible. 
Consequently, the Influence Function is defined as the derivative of the change in parameters of the retrained model due to perturbation with respect to the perturbation:
$$
\text{IF}(z_m) = \left. \frac{\mathrm{d}\hat{\theta}_{-z_m, \epsilon}-\hat{\theta}}{\mathrm{d}\epsilon}\right|_{\epsilon=0}  \approx -{H}_{\hat{\theta}}^{-1} \cdot\nabla \ell (z_m;\hat{\theta}).
$$
% The first term equals 00 due to the optimal gradient condition at ˆθ\hat{\theta}. Then an estimator for the parameters ˆθ−zm,ϵ\hat{\theta}_{-z_m, \epsilon} trained without zmz_m is obtained by Taylor expansion and optimal condition: 
% $$
% \hat{\theta}_{-z_m, \epsilon}  -\hat{\theta}= -\epsilon\cdot{H}_{\hat{\theta}}^{-1} \cdot \nabla  \ell (\hat{\theta}; z_m),
% $$

When setting $\epsilon = -1$, this results in the complete removal of $z_m$ from the retraining process. Then, $\hat{\theta}_{-z_m}$ can be approximated by a linear approximation formula as $\hat{\theta} - \text{IF}(z_m)$. %In practice, we always use a regularized Hessian ${H}_{\hat{\theta}} = \sum_{i=1}^n\nabla_{\hat{\theta}}^2 \ell(z_i; {\hat{\theta}})+ \delta I$ with some $\delta>0$ rather than the original Hessian to avoid the numerical instability issue when calculating the Hessian inversion.  
Additionally, for a differentiable evaluation function, such as one used to calculate the total model loss over a test set, the change resulting from up-weighting $\epsilon$ to $z_m$ in the evaluation results can be approximated as $-\nabla f(\hat{\theta}) \cdot \text{IF}(z_m)$.

\noindent{\bf Meta Learning.}
In this work, we consider the meta learning framework formulated as a bilevel optimization problem. Note that the BLO framework encapsulates most of the current objective functions in meta learning, such as  MAML~\cite{finn2017meta} and its variants \cite{finn2018probabilistic,nichol2018first,rusu2018meta,killamsetty2022nested}.

In meta learning, the inner and outer objectives are determined by calculating the average training error and validation error across multiple tasks, respectively. Specifically, given a set of tasks $\mathcal{I}$, we have access to a meta-training set $\mathcal{D} = \{D_i=D_i^{tr}\cup D_i^{val}\}_{i\in \mathcal{I}}$ with $D_i^{val}$ and $D_i^{val}$ as the training and validation data for task $i$. The model parameter for task $i$ is represented by $\theta_i(\lambda)$, determined by both hyperparameter $\lambda$ and its training data, and is obtained by minimizing the inner objective:
\begin{equation}\label{original_inner_loss}
    \theta_i(\lambda) = \argmin_{\theta_i} L_I(\lambda, \theta_i; D_i^{tr}).
\end{equation}
where the inner loss function $L_I(\lambda,\theta_i; \mathcal{D}^{tr}_i)$ is the empirical risk of $\lambda$ and the $i$-th task-specific parameter $\theta_i$ on training dataset $D_i^{tr}$ (we use $\ell_2$-regularization to avoid overfitting): 
\begin{equation}\label{inner-loss-d}
    L_I(\lambda,\theta_i; \mathcal{D}^{tr}_i) = \sum_{z\in\mathcal{D}^{tr}_i}\ell(z;\theta_i) + \frac{\delta}{2} \|\theta_i- \lambda\|^2.
\end{equation}
 And the $i$-th task related outer loss function is the empirical risk of $\lambda$ and $\theta_i$ on validation dataset $D_i^{val}$, defined as 
\begin{equation}\label{outer-loss-d}
    L_O(\lambda, \theta_i(\lambda); \mathcal{D}^{val}_i) = \sum_{(x,y)\in\mathcal{D}^{val}_i}\ell(z;\theta_i(\lambda)). 
\end{equation}
Noting $\lambda$ here is shared between different tasks, and the outer objective is determined by calculating the average validation error on the parameters we obtained from the inner objective across multiple tasks as (we use $\ell_2$-regularization to avoid overfitting):
\begin{equation}\label{original_outer_loss}
\begin{split}
    \lambda^* 
     &= \argmin_{\lambda} L_{\text{Total}}(\lambda; \theta(\lambda); \mathcal{D}) \\
    &=\argmin_{\lambda}\sum_{i\in \mathcal{I}}  L_O(\lambda, \theta_i(\lambda); D_i^{val})+ \frac{\delta}{2}\|\lambda\|^2. 
\end{split}
\end{equation}
\section{Modeling Data Influence in Meta Learning}\label{method}
It is evident that the datasets utilized for meta learning exhibit a hierarchical structure. Specifically, the whole dataset consists of multiple task-specific datasets. Consequently, when evaluating data influence in meta learning, we can assess the influence of individual tasks on training outcomes while also conducting a detailed analysis of the impact of specific data points—i.e., instances—within a given task. 

Due to the bilevel structure in meta learning,  note that for instances, it is crucial to distinguish whether tthey are in the training dataset or the validation dataset. Consider the $k$-th task; if the data point we are concerned about is in the validation dataset, then only the meta parameter will be impacted, and we can directly apply the influence function method to evaluate its contribution. The data from the task validation set directly influences the update of meta parameters. In contrast, the impact of the task training set data on meta parameters is more complex; it indirectly affects the update of meta parameters by influencing the few-shot learning parameters that are relevant to the task. The following sections will provide our general frameworks for task and instance levels' data influence evaluation. 
\subsection{Evaluating the Influence of Tasks}\label{method:task}
% \di{Just use one sentence to say why we do not need to care about the inner level.}
Consider the $k$-th task, the information related to this task includes $D_k=D_k^{tr}\cup D_k^{val}$. We will leverage influence function to evaluate how $D_k$ will affect the model given by meta learning. When we remove the $k$-th task from meta learning, the $k$-th inner training task will be completely eliminated. This is reflected in the outer layer, where $L_O\left(\lambda, \theta_k(\lambda); D_k^{val}\right)$ will disappear. Therefore, we can ignore the specifics of the $k$-th inner training and focus solely on the outer level.
The retrained meta parameter is defined as
\begin{equation*}
\begin{split}
    \lambda^*_{-k} = \argmin_{\lambda} \sum_{i\neq k}  L_O\left(\lambda, \theta_i(\lambda); D_i^{val}\right)+\frac{\delta}{2}\|\lambda\|^2.
\end{split}
\end{equation*}
% If we remove the data related to task kk, the task-related parameter θk\theta_k disappears from the outer-level loss function,
\noindent {\bf A Direct Approach.} If we directly adopt the same idea of the original influence function and consider $D_k$ as a data point, we then may up-weight the loss for the $k$-th task $L_O\left(\lambda, \theta_k(\lambda); D_k^{val}\right)$ by a small perturbation $\epsilon$ and replace the $\lambda^*_{-k}$ by the minimizer of the $\epsilon$-parameterized model $\lambda^*_{\epsilon}$. 
Ignoring the dependence between the task-specific parameters and the meta parameters, one can intuitively derive an influence function by using a similar Taylor expansion of the partial derivative w.r.t. $\lambda$ of the outer objective function as
\begin{align}\label{if}
    \text{IF}(D_k)  = - \left(\delta\cdot I + H_{\lambda, \text{direct}} \right)^{-1} \cdot {\partial_{\lambda}L_O\left( \lambda^*, \theta_k(\lambda^*);D^{val}_k\right)},
\end{align}
where $H_{\lambda,\text{direct}}$ is the Hessian matrix directly with respect to $\lambda$, defined as
\begin{equation}\label{eq:7}
    H_{\lambda,\text{direct}} =\sum_{i\in\mathcal{I}}\partial_{\lambda}{\partial_{\lambda}L_O\left(\lambda^*, \theta_i(\lambda^*); D_i^{val}\right)}. 
\end{equation}
% Thus we have $\lambda^*_{-k}=\lambda^*+(-1)\times  \text{IF}(k)$, where 
%  the influence function is defined as 
%  \begin{align}\label{if}
%     \text{IF}(k)  = - \left(\delta\cdot I + H_{\lambda, \text{direct}} \right)^{-1} \cdot {\partial_{\lambda}L_O\left( \lambda^*, \theta_k(\lambda^*);D^{val}_k\right)},
% \end{align}
% where $H_{\lambda,\text{direct}}$ is the Hessian matrix directly with respect to $\lambda$, defined as
% \begin{equation*}
%     H_{\lambda,\text{direct}} = \frac{1}{|\mathcal{I}|}\sum_{i\in\mathcal{I}}\partial_{\lambda}{\partial_{\lambda}L_O\left(\lambda, \theta_i(\lambda); D_i^{val}\right)}. 
% \end{equation*}
\noindent {\bf Our Method.} However, this method is flawed as it overlooks an important aspect: $\theta_i(\lambda)$ is dependent on $\lambda$, and the experimental results in Table \ref{tab:results} further demonstrate the inaccuracy of this method. Mathematically, in the original IF definition, since there is no inner level function, we have $\nabla L( \hat{\theta}; D)=0$ and (\ref{eq:1}). However, as in bilevel optimization the outer objective function $L_{\text{Total}}$ dependents both $\lambda$ and $\{\theta_i(\lambda)\}_{i\in \mathcal{I}}$, for $\lambda^*$ we do not have the direct gradient $\partial_{\lambda}L_{\text{Total}}(\lambda^*, \theta(\lambda^*); \mathcal{D})=0$ and its corresponding (\ref{eq:1}). Thus, we cannot use a similar Taylor expansion to (\ref{eq:1}) and get (\ref{if}). 

To address the above issue, the key observation is that while the direct gradient at $\lambda^*$ is non-zero, its corresponding total gradient of the loss function is zero, i.e., $\mathrm{D}_{\lambda} L_{\text{Total}}(\lambda^*, \theta(\lambda^*); \mathcal{D})=0$. The following result provides a result on the computation of the total gradient. 

\begin{theorem}\label{total_deri} The total gradient of the $i$-th task related outer loss $L_O\left(\lambda, \theta_i(\lambda); D_i^{val}\right)$ with respect to $\lambda$ can be written as: 
\begin{align*}
&\mathrm{D}_{\lambda}L_O\left(\lambda, \theta_i(\lambda); D_i^{val}\right)\\
=&   \partial_{\lambda}L_O\left( \lambda, \theta_i(\lambda); D_i^{val}\right) +\frac{\mathrm{d}\theta_i(\lambda)} {\mathrm{d}\lambda} \cdot \partial_{\theta_i} L_O\left( \lambda, \theta_i(\lambda); D_i^{val} \right). 
\end{align*}
The term $\frac{\mathrm{d}\theta_i(\lambda)} {\mathrm{d}\lambda}$ can be calculated by
\begin{equation}\label{deri}
\frac{\mathrm{d}\theta_i(\lambda)} {\mathrm{d}\lambda} = -\partial_{\lambda}\partial_{\theta_i}L_I\left(\lambda, \theta_i(\lambda); \mathcal{D}^{tr}_i \right) \cdot H_{i,\text{in}}^{-1}, 
\end{equation}
where $H_{i,\text{in}}$ is the $i$-th inner-level Hessian matrix, defined as $H_{i,\text{in}} = {\partial_{\theta_i}\partial_{\theta_i}L_I\left(\lambda, \theta_i(\lambda); \mathcal{D}^{tr}_i \right)}$.

\end{theorem}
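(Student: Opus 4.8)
The plan is to recognize the statement as the composition of two standard facts: the multivariate chain rule for the first identity, and the implicit function theorem for the second. For the decomposition of $\mathrm{D}_{\lambda}L_O$, I would simply invoke the total-derivative formula fixed in the Notation section. Since $L_O\left(\lambda, \theta_i(\lambda); D_i^{val}\right)$ depends on $\lambda$ both directly and through $\theta_i(\lambda)$, the chain rule gives $\mathrm{D}_{\lambda}L_O = \partial_{\lambda}L_O + \frac{\mathrm{d}\theta_i(\lambda)}{\mathrm{d}\lambda}\cdot\partial_{\theta_i}L_O$ immediately, with no work beyond carefully separating the direct from the total gradient.

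The substance lies in computing $\frac{\mathrm{d}\theta_i(\lambda)}{\mathrm{d}\lambda}$. First I would use the fact that $\theta_i(\lambda)$ is defined in (\ref{original_inner_loss}) as the minimizer of the inner objective, so the first-order stationarity condition $\partial_{\theta_i}L_I\left(\lambda, \theta_i(\lambda); \mathcal{D}^{tr}_i\right) = 0$ holds as an identity in $\lambda$. Differentiating this identity totally with respect to $\lambda$ and applying the chain rule to the $\theta_i(\lambda)$-dependence yields $\partial_{\lambda}\partial_{\theta_i}L_I + \frac{\mathrm{d}\theta_i(\lambda)}{\mathrm{d}\lambda}\cdot\partial_{\theta_i}\partial_{\theta_i}L_I = 0$. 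Recognizing $\partial_{\theta_i}\partial_{\theta_i}L_I = H_{i,\text{in}}$ and solving for the Jacobian gives $\frac{\mathrm{d}\theta_i(\lambda)}{\mathrm{d}\lambda} = -\partial_{\lambda}\partial_{\theta_i}L_I \cdot H_{i,\text{in}}^{-1}$, which is exactly (\ref{deri}).

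The main obstacle is less a genuine difficulty than a point requiring care: justifying that $H_{i,\text{in}}$ is invertible, so that the implicit function theorem applies and $\theta_i(\lambda)$ is a well-defined differentiable function of $\lambda$ in a neighborhood of the minimizer. Here I would appeal to the $\ell_2$-regularization term $\frac{\delta}{2}\|\theta_i - \lambda\|^2$ in the inner loss (\ref{inner-loss-d}): it contributes $\delta I$ to the inner Hessian, rendering $H_{i,\text{in}}$ positive definite (assuming $\ell$ is convex, or at least that $\delta$ dominates near the minimizer), hence nonsingular. A secondary point is bookkeeping the matrix-dimension conventions, since the paper treats $\frac{\mathrm{d}\theta_i(\lambda)}{\mathrm{d}\lambda}$ and the mixed partial $\partial_{\lambda}\partial_{\theta_i}L_I$ as $\dim(\lambda)\times\dim(\theta_i)$ objects that multiply on the left of the gradient and Hessian; I would verify that the order of factors and any implicit transposes are consistent with the total-derivative convention established in the Notation section.
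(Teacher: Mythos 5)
Your proposal is correct and follows essentially the same route as the paper's proof: the chain rule for the decomposition of $\mathrm{D}_{\lambda}L_O$, followed by totally differentiating the inner stationarity condition $\partial_{\theta_i}L_I\left(\lambda,\theta_i(\lambda);\mathcal{D}^{tr}_i\right)=0$ and solving via the implicit function theorem. Your added remark justifying invertibility of $H_{i,\text{in}}$ through the $\frac{\delta}{2}\|\theta_i-\lambda\|^2$ regularizer is a point the paper's proof silently assumes, so it is a welcome (if minor) strengthening rather than a different argument.
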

Based on the discussion above, it is clear that the direct Hessian matrix in (\ref{eq:7}) is also inadequate for the influence function in the bilevel setting. Therefore, we propose an expression for the total Hessian matrix that incorporates the total derivative.
\begin{definition}\label{outer-total-Hessian} The total Hessian matrix  of the outer total loss $L_{\text{Total}}(\lambda, \theta(\lambda); \mathcal{D})$ with respect to $\lambda$ is defined as:
\begin{equation*}
H_{\lambda, \text{Total}} =\sum_{i\in\mathcal{I}}\left({ \mathrm{D}_{\lambda}\mathrm{D}_{\lambda}L_O\left(\lambda, \theta_i(\lambda); D_i^{val}\right) }\right). 
\end{equation*}
\end{definition}
Based on the total gradient and the total Hessian matrix we proposed, we  can up weight the $L_O(\lambda, \theta_k; \mathcal{D}^{val}_k)$ term at the outer level and derive its influence function.
\begin{theorem}($\text{task-IF}$)\label{task-IF}
Define the task-IF of the $k$-th task in the meta parameter $\lambda^*$ as
   \begin{align*}\label{app:task:para-if}
    &\text{task-IF}(D_k;\lambda^*, \theta_k(\lambda^*))\\
    =& - \left(\delta\cdot I + H_{\lambda, \text{Total}} \right)^{-1} \cdot {\mathrm{D}_{\lambda}L_O\left( \lambda^*, \theta_k(\lambda^*);D^{val}_k\right)}.
\end{align*}
Then after the removal of the $k$-th task, the retrained meta parameter $\lambda^*_{-k}$ can be estimated by
\begin{equation*}
    \lambda^*_{-k} \approx \lambda^* - \text{para-IF}(D_k;\lambda^*, \theta_k(\lambda^*)). 
\end{equation*}
\end{theorem}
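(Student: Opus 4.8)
The plan is to mirror the classical influence-function derivation recalled in the preliminaries, but to replace every ordinary gradient and Hessian by its total (bilevel) counterpart, since this substitution is precisely the correction that distinguishes our method from the flawed direct approach. First I would introduce the $\epsilon$-parameterized meta parameter
\begin{equation*}
\lambda^*_\epsilon = \argmin_{\lambda} \left[L_{\text{Total}}(\lambda, \theta(\lambda); \mathcal{D}) + \epsilon\cdot L_O\left(\lambda, \theta_k(\lambda); D_k^{val}\right)\right],
\end{equation*}
so that $\epsilon = -1$ deletes exactly the $k$-th outer term and recovers the leave-one-task-out objective, i.e. $\lambda^*_{\epsilon=-1} = \lambda^*_{-k}$. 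Throughout this argument the maps $\theta_i(\cdot)$ are held fixed as functions of $\lambda$ (they carry no explicit $\epsilon$-dependence), which is consistent with the earlier observation that removing task $k$ eliminates its inner training and only its outer contribution survives at the meta level.

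Next I would write the first-order optimality condition for $\lambda^*_\epsilon$. The crucial point, and the genuine analog of equation~(\ref{eq:1}), is that because the outer objective depends on $\lambda$ both directly and through every $\theta_i(\lambda)$, stationarity must be phrased with the \emph{total} gradient:
\begin{equation*}
\mathrm{D}_\lambda L_{\text{Total}}(\lambda^*_\epsilon, \theta(\lambda^*_\epsilon); \mathcal{D}) + \epsilon\cdot \mathrm{D}_\lambda L_O\left(\lambda^*_\epsilon, \theta_k(\lambda^*_\epsilon); D_k^{val}\right) = 0,
\end{equation*}
where $\mathrm{D}_\lambda L_O$ is computed via Theorem~\ref{total_deri}. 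At $\epsilon = 0$ this reduces to $\mathrm{D}_\lambda L_{\text{Total}}(\lambda^*, \theta(\lambda^*); \mathcal{D}) = 0$, the vanishing of the full total gradient at $\lambda^*$, which plays the role that $\nabla L(\hat\theta; D)=0$ plays in the single-level case.

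I would then Taylor-expand this condition about $\lambda^*$. Differentiating the total gradient in $\lambda$ produces exactly the total Hessian of Definition~\ref{outer-total-Hessian}, while the regularizer contributes $\delta I$; the $\epsilon$-term is kept only to leading order since its correction is $O\!\left(\epsilon\,(\lambda^*_\epsilon-\lambda^*)\right)$. Using the vanishing total gradient at $\lambda^*$, this gives
\begin{equation*}
\left(\delta I + H_{\lambda,\text{Total}}\right)(\lambda^*_\epsilon - \lambda^*) + \epsilon\cdot \mathrm{D}_\lambda L_O\left(\lambda^*, \theta_k(\lambda^*); D_k^{val}\right) \approx 0.
\end{equation*}
Solving for $\lambda^*_\epsilon - \lambda^*$, differentiating at $\epsilon=0$ to identify $\text{task-IF}(D_k;\lambda^*,\theta_k(\lambda^*))$, and finally substituting $\epsilon = -1$ into the resulting linear approximation yields $\lambda^*_{-k} \approx \lambda^* - \text{task-IF}(D_k;\lambda^*,\theta_k(\lambda^*))$.

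The main obstacle I anticipate is justifying the Taylor step rigorously: one must differentiate $\mathrm{D}_\lambda L_{\text{Total}}$ with respect to $\lambda$ while carefully tracking the implicit dependence of each $\theta_i(\lambda)$, verifying that this second total derivative is indeed $H_{\lambda,\text{Total}}$ and that the implicit-function expression~(\ref{deri}) for $\tfrac{\mathrm{d}\theta_i(\lambda)}{\mathrm{d}\lambda}$ remains valid along the perturbed path. As in the classical case, the approximation rests on discarding higher-order terms and on assuming that the inner and outer problems are smooth with invertible Hessians in a neighborhood of the optimum.
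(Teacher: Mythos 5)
Your proposal is correct and follows essentially the same route as the paper's own proof: up-weight the $k$-th outer loss term by $\epsilon$, impose stationarity via the \emph{total} gradient (which vanishes at $\lambda^*$), Taylor-expand to obtain $\left(\delta I + H_{\lambda,\text{Total}}\right)(\lambda^*_\epsilon-\lambda^*) + \epsilon\,\mathrm{D}_\lambda L_O \approx 0$, and set $\epsilon=-1$. The only cosmetic difference is that you fold the $\tfrac{\delta}{2}\|\lambda\|^2$ regularizer into $L_{\text{Total}}$ while the paper writes it out separately; the argument is identical.
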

% \di{This is a definition, you need  to write it as a theorem. Just follow Line 144-145 }
\begin{remark}\label{remark:4.4}
  Note that the task influence function depends on the total Hessian matrix. 
  For simplicity, we denote $L_O\left(\lambda, \theta_i(\lambda); D_i^{val}\right)$ as $L_O^i$, then it can be written as 
\begin{equation*}\label{Total-Hessian}
\begin{split}
H_{\lambda, \text{Total}} 
% =&\sum_{i\in\mathcal{I}}\left({ \mathrm{D}_{\lambda}\mathrm{D}_{\lambda}L_O\left(\lambda, \theta_i(\lambda); D_i^{val}\right) }\right)\\
% = & \mathrm{D}_{\lambda} \left( \partial_{\lambda}L_O\left( \lambda, \theta_i(\lambda)\right) +\frac{\mathrm{d}\theta_i(\lambda)} {\mathrm{d}\lambda} \cdot \partial_{\theta_i} L_O\left( \lambda, \theta_i(\lambda)\right)\right)\\
=&\sum_{i\in\mathcal{I}}\left(\partial_{\lambda}\partial_{\lambda}L^i_O + 2\cdot \frac{\mathrm{d}\theta_i(\lambda)} {\mathrm{d}\lambda} \cdot \partial_{\theta_i}\partial_{\lambda} L_O^i\right.\\
& \left.+ \frac{\mathrm{d}^2\theta_i(\lambda)} {\mathrm{d}\lambda^2}\cdot \partial_{\theta_i} L_O^i +\left(\frac{\mathrm{d}\theta_i(\lambda)} {\mathrm{d}\lambda}\right)^2\cdot \partial_{\theta_i}\partial_{\theta_i} L_O^i\right). 
\end{split}
\end{equation*}
Due to the complex definition of $\frac{\mathrm{d}\theta_i(\lambda)} {\mathrm{d}\lambda}$ as discussed in  (\ref{deri}), $\frac{\mathrm{d}^2\theta_i(\lambda)} {\mathrm{d}\lambda^2}$ requires a three-order partial derivative, presenting significant computation challenges. To address these complexities, we will introduce several accelerated algorithms in Section \ref{accelerate}. 
\end{remark}
% It is important to emphasize that the calculation of $H_{\lambda, \text{Total}}$ entails the use of third-order partial derivatives and third-order tensors 
% \di{can you explain why? also can you refer the full form of this term?},
\subsection{Evaluating the influence of Individual Instance}\label{method:instance}

There are two instances in the meta learning dataset: instances in the training dataset and instances in the validation dataset. The validation instance appears on the outer level and will only influence the meta parameter without directly impacting the task-specific parameter. The training instances appear at the inner level and impact the task-specific parameter, thereby influencing the meta parameter indirectly. Therefore, we must delve into the inner training process and conduct a two-stage evaluation. In a word, the evaluation of validation instances is similar to the task evaluation problem, while evaluating training instances is more challenging and requires new techniques.

\noindent {\bf Validation Data Influence.} We first focus on the evaluation of influence for the validation data $\tilde{z}=(\tilde{x}, \tilde{y})$ of the $k$-th task. Since the validation data was not used in the inner level loss, we can directly follow the idea of task-IF
to the outer loss function (\ref{outer-loss-d}). The only difference is that we up-weigh a term related to data point $\tilde{z}$ in the $k$-th task $\ell(\tilde{z}; \theta_k(\lambda))$ rather than the task term $L_O(\lambda, \theta_k(\lambda); \mathcal{D}^{val}_k)$. Then similar to Theorem \ref{task-IF}, we obtain the following theorem.
\begin{theorem}[Instance-IF for Validation Data]\label{instance-IF for val}
The influence function of the validation data 
$\tilde{z} = (\tilde{x}, \tilde{y})$ is 
   \begin{align*}\label{app:task:para-if}
    &\text{instance-IF}(\tilde{z}; \lambda^*, \theta_{k}(\lambda^*))\\
    &= - \left(\delta\cdot I + H_{\lambda, \text{Total}} \right)^{-1} \nabla \ell(\tilde{z}; \theta_k(\lambda^*)).
\end{align*}
After the removal of $\tilde{z}$, the retrained optimal meta parameter $\lambda^*_{-\tilde{z}}$ can be estimated by
\begin{equation*}
    \lambda^*_{-\tilde{z}} \approx \lambda^* - \text{instance-IF}(\tilde{z};\lambda^*, \theta_{k}(\lambda^*)). 
\end{equation*}
\end{theorem}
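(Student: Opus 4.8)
The plan is to mirror the derivation of the task-IF in Theorem~\ref{task-IF}, but with the up-weighted object being the single validation-instance term $\ell(\tilde z;\theta_k(\lambda))$ instead of the whole task term $L_O(\lambda,\theta_k(\lambda);D_k^{val})$. Concretely, I would introduce the $\epsilon$-perturbed meta parameter
\[
\lambda^*_{-\tilde z,\epsilon}=\argmin_\lambda\Big[L_{\text{Total}}(\lambda,\theta(\lambda);\mathcal D)+\epsilon\,\ell(\tilde z;\theta_k(\lambda))\Big],
\]
so that $\epsilon=-1$ encodes deletion of $\tilde z$ from the $k$-th validation set. The first step is to observe that, because $\tilde z$ is a \emph{validation} point, it never enters any inner objective $L_I(\lambda,\theta_i;D_i^{tr})$; hence every inner minimizer map $\lambda\mapsto\theta_i(\lambda)$ is functionally unchanged by the perturbation, and the total-gradient identity of Theorem~\ref{total_deri} together with the total Hessian of Definition~\ref{outer-total-Hessian} applies verbatim, with the \emph{same} $H_{\lambda,\text{Total}}$ as in the task-IF. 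This is precisely what makes the validation case tractable, in contrast to the training-instance case.

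Next I would impose the correct stationarity condition. As emphasized before Theorem~\ref{total_deri}, the partial gradient of the outer objective need not vanish at $\lambda^*$, but the total gradient does; imposing the analogous condition on the perturbed problem gives (writing $L_{\text{Total}}$ inclusive of the $\tfrac{\delta}{2}\|\lambda\|^2$ regularizer)
\[
\mathrm D_\lambda L_{\text{Total}}\big(\lambda^*_{-\tilde z,\epsilon},\theta(\lambda^*_{-\tilde z,\epsilon});\mathcal D\big)+\epsilon\,\mathrm D_\lambda\ell\big(\tilde z;\theta_k(\lambda^*_{-\tilde z,\epsilon})\big)=0.
\]
I would then Taylor-expand this identity in $\lambda$ about $\lambda^*$, retaining only first-order terms in $\lambda^*_{-\tilde z,\epsilon}-\lambda^*$ and in $\epsilon$. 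The zeroth-order term cancels by the unperturbed total-gradient stationarity $\mathrm D_\lambda L_{\text{Total}}(\lambda^*,\theta(\lambda^*);\mathcal D)=0$; the coefficient of $\lambda^*_{-\tilde z,\epsilon}-\lambda^*$ is $\delta I+H_{\lambda,\text{Total}}$ (the regularizer supplies $\delta I$ and Definition~\ref{outer-total-Hessian} supplies $H_{\lambda,\text{Total}}$); and the surviving $\epsilon$-term is $\mathrm D_\lambda\ell(\tilde z;\theta_k(\lambda^*))$, the $\epsilon\cdot(\lambda^*_{-\tilde z,\epsilon}-\lambda^*)$ cross-term being dropped as second order. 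Solving the linear system and differentiating at $\epsilon=0$ yields
\[
\left.\frac{\mathrm d\lambda^*_{-\tilde z,\epsilon}}{\mathrm d\epsilon}\right|_{\epsilon=0}=-\big(\delta I+H_{\lambda,\text{Total}}\big)^{-1}\mathrm D_\lambda\ell\big(\tilde z;\theta_k(\lambda^*)\big),
\]
which is the claimed $\text{instance-IF}(\tilde z;\lambda^*,\theta_k(\lambda^*))$. Here I read the theorem's $\nabla\ell(\tilde z;\theta_k(\lambda^*))$ as the total gradient $\mathrm D_\lambda\ell=\tfrac{\mathrm d\theta_k}{\mathrm d\lambda}\,\partial_{\theta_k}\ell$, since $\ell(\tilde z;\cdot)$ has no direct $\lambda$-dependence. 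Setting $\epsilon=-1$ in the first-order expansion $\lambda^*_{-\tilde z,\epsilon}\approx\lambda^*+\epsilon\cdot\text{instance-IF}(\tilde z)$ then gives $\lambda^*_{-\tilde z}\approx\lambda^*-\text{instance-IF}(\tilde z)$, as stated.

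The main obstacle I anticipate is not the algebra of the expansion but justifying the first-order truncation and the reuse of $H_{\lambda,\text{Total}}$. Two points deserve care. First, I must argue rigorously that the perturbation leaves all inner problems untouched, so that no extra inner-level Hessian or mixed term enters the outer stationarity condition; this is the crux of why the validation estimator inherits task-IF's structure unchanged. Second, as flagged in Remark~\ref{remark:4.4}, $H_{\lambda,\text{Total}}$ already contains the second-order sensitivity $\tfrac{\mathrm d^2\theta_i}{\mathrm d\lambda^2}$, so the whole derivation is only formally valid under implicit regularity assumptions: each $\theta_i(\lambda)$ is twice differentiable with invertible inner Hessian $H_{i,\text{in}}$ (so Theorem~\ref{total_deri} applies) and $\delta I+H_{\lambda,\text{Total}}$ is invertible near $\lambda^*$. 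I would state these assumptions explicitly and present the conclusion, as is standard for influence functions, as a first-order linearization rather than an exact identity.
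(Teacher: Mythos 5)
Your proposal is correct and follows essentially the same route as the paper's proof: up-weight the single term $\ell(\tilde z;\theta_k(\lambda))$ in the outer objective, impose the total-gradient stationarity condition, Taylor-expand about $\lambda^*$ so the zeroth-order term cancels by optimality, and solve the resulting linear system with coefficient $\delta I+H_{\lambda,\text{Total}}$ before letting $\epsilon\to 0$ and setting $\epsilon=-1$. Your reading of $\nabla\ell(\tilde z;\theta_k(\lambda^*))$ as the total derivative through $\theta_k(\lambda)$ matches the paper's intent (cf.\ its validation-instance algorithm), and your added remarks on regularity are a refinement rather than a departure.
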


\noindent {\bf Training Data Influence.} We evaluate the influence of training data first. We will employ a two-phase analysis to analyze the impact of an individual data point $\tilde{z} = (\tilde{x}, \tilde{y})$ in the $k$-th task. Initially, we utilize the IF to assess the data point's effect on its corresponding task-related parameter $\theta_k(\lambda)$. %Review the detailed definition of the inner level task in equation (????????????\ref{inner-loss-d}). And θi(λ)=\argminLI(λ,θi,Dtri){\theta}_i(\lambda) = \argmin L_I(\lambda, \theta_i, \mathcal{D}^{tr}_i) defines the optimal inner level parameter for a given task DiD_i. 
%To analyze the impact of an individual data point $\tilde{z} = (\tilde{x}, \tilde{y})$ in the $k$-th task, we propose an influence function that quantifies this data point's effect on the inner level parameter $\theta_k$.
Since the meta parameter, $\lambda$ is passed from the outer level and remains fixed during each inner-level training period. Thus, we can use the classical influence function directly to $  L_I(\lambda,\theta_k; \mathcal{D}^{tr}_k)$.

\begin{theorem}\label{inner-IF}
    The influence function for the inner level (inner-IF) of $\tilde{z}=(\tilde{x}, \tilde{y})$ from the training set is 
    \begin{equation*}
    \text{inner-IF}(\tilde{z}; \theta_{k}(\lambda^*)) = - H_{k, \text{in}}^{-1}\cdot \nabla_{\theta_k} \ell(\tilde{z}; \theta_{k}(\lambda^*)),
\end{equation*}
where $H_{k, \text{in}} = \partial_{\theta_k}\partial_{\theta_k} L_I(\lambda^*, \theta_k(\lambda^*);\mathcal{D}^{tr}_k)$ is the $k$-th inner-level Hessian matrix.

After the removal of $\tilde{z}$, denote $\theta^-_{k}(\lambda^*)$ as its corresponding retrained task-parameter $\theta^-_{k}(\lambda^*)= \argmin_{\theta_i} L_I(\lambda^*, \theta_i; D_i^{tr}\backslash\tilde{z}))$. Then it can be estimated by 
\begin{equation*}
 \theta^-_{k}(\lambda^*) \approx \theta_{k}(\lambda^*)-  \text{inner-IF}(\tilde{z}; \theta_{k}(\lambda^*)).
\end{equation*}
\end{theorem}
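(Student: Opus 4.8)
The plan is to directly apply the classical influence function machinery recalled in the Preliminaries to the inner-level objective $L_I(\lambda^*, \theta_k; \mathcal{D}^{tr}_k)$, treating $\lambda^*$ as a fixed constant throughout. The crucial observation, which the theorem statement already flags, is that during a single inner-level training run the meta parameter $\lambda$ is supplied from the outer level and does not vary; hence $\theta_k(\lambda^*)$ is an honest $M$-estimator of the single-level problem $\theta_k(\lambda^*) = \argmin_{\theta_k} L_I(\lambda^*, \theta_k; \mathcal{D}^{tr}_k)$, and the $\ell_2$-regularization term $\frac{\delta}{2}\|\theta_k - \lambda^*\|^2$ in (\ref{inner-loss-d}) is just a fixed additive piece of this objective. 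This places us exactly in the setting of Equation (\ref{eq:1}) and the derivation that follows it, with no bilevel subtlety to worry about.

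First I would set up the $\epsilon$-upweighted inner problem
\begin{equation*}
    \theta_{k,\epsilon}(\lambda^*) = \argmin_{\theta_k}\left[ L_I(\lambda^*, \theta_k; \mathcal{D}^{tr}_k) + \epsilon\cdot \ell(\tilde{z}; \theta_k)\right],
\end{equation*}
and write the first-order optimality condition $\partial_{\theta_k} L_I(\lambda^*, \theta_{k,\epsilon}; \mathcal{D}^{tr}_k) + \epsilon\cdot \nabla_{\theta_k}\ell(\tilde{z}; \theta_{k,\epsilon}) = 0$. Next I would Taylor-expand this condition about the unperturbed optimum $\theta_k(\lambda^*)$, use the stationarity $\partial_{\theta_k} L_I(\lambda^*, \theta_k(\lambda^*); \mathcal{D}^{tr}_k) = 0$ to cancel the zeroth-order term, and keep terms to first order in $\epsilon$. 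This yields $H_{k,\text{in}}\cdot\bigl(\theta_{k,\epsilon}(\lambda^*) - \theta_k(\lambda^*)\bigr) + \epsilon\cdot\nabla_{\theta_k}\ell(\tilde{z}; \theta_k(\lambda^*)) \approx 0$, with $H_{k,\text{in}} = \partial_{\theta_k}\partial_{\theta_k} L_I(\lambda^*, \theta_k(\lambda^*);\mathcal{D}^{tr}_k)$ exactly the inner Hessian named in the statement. Differentiating with respect to $\epsilon$ and evaluating at $\epsilon = 0$ gives the claimed $\text{inner-IF}(\tilde{z}; \theta_k(\lambda^*)) = -H_{k,\text{in}}^{-1}\cdot\nabla_{\theta_k}\ell(\tilde{z}; \theta_k(\lambda^*))$. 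Finally, setting $\epsilon = -1$ corresponds to fully deleting $\tilde{z}$ from the inner training set, so the linear approximation $\theta^-_k(\lambda^*) \approx \theta_k(\lambda^*) - \text{inner-IF}(\tilde{z}; \theta_k(\lambda^*))$ follows, matching the Preliminaries' $\hat{\theta}_{-z_m}\approx\hat{\theta} - \text{IF}(z_m)$.

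The only genuinely delicate point, and the one I would state most carefully, is justifying that $\lambda^*$ may be frozen: the inner Hessian $H_{k,\text{in}}$ differentiates only in $\theta_k$ and the gradient condition is the inner stationarity, so none of the outer-level or indirect $\frac{\mathrm{d}\theta_k(\lambda)}{\mathrm{d}\lambda}$ dependence enters here. I would emphasize that this is precisely why the classical single-level influence function transfers verbatim, in contrast to the task-level case of Theorem \ref{task-IF} where the total gradient and total Hessian were needed. The remaining standard technical requirements — twice differentiability of $\ell$ and invertibility (in fact positive definiteness, guaranteed for small enough perturbations by the $\frac{\delta}{2}\|\theta_k-\lambda^*\|^2$ regularizer) of $H_{k,\text{in}}$ — are exactly the hypotheses already implicit in the Preliminaries, so no new obstacle arises; the proof is a clean specialization of the framework established at the start of the paper.
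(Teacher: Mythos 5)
Your proposal is correct and follows essentially the same route as the paper's own proof: up-weight $\ell(\tilde{z};\theta_k)$ in the inner objective with $\lambda^*$ held fixed, write the stationarity condition, Taylor-expand about $\theta_k(\lambda^*)$, cancel the zeroth-order term, and set $\epsilon=-1$ for removal. Your added remark on why freezing $\lambda^*$ is legitimate matches the paper's own justification given just before the theorem statement.
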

Now we aim to approximate the influence of replacing $\theta_k$ by $\theta_k^-$ in the outer level. That is, how will the optimal meta parameter $\lambda^*$ change under the removal of $\tilde{z}$. 
Note that the outer level loss in (\ref{outer-loss-d}) only depends on the validation data and remains unchanged after the removal of $\tilde{z}$. This challenges applying up-weighting to the influence function effectively as the change of the outer loss is implicit. The result, which is shown in Theorem \ref{prop:6}, shows the change in the outer-level loss function resulting from the removal of $\tilde{z}$. 
\begin{theorem}\label{prop:6}
    If the $k$-th task related parameter changes from $\theta_k$ to $\theta_k^-$, we can estimate the related loss function difference $$L_O\left(\left(\lambda, \theta^-_k(\lambda)\right); D_k^{val}\right) - L_O\left(\lambda, \theta_k(\lambda); D_k^{val}\right)$$ by $P(\lambda,\theta_{k}(\lambda);\tilde{z})$, which is defined as
    $$-{\nabla_{\theta} L_O\left(\lambda, \theta_k(\lambda); D_k^{val}\right)^{\mathrm{T}}\cdot \text{inner-IF}(\tilde{z}; \theta_{k}(\lambda))}.$$
%     \begin{align*}
%        & P(\lambda,\theta_{k}(\lambda);\tilde{x},\tilde{y})\\
%        =& -{\nabla_{\theta} L_O\left(\lambda, \theta_k(\lambda); D_k^{val}\right)^{\mathrm{T}}\cdot \text{task-IF}(\tilde{x},\tilde{y}; \theta_{k}(\lambda))}
% \end{align*}
\end{theorem}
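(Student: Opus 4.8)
The plan is to derive the estimate by a single first-order Taylor expansion of the outer loss in its task-parameter argument, feeding in the closed-form displacement already supplied by Theorem~\ref{inner-IF}. The guiding observation is that, during this stage, the meta parameter $\lambda$ is held fixed: removing the training instance $\tilde{z}$ perturbs only the inner-level solution, so the sole quantity that moves is the task-specific parameter, from $\theta_k(\lambda)$ to $\theta_k^-(\lambda)$. The outer loss $L_O$ in (\ref{outer-loss-d}) has no explicit dependence on the training point $\tilde{z}$, so its entire change must be routed implicitly through this shift in $\theta_k$.

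First I would record the displacement. Theorem~\ref{inner-IF} gives $\theta_k^-(\lambda)\approx\theta_k(\lambda)-\text{inner-IF}(\tilde{z};\theta_k(\lambda))$, so the increment $\Delta\theta_k:=\theta_k^-(\lambda)-\theta_k(\lambda)$ satisfies $\Delta\theta_k\approx-\,\text{inner-IF}(\tilde{z};\theta_k(\lambda))$, which is of first order in the up-weighting perturbation. Next I would expand $L_O\left(\lambda,\cdot\,;D_k^{val}\right)$ about $\theta_k(\lambda)$ and keep only the linear term:
\begin{align*}
L_O\left(\lambda,\theta_k^-(\lambda);D_k^{val}\right)-L_O\left(\lambda,\theta_k(\lambda);D_k^{val}\right)=\nabla_{\theta}L_O\left(\lambda,\theta_k(\lambda);D_k^{val}\right)^{\mathrm{T}}\Delta\theta_k+O\!\left(\|\Delta\theta_k\|^2\right).
\end{align*}
Substituting $\Delta\theta_k\approx-\,\text{inner-IF}(\tilde{z};\theta_k(\lambda))$ and discarding the second-order remainder yields
\begin{align*}
L_O\left(\lambda,\theta_k^-(\lambda);D_k^{val}\right)-L_O\left(\lambda,\theta_k(\lambda);D_k^{val}\right)\approx-\nabla_{\theta}L_O\left(\lambda,\theta_k(\lambda);D_k^{val}\right)^{\mathrm{T}}\text{inner-IF}(\tilde{z};\theta_k(\lambda)),
\end{align*}
which is exactly $P(\lambda,\theta_k(\lambda);\tilde{z})$, completing the derivation.

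I expect no deep obstacle here; the content is a clean linearization. The only thing requiring care is bookkeeping and the truncation: the estimate is accurate to order $\|\Delta\theta_k\|^2$, and since $\Delta\theta_k$ is itself governed by the inner-level inverse Hessian $H_{k,\text{in}}^{-1}$, the remainder inherits that conditioning. I would therefore state the result as a first-order approximation (consistent with the $\approx$ used for $\theta_k^-$ in Theorem~\ref{inner-IF}) and note that composing two such linearizations—the inner-IF displacement and this outer-loss expansion—is what makes the two-stage scheme consistent to first order, rather than introducing any additional error beyond what inner-IF already carries.
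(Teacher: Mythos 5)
Your proposal matches the paper's own proof: a first-order Taylor expansion of $L_O$ in the task parameter about $\theta_k(\lambda)$, followed by substituting $\theta_k^-(\lambda)-\theta_k(\lambda)\approx-\,\text{inner-IF}(\tilde{z};\theta_k(\lambda))$ from Theorem~\ref{inner-IF}. Your added remark on the second-order remainder and the composition of the two linearizations is a correct refinement but does not change the argument.
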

Now we can derive the influence function. Motivated by the above result, instead of up-weighting the loss function term, we focus on up-weighting the data influence term $P(\lambda,\theta_{k}(\lambda);\tilde{z})$. We add this term into the outer level loss function and up-weight it by a small $\zeta$, which varies from $0$ to $1$. Then the outer level loss function becomes to 
\begin{equation*}
\begin{split}
 \lambda^*_{-\tilde{z}, \zeta} =\arg\min \sum_{i\in \mathcal{I}}  L_{\text{Total}}(\lambda; \theta(\lambda); \mathcal{D})  + \zeta \cdot P(\lambda,\theta_{k}(\lambda);\tilde{z}). 
\end{split}
\end{equation*}
When $\zeta=1$, the influence of $\tilde{z}$ is fully removed by adding this term into the loss function.
\begin{proposition}[Instance-IF for Training Data]\label{instance-IF for tra}
    Let $\zeta\rightarrow 0$, we can obtain the influence function of the outer level as follows:
\begin{equation*}
    \text{instance-IF}(\tilde{z}; \lambda^*, \theta_{k}(\lambda^*)) = - H_{\lambda, \text{Total}}^{-1}\cdot \mathrm{D}_{\lambda} P(\lambda^*,\theta_{k}(\lambda^*);\tilde{z}).
\end{equation*}
After the removal of $\tilde{z}$,  the optimal meta parameter $\lambda^*_{-\tilde{z}}$ can be estimated by 
\begin{equation*}
 \lambda^*_{-\tilde{z}} \approx \lambda^*  -   \text{instance-IF}(\tilde{z}; \lambda^*, \theta_{k}(\lambda^*)).
\end{equation*}
\end{proposition}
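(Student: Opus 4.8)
The plan is to reduce the training-instance case to the one-parameter influence-function calculation already developed for task-IF, after using Theorem~\ref{prop:6} to convert the implicit effect of deleting $\tilde z$ into an explicit additive term in the outer objective. The conceptual point is that removing $\tilde z$ from the inner training of task $k$ does not alter the outer loss directly (the outer loss only sees validation data); instead it shifts $\theta_k$ to $\theta_k^-$, and Theorem~\ref{prop:6} shows this shift changes $L_O^k$ by approximately $P(\lambda,\theta_k(\lambda);\tilde z)$. Hence the true leave-one-out outer objective is approximated by $L_{\text{Total}}+P$, and the $\zeta$-parametrized family $L_{\text{Total}}+\zeta P$ interpolates between the original problem ($\zeta=0$, minimizer $\lambda^*$) and the removal problem ($\zeta=1$). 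The influence function is the $\zeta$-derivative of the minimizer at $\zeta=0$, and the leave-one-out estimate is its first-order extrapolation.

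First I would write the first-order optimality condition of the perturbed problem in terms of the \emph{total} derivative, exactly as in the discussion preceding Theorem~\ref{task-IF}:
$$\mathrm{D}_{\lambda}L_{\text{Total}}\!\left(\lambda^*_{-\tilde z,\zeta};\theta(\lambda^*_{-\tilde z,\zeta});\mathcal D\right)+\zeta\cdot \mathrm{D}_{\lambda}P\!\left(\lambda^*_{-\tilde z,\zeta},\theta_k(\lambda^*_{-\tilde z,\zeta});\tilde z\right)=0.$$
The crucial input, already used for task-IF, is that the total gradient of the unperturbed outer objective vanishes at $\lambda^*$, i.e. $\mathrm{D}_{\lambda}L_{\text{Total}}(\lambda^*;\theta(\lambda^*);\mathcal D)=0$, even though its direct gradient does not. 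I would then differentiate the optimality condition implicitly with respect to $\zeta$ and evaluate at $\zeta=0$, where $\lambda^*_{-\tilde z,0}=\lambda^*$. By the chain rule the $\zeta\,\mathrm{D}_\lambda\mathrm{D}_\lambda P$ term drops out at $\zeta=0$ and one is left with
$$\left(\delta I+H_{\lambda,\text{Total}}\right)\cdot\frac{\mathrm{d}\lambda^*_{-\tilde z,\zeta}}{\mathrm{d}\zeta}\bigg|_{\zeta=0}+\mathrm{D}_{\lambda}P(\lambda^*,\theta_k(\lambda^*);\tilde z)=0,$$
where the coefficient matrix is precisely the total Hessian of Definition~\ref{outer-total-Hessian} plus the $\delta I$ coming from the $\ell_2$ regularizer (absorbed into $H_{\lambda,\text{Total}}$ in the statement). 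Solving gives $\mathrm{d}\lambda^*_{-\tilde z,\zeta}/\mathrm{d}\zeta|_{\zeta=0}=-H_{\lambda,\text{Total}}^{-1}\,\mathrm{D}_{\lambda}P$, which is the claimed instance-IF, and the leave-one-out meta parameter follows from a first-order Taylor expansion in $\zeta$ evaluated at the perturbation value corresponding to full removal.

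The main obstacle is not the differentiation itself but the careful bookkeeping it demands. Because $P$ depends on $\lambda$ both explicitly and through $\theta_k(\lambda)$—and $\theta_k(\lambda)$ is only implicitly defined via the inner optimality condition, with $\mathrm{d}\theta_k/\mathrm{d}\lambda$ given by (\ref{deri})—computing $\mathrm{D}_{\lambda}P$ requires the chain rule through both $\text{inner-IF}(\tilde z;\theta_k(\lambda))$ and $\nabla_{\theta}L_O^k$, and one must consistently use total rather than direct derivatives throughout, mirroring the total-Hessian expansion in Remark~\ref{remark:4.4}. I would also verify that stacking the two linear approximations—inner-IF for the shift $\theta_k\to\theta_k^-$ in Theorem~\ref{inner-IF}, and the outer expansion for $\lambda^*\to\lambda^*_{-\tilde z}$—does not accumulate error beyond first order, by treating both $\zeta$ and the inner up-weighting as vanishing perturbations and retaining only leading-order terms. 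Finally, I would be careful with the sign in the extrapolation step: here full removal corresponds to $\zeta=1$ (rather than the classical $\epsilon=-1$ used for task-IF), so matching the exact sign in the stated leave-one-out formula requires tracking this convention explicitly.
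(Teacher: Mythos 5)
Your proposal is correct and follows essentially the same route as the paper's proof: convert the implicit effect of deleting $\tilde z$ into the explicit additive term $\zeta\cdot P$ via Theorem~\ref{prop:6}, invoke the vanishing \emph{total} gradient of the unperturbed outer objective at $\lambda^*$, and extract the $\zeta$-derivative of the minimizer at $\zeta=0$ (the paper does this by a first-order Taylor expansion of the stationarity condition rather than by implicit differentiation, but the two are identical at leading order). Your closing caution about the sign is well placed and in fact exposes an inconsistency in the paper itself: since full removal corresponds to $\zeta=+1$ rather than $\epsilon=-1$, the extrapolation yields $\lambda^*_{-\tilde z}\approx\lambda^*+\text{instance-IF}$, which is what the appendix proof and the accompanying algorithm conclude, whereas the main-text statement writes a minus sign.
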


\begin{table*}[ht]
    \centering
    \caption{Performance comparison of different methods.}
    \label{tab:results}
    \resizebox{0.85\linewidth}{!}{
    \begin{tabular}{llcccccc}
        \toprule
        \multirow{2}{*}{\textbf{Evaluation Level}} & \multirow{2}{*}{\textbf{Method}} & \multicolumn{2}{c}{\textbf{omniglot}} & \multicolumn{2}{c}{\textbf{MNIST}} & \multicolumn{2}{c}{\textbf{MINI-Imagenet}}\\
        \cmidrule(r){3-4} \cmidrule(r){5-6} \cmidrule(r){7-8}
        & & \textbf{Accuracy} & \textbf{RT (second)} & \textbf{Accuracy} & \textbf{RT (second)} & \textbf{Accuracy} & \textbf{RT (second)}\\
        \midrule
        \multirow{4}{*}{Task} & Retrain & 0.7908$\pm$0.0061 & 316.74$\pm$8.45 & 0.6373$\pm$0.0311 & 244.24$\pm$2.18 & 0.3071$\pm$0.0460	& 682.56$\pm$7.91 \\
        % & Accelerated para-IF(Ours) & 0.5422±\pm0.0292 & 7.69±\pm0.01 & 0.3272±\pm0.0206 & 7.65±\pm0.05 & 0.2220±\pm0.0089	& 10.51±\pm0.06 \\
        & Direct IF & 0.5422$\pm$0.0292 & 7.46$\pm$0.05  & 0.3247$\pm$0.0224 & 7.37$\pm$0.04 & 0.2214$\pm$0.0136	& 10.24$\pm$0.05 \\
        & Task-IF(Ours) & 0.7804$\pm$0.0116 & 65.69$\pm$3.41 & 0.6118$\pm$0.0469 & 45.72$\pm$0.46 & 0.2668$\pm$0.0322	& 74.63$\pm$1.40\\
        % & ECBM(Ours) & \textbf{0.8808±\pm0.0039} & \textbf{8.29} & \textbf{0.7963±\pm0.0050} & \textbf{7.03} & \textbf{0.3845±\pm0.0327} & \textbf{15.67}\\
        \midrule
        \multirow{4}{*}{Instance} & Retrain(Training) & 0.7852$\pm$0.0080 & 251.76$\pm$1.21 & 0.6540$\pm$0.0148 & 252.80$\pm$1.07 & 0.3263$\pm$0.0574 & 392.24$\pm$0.40 \\
        & Instance-IF(Training) & 0.7686$\pm$0.0037 & 4.49$\pm$0.02 & 0.6123$\pm$0.0225 & 4.53$\pm$0.02 & 0.2854$\pm$0.0424 & 7.17$\pm$0.03\\
        & Retrain(Validation) & 0.7895$\pm$0.0080 & 260.48$\pm$7.62 & 0.6265$\pm$0.0410 & 210.76$\pm$93.10 & 0.3222$\pm$0.0601 & 358.49$\pm$13.08 \\
        & Instance-IF(Validation) & 0.7790$\pm$0.0106 & 5.53$\pm$0.06 & 0.5476$\pm$0.0924 & 5.71$\pm$0.51 & 0.2767$\pm$0.0350 & 7.47$\pm$0.83\\
        \bottomrule
    \end{tabular}}
\vspace{-10pt}
\end{table*}

% \begin{figure*}[htbp]
% \centering
% \includegraphics[width=0.80\linewidth]{}
% \caption{Ablation studies comparing different kinds of Hessian and gradient's impact on the test accuracy on MNIST dataset. When using partial gradient or direct hessian, the influence function suffers from extremely big numbers, which leads to abnormal accuracy.}
% \label{ablation}
% \vspace{-12pt}
% \end{figure*}

\section{Computation Acceleration and Practical Applications}
\subsection{Computation  Acceleration}\label{accelerate}
Although the results in Section \ref{method} are closed-form solutions for evaluating data influence, they involve extensive calculations of the inverse Hessian-vector product (iHVP), which are coupled with other computational steps and also necessitate the computation of third-order partial derivatives. Thus, developing acceleration techniques is essential to enhance the scalability of our methods. We now present approximation methods designed to accelerate iHVP and third-order partial derivatives.

\noindent {\bf iHVP Acceleration via EK-FAC.}
Due to the specific method for calculating the derivative of task-related parameter $\theta_i(\lambda)$ in (\ref{deri}) and the iHVP calculation in Theorem \ref{inner-IF}, our algorithm involves the computation of iHVP at several different stages. To expedite this computation and eliminate the need to store the Hessian matrix explicitly, we employ the EK-FAC method~\citep{kwon2023datainf}. Additional implementation details can be found in the Appendix.

\noindent{\bf Total Hessian Matrix Approximation.}
As we mentioned in Remark \ref{remark:4.4}, computing the total Hessian matrix efficiently for large-scale neural networks presents significant challenges, primarily due to the requirement of calculating the third-order partial derivatives of the loss function with respect to the parameters. Given that the information conveyed by third-order derivatives is relatively limited compared to that of first and second-order derivatives, we propose an approximation for the total Hessian.
% \begin{proposition} 
% The approximated  total Hessian Hλ,Total{H}_{\lambda,\text{Total}} is defined as follows:
% \begin{equation*}\label{Total-Hessian}
% \begin{split}
% \Tilde{H}_{\lambda, \text{Total}} 
% =&\sum_{i\in\mathcal{I}}\left(\partial_{\lambda}\partial_{\lambda}L^i_O + 2\cdot \frac{\mathrm{d}\theta_i(\lambda)} {\mathrm{d}\lambda} \cdot \partial_{\theta_i}\partial_{\lambda} L_O^i \right)+ \Gamma
% \end{split}
% \end{equation*}
% % \begin{equation}
% % \begin{split}
% % \Tilde{H}_{\lambda,\text{Total}} =&\partial_{\lambda}\partial_{\lambda}L_{\text{Total}}+ \Gamma\\
% % &+ \sum_{i\in\mathcal{I}}\partial_{\lambda}\partial_{\theta} L^i_O \cdot {\partial_{\theta}\partial_{\theta}L^i_I }^{-1}\cdot\partial_{\theta}\partial_{\lambda}L^i_I +\frac{\mathrm{d}\theta_i(\lambda)} {\mathrm{d}\lambda} \cdot \partial_{\theta}\partial_{\lambda} L^i_O
% % \end{split}
% % \end{equation}
% In this approximation, the contributions from the third-order derivatives and other complex terms are effectively replaced by Γ=∑i∈I‖\Gamma = \sum_{i\in\mathcal{I}}\|\partial_{\theta} L^i_O\|_{1}\cdot I, where II is the identity matrix.
% \end{proposition}
\begin{theorem} 
To approximate total Hessian ${H}_{\lambda,\text{Total}}$, we can replace the following term in total Hessian (see \ref{Total-Hessian})
$$H' = \frac{\mathrm{d}^2\theta_i(\lambda)} {\mathrm{d}\lambda^2}\cdot \partial_{\theta_i} L_O^i +\left(\frac{\mathrm{d}\theta_i(\lambda)} {\mathrm{d}\lambda}\right)^2\cdot \partial_{\theta_i}\partial_{\theta_i} L_O^i$$
by $\Gamma = \sum_{i\in\mathcal{I}}\|\partial_{\theta} L^i_O\|_{1}\cdot I$, where $I$ is the identity matrix.
\end{theorem}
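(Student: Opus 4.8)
The plan is to justify the replacement by showing that the discarded block $H'$ can be controlled in operator norm by the scalar quantity that defines $\Gamma$, so that substituting $\Gamma = \sum_{i\in\mathcal{I}}\norm{\partial_{\theta}L^i_O}_1\cdot I$ for $H'$ yields a surrogate for $H_{\lambda,\text{Total}}$ that (i) needs only first- and second-order information and (ii) remains positive definite, hence keeps the iHVP in Theorem~\ref{task-IF} well posed. Throughout I would keep the decomposition of $H_{\lambda,\text{Total}}$ from Remark~\ref{remark:4.4}, treating the first two summands (which involve only $\partial_\lambda\partial_\lambda L^i_O$, $\partial_{\theta_i}\partial_\lambda L^i_O$, and $\frac{\mathrm{d}\theta_i}{\mathrm{d}\lambda}$) as retained, and bounding only $H'$.

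First I would exploit the structure of the inner loss~(\ref{inner-loss-d}). Because the sole $\lambda$-dependence of $\partial_{\theta_i}L_I$ enters through the regularizer $\frac{\delta}{2}\norm{\theta_i-\lambda}^2$, we get $\partial_{\lambda}\partial_{\theta_i}L_I=-\delta I$, so that~(\ref{deri}) collapses to
\begin{equation*}
\frac{\mathrm{d}\theta_i(\lambda)}{\mathrm{d}\lambda}=\delta\, H_{i,\text{in}}^{-1}.
\end{equation*}
The same $\ell_2$-regularization makes the inner objective $\delta$-strongly convex, giving $H_{i,\text{in}}\succeq\delta I$ and therefore $\norm{H_{i,\text{in}}^{-1}}\le 1/\delta$, i.e. the uniform bound $\norm{\frac{\mathrm{d}\theta_i(\lambda)}{\mathrm{d}\lambda}}\le 1$. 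Differentiating once more and using $\mathrm{D}_\lambda(A^{-1})=-A^{-1}(\mathrm{D}_\lambda A)A^{-1}$ yields $\frac{\mathrm{d}^2\theta_i(\lambda)}{\mathrm{d}\lambda^2}=-\delta\,H_{i,\text{in}}^{-1}\bigl(\mathrm{D}_\lambda H_{i,\text{in}}\bigr)H_{i,\text{in}}^{-1}$, where $\mathrm{D}_\lambda H_{i,\text{in}}$ collects the third-order derivatives of $L_I$ contracted against $\frac{\mathrm{d}\theta_i}{\mathrm{d}\lambda}$. Under a standard Lipschitz bound on the inner Hessian, together with $\norm{H_{i,\text{in}}^{-1}}\le 1/\delta$, this gives an operator-norm estimate $\norm{\frac{\mathrm{d}^2\theta_i(\lambda)}{\mathrm{d}\lambda^2}}\le C$ for a constant $C$ depending only on $\delta$ and that Lipschitz constant.

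With these estimates I would bound the first summand of $H'$ by pulling out the gradient factor:
\begin{equation*}
\norm{\frac{\mathrm{d}^2\theta_i(\lambda)}{\mathrm{d}\lambda^2}\cdot\partial_{\theta_i}L_O^i}\le C\,\norm{\partial_{\theta_i}L_O^i}_2\le C\,\norm{\partial_{\theta_i}L_O^i}_1,
\end{equation*}
so that the $\ell_1$-norm appearing in $\Gamma$ emerges directly from the third-order term. Summing over $i$ and rescaling the constant into the identity then accounts for this contribution by $\sum_{i\in\mathcal{I}}\norm{\partial_{\theta}L^i_O}_1\cdot I$; the diagonal (scalar-times-identity) form is the conservative choice that guarantees $\delta I+H_{\lambda,\text{Total}}$ stays invertible.

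The hard part will be controlling the second summand $\bigl(\frac{\mathrm{d}\theta_i}{\mathrm{d}\lambda}\bigr)^2\partial_{\theta_i}\partial_{\theta_i}L_O^i$, since its outer-Hessian factor is not directly tied to the gradient norm $\norm{\partial_{\theta_i}L_O^i}_1$ that defines $\Gamma$. I would handle it by noting that $\bigl(\frac{\mathrm{d}\theta_i}{\mathrm{d}\lambda}\bigr)^2=\delta^2 H_{i,\text{in}}^{-2}$ carries an extra power of $\norm{H_{i,\text{in}}^{-1}}$, so when the inner model is well trained (the data Hessian dominates the regularizer) this factor is small and the term is genuinely of lower order; making the omission rigorous then requires either an explicit smoothness assumption bounding $\norm{\partial_{\theta_i}\partial_{\theta_i}L_O^i}$ (which for common losses near a task minimizer can be folded into $C$) or a bound relating the outer curvature to the outer gradient. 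I would state these regularity assumptions explicitly and conclude that, because third-order curvature carries comparatively little information, the surrogate $\Gamma$ retains the dominant part of $H'$ while removing all third-order derivatives, which is exactly the computational gain claimed.
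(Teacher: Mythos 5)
First, a point of reference: the paper does not actually prove this statement. It is presented as an engineering approximation, justified only by the informal remark that $\Gamma$ ``captures significant gradient information by focusing on the main influencing factors via ignoring the second-order derivative terms''; no quantitative argument appears in the body or the appendix. So your proposal is attempting something strictly more ambitious than what the paper does, and it should be judged on its own terms rather than against a reference proof.

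On those terms, there are two genuine gaps. The first is structural: everything you establish is an operator-norm \emph{bound} of the form $\norm{H'}\le C\,\norm{\partial_{\theta_i}L_O^i}_1$ (under strong convexity of the inner problem and a Lipschitz inner Hessian). A bound of this kind justifies \emph{discarding} $H'$ with a controlled error; it does not justify \emph{replacing} the anisotropic matrix $H'$ by the specific isotropic matrix $\Gamma=\sum_{i}\norm{\partial_{\theta}L^i_O}_1\cdot I$, because you never address the quantity that actually matters for the theorem as stated, namely $\norm{H'-\Gamma}$ or the induced perturbation of the iHVP $\left(\delta I+H_{\lambda,\text{Total}}\right)^{-1}v$. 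Your constants $C$ also do not equal $1$, so even as a magnitude-matching argument the surrogate would have to be $C\norm{\partial_\theta L_O^i}_1\cdot I$, not $\Gamma$. The second gap is the one you flag yourself: the summand $\bigl(\tfrac{\mathrm{d}\theta_i}{\mathrm{d}\lambda}\bigr)^2\partial_{\theta_i}\partial_{\theta_i}L_O^i$ has no relation to the gradient norm defining $\Gamma$, and near an inner optimum it need not be small (your observation that $\delta^2 H_{i,\text{in}}^{-2}$ is small when the data Hessian dominates is plausible but is exactly the regime where $H_{i,\text{in}}\succeq\delta I$ is far from tight, so the two halves of your argument pull in opposite directions). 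Your local computations are sound --- in particular $\partial_\lambda\partial_{\theta_i}L_I=-\delta I$ and hence $\frac{\mathrm{d}\theta_i}{\mathrm{d}\lambda}=\delta H_{i,\text{in}}^{-1}$ with $\norm{\delta H_{i,\text{in}}^{-1}}\le 1$ under convexity --- and these would be a useful addition to the paper, but as written the argument establishes ``the discarded block is bounded,'' not ``$\Gamma$ approximates it,'' which is the claim of the theorem.
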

Generally, compared to the original term $H'$, $\Gamma$ reflects the extent to which each parameter influences the loss function and captures significant gradient information by focusing on the main influencing factors via ignoring the second-order derivative terms.  This enables the practical simplification of complex mathematical expressions while preserving essential information.

%The is because $H'$ is complex and challenging to compute. $\Gamma$ reflects the extent to which each parameter influences the loss function and captures essential gradient information by focusing on the main influencing factors. This allows for effective simplification of complex mathematical expressions while preserving key information.

However, EK-FAC is no longer applicable for computing the iHVP for the total Hessian proposed in this paper. This is because EK-FAC is based on the assumption that the parameters between different layers are independent. In contrast, the definition of the total Hessian in (\ref{Total-Hessian}) needs interactions between different layers. Therefore, EK-FAC may not provide sufficiently accurate results in this context. To ensure precise, efficient, and stable computation of the iHVP, we propose the following accelerated method for our algorithm:

\noindent{\bf Neumann Series Approximation Method.} For a vector $v$, 
we can express $\Tilde{H}_{\lambda,\text{Total}}^{-1}\cdot v$ through the Neumann series:
$$   \Tilde{H}_{\lambda,\text{Total}}^{-1} \cdot v  = v + \sum_{j=1}^{+\infty} (I-\Tilde{H}_{\lambda,\text{Total}})^j \cdot v.$$
% \begin{equation*}
%     \begin{split}
%         \Tilde{H}_{\lambda,\text{Total}}^{-1} \cdot v &= (I - (I-\Tilde{H}_{\lambda,\text{Total}}) )^{-1} \cdot v = v + \sum_{j=1}^{+\infty} (I-\Tilde{H}_{\lambda,\text{Total}})^j \cdot v.
%     \end{split}
% \end{equation*}
By truncating this series at order $J$, we derive the following approximation:
$$\Tilde{H}_{\lambda,\text{Total}}^{-1} \cdot v \approx v + (I-\Tilde{H}_{\lambda,\text{Total}}) \cdot v+ \cdots (I-\Tilde{H}_{\lambda,\text{Total}})^J \cdot v.$$
It is important to note that we do not specifically accelerate the inversion of the Hessian matrix; rather, we directly optimize the iHVP procedure. As a result, throughout the entire computation process, there is no need to simultaneously store the large Hessian matrix, which significantly reduces the memory requirements of our method. 
% {\bf Further Acceleration via Direct Hessian}
% Our analysis of the algorithm indicates that the most time-consuming part results from HtotalH_{\text{total}}, which employs the chain rule to assess the interaction information between task parameters and meta parameters. However, this method incurs substantial computational costs for a more accurate assessment of data influence. To improve the scalability of our approach, we propose replacing HtotalH_{\text{total}} with HdirectH_{\text{direct}}, while retaining the inner and outer layer interaction terms included in the gradient term TkT^k. The experimental results in Table ? confirm that this improvement achieves greater data assessment efficiency while maintaining an acceptable level of accuracy.                                                                                                                                                                                                                                                                                                                                                                                                                                                      
\subsection{Applications of Influence Functions}
In this section, we present application methods for downstream tasks based on the previously derived influence functions for tasks and instances.

\noindent {\bf Model Editing by Data Removal.}
We can use IFs to update the model under the removal of certain data or tasks. Specifically, the model after removing the $k$-th task can be estimated as $\lambda^*- \text{task-IF}(D_k;\lambda^*, \theta_k(\lambda^*))$. Additionally, the model for removing instance $\tilde{z}$ from the $k$-th task can be estimated as $\lambda^*-   \text{instance-IF}(\tilde{z}; \lambda^*, \theta_k(\lambda^*))$.\\
{\bf Data Evaluation}
By appropriately selecting the model evaluation function, we can define influence scores(ISs), which measure the influence of specific tasks or instances on new tasks. These scores can then be applied to task selection in order to enhance meta learning performance or to adversarial attacks to assess model robustness.
\begin{definition}[Evaluation Function for Meta Learning] 
Given a new task specific dataset $D_{new} = D_{new}^{tr} \cup  D_{new}^{val}$, the few-shot learned parameter $\theta'$ for the task is obtained by minimizing the loss $L_I(\lambda^*, \theta; D_{new}^{tr})$. We can evaluate the performance of the meta parameter $\lambda^*$ based on the loss of $\theta'$ on the validation dataset, expressed as $\sum_{z\in D_{new}^{val}}\ell(z; \theta')$.
\end{definition}
Based on this metric, we propose a calculation method for the task and instance Influence Score, defined as $\text{IS} = \sum_{z \in D_{\text{new}}^{\text{val}}} \nabla_{\theta'} \ell(z; \theta') \cdot \text{IF}$. A detailed theoretical derivation can be found in the Appendix.

\section{Experiments}\label{sec:exp}
In this section, we demonstrate our main experimental results on utility, efficiency, effectiveness, and the ability to identify harmful data. 

\subsection{Experimental Settings}
% \noindent{M\bf eta Learning Setting.}
% Define the inner objective for the ii-th task as 
% \begin{equation*}
%     L_I(\lambda,\theta; \mathcal{D}^{tr}_i) = \sum_{z\in\mathcal{D}^{tr}_i}\ell(z;\theta) + \frac{\delta}{2} \|\theta- \lambda\|^2.
% \end{equation*}
% Notation Θ={θi(λ)}ni=1 \Theta = \{\theta_i(\lambda)\}_{i=1}^n. D=∪i∈IDvali\mathcal{D}= \cup_{i\in\mathcal{I}}\mathcal{D}^{val}_i
% \begin{align*}
%     &L_O(\lambda, \Theta; \mathcal{D})\\
%     =& \frac{1}{|\mathcal{I}|}\sum_{i\in\mathcal{I}} \left(\sum_{z\in\mathcal{D}_i^{val}} \ell(z; \theta_i(\lambda)) + \frac{\delta}{2}\cdot\|\theta_i(\lambda) - \lambda\|^2\right)\\
%     \triangleq & \frac{1}{|\mathcal{I}|}\sum_{i\in\mathcal{I}} L_O^i(\lambda, \theta_i(\lambda); \mathcal{D}^{val}_i)
% \end{align*}
% λ\lambda is the fixed regularization parameter.

\noindent{\bf Dataset.} We utilize three datasets: \textit{omniglot} \cite{lake2015human}, \textit{MNIST} \cite{lecun1998gradient} and \textit{MINI-Imagenet} \cite{vinyals2016matching}. omniglot contains 1,623 different handwritten characters from 50 different alphabets. Each of the 1,623 characters was drawn online via Amazon's Mechanical Turk by 20 different people. MNIST is a hand-writing digits dataset, it has a training set of 60,000 examples, and a test set of 10,000 examples. MINI-Imagenet consists of 50,000 training images and 10,000 testing images, evenly distributed across 100 classes.

\noindent{\bf Baselines.} For the task level, we employ two baseline methods to compare with our proposed task-IF in Theorem \ref{task-IF}: Retrain and direct IF. \textit{Retrain}: We retrain the model from scratch after removing the selected tasks from the training set.
\textit{direct IF}: As discussed in Section \ref{method:task}, this method is a direct implementation of (\ref{if}), which is a direct generalization of the original IF. 

For the instance level, we employ retrain as the baseline method. \textit{Retrain (Training/Validation)}: We retrain the model after removing the selected instance from the training/validation dataset for some task-related dataset. We further conduct experiments based on instance-IF (Training/validation). \textit{instance-IF (Training)} is a direct implementation of Proposition \ref{instance-IF for tra}. \textit{Instance-IF (Validation)} is a direct implementation of Theorem \ref{instance-IF for val}. Relevant algorithms for all task and instance evaluation methods are provided in the Appendix.

\noindent{\bf Evaluation Metric.} We utilize two primary evaluation metrics to assess our models: Accuracy and runtime (RT). The \textit{Accuracy} measures the model's performance as the proportion of correct predictions to the total predictions. 
 \textit{Runtime}, measured in seconds, evaluates the running time of each method to update the model.  

\noindent{\bf Implementation Details.} Our experiments used an Intel Xeon CPU and a GTX 1080 ti GPU. For task-level evaluation, we train the MAML model, randomly remove some tasks, and then use direct IF and task-IF to update the model. For instance level, we randomly remove $40\%$ sample for $20$ tasks. During the training phase, we update the model for 5000 iterations and randomly sample 10,000 tasks for the omniglot and MINI-Imagenet datasets and 252 tasks for the MNIST dataset. We repeat the experiments 5 times with different random seeds. 

\begin{figure}[htbp]
\centering
\includegraphics[width=0.9\linewidth]{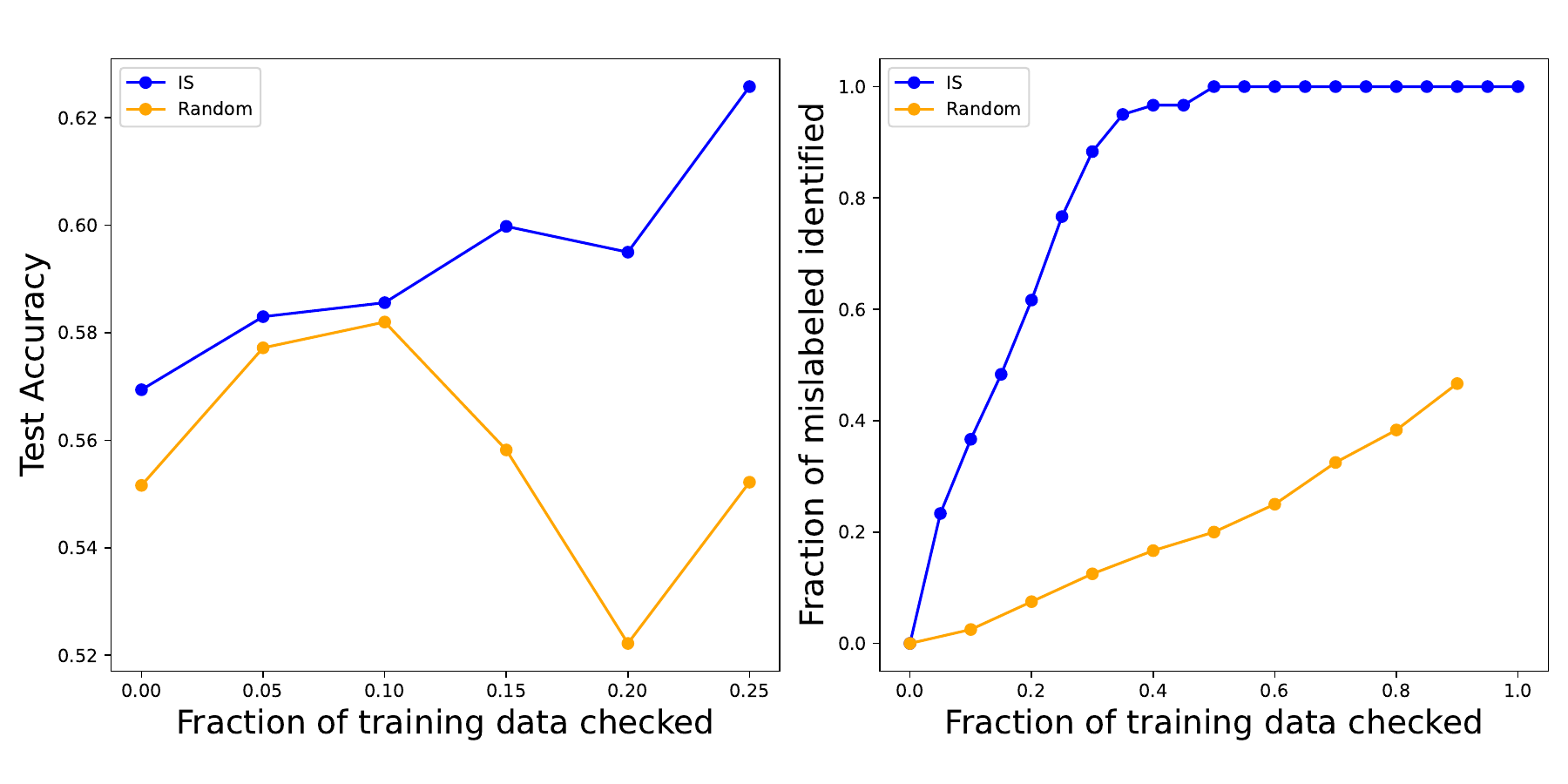}
\caption{Harmful tasks removal experiment. IS means using influence score to determine which tasks to remove. Random refers to randomly removing tasks.}
\label{fig:removal}
\vspace{-12pt}
\end{figure}

\subsection{Evaluation of Utility and Editing Efficiency}
To evaluate the utility and editing efficiency of our methods, we compare our algorithms with random removal from the perspectives of accuracy and runtime.
Our experimental results, as presented in Table~\ref{tab:results}, demonstrate the superiority of task-IF compared to baseline approaches while highlighting the critical trade-off between computational efficiency and accuracy. Specifically, on the Omniglot dataset, task-IF achieved accuracy scores close to those of retraining (0.7804 vs. 0.7908) while significantly reducing the runtime from 316.74 to 65.69. This pattern is consistently observed across other datasets, particularly in MINI-ImageNet, where the runtime decreased dramatically from 682.56 via retraining to 74.63 via task-IF, with an acceptable accuracy difference (0.3071 to 0.2668). These results underscore task-IF's capability to provide substantial time savings—approximately 20-25\% of the computational time required for retraining—while maintaining competitive accuracy levels.

While direct IF achieves the fastest runtime among all approaches, it shows significant degradation in accuracy across all datasets. This is due to its ignoring the dependence between the task-specific parameters and the meta parameters. As we mentioned in Theorem \ref{total_deri}, our proposed task-IF leverages the total gradient and Hessian to fill in the gap, which will also slightly increase the computation time due to their addition terms. 

%This performance gap between Task-IF and direct IF (0.7804 vs. 0.5422 on Omniglot, 0.6118 vs. 0.3247 on MNIST) validates the effectiveness of our parametric design.This efficiency is particularly crucial in scenarios where frequent model updates are required, confirming the utility of task-IF in dynamic environments where both runtime and accuracy are critical considerations.

For the instance-level evaluation, we observe that instance-IF demonstrates promising results, particularly in the scenario where we aim to evaluate the attribution of training samples. The approach maintains relatively stable performance between training and validation cases, with accuracy scores of 0.7686 and 0.7790, respectively, on omniglot, while achieving remarkable runtime efficiency (4.49 and 5.53 seconds). Such results may be attributed to different factors. For instance, our theoretical framework relies on the stability of influence estimation across training, which may vary depending on the model's convergence behavior. Moreover, the effectiveness of instance-level influence computation might be affected by the inherent complexity and feature distribution of different datasets, explaining the varying performance gaps observed between MNIST and MINI-ImageNet.

\begin{figure}[ht]
\centering
\centering
\includegraphics[width=0.9\linewidth]{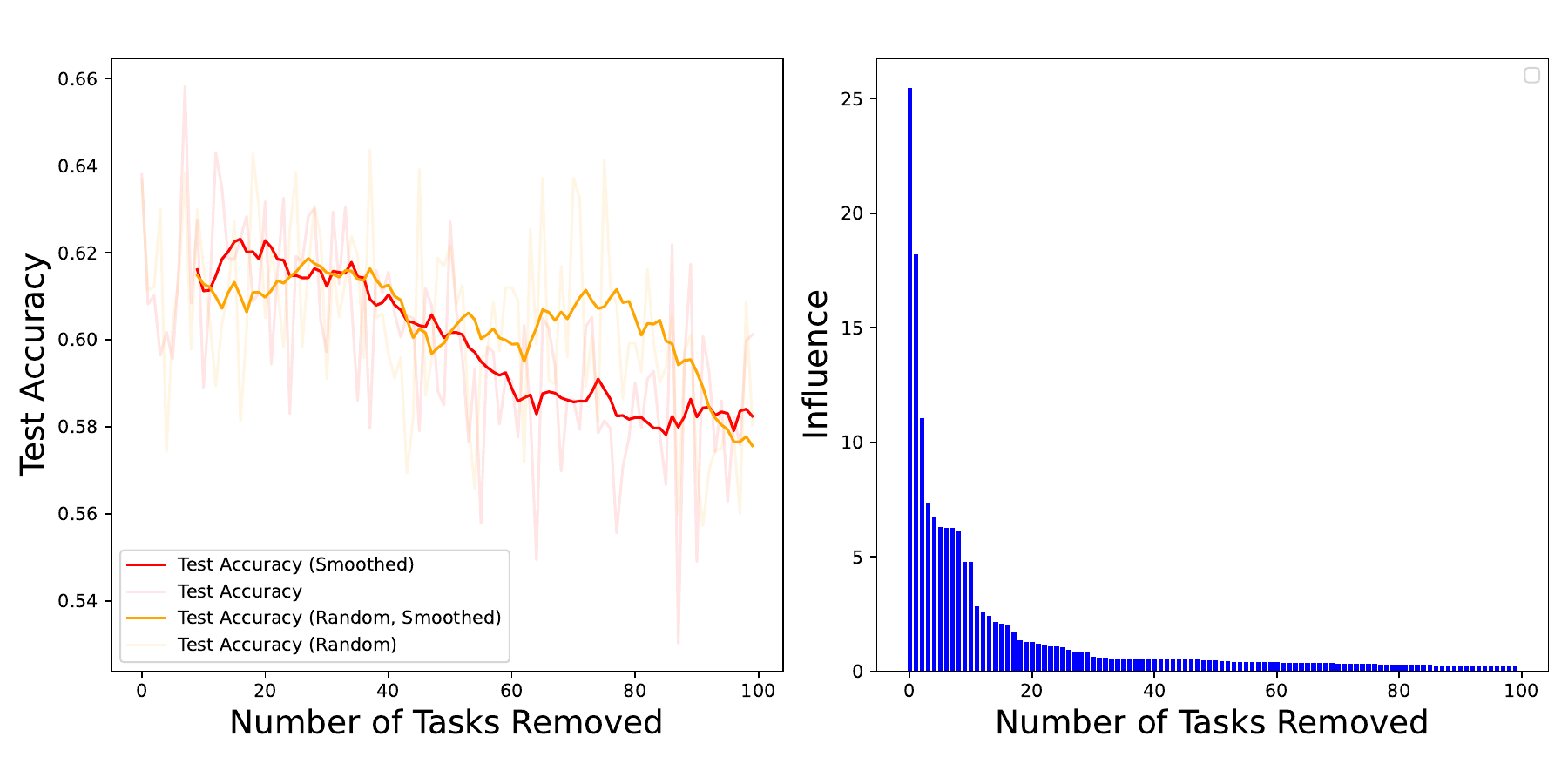}
\caption{Data attribution effectiveness on MNIST dataset at the task level.}
\vspace{-20pt}
\label{fig:attack-task}
\end{figure}

\begin{figure}[ht]
\centering
\includegraphics[width=0.9\linewidth]{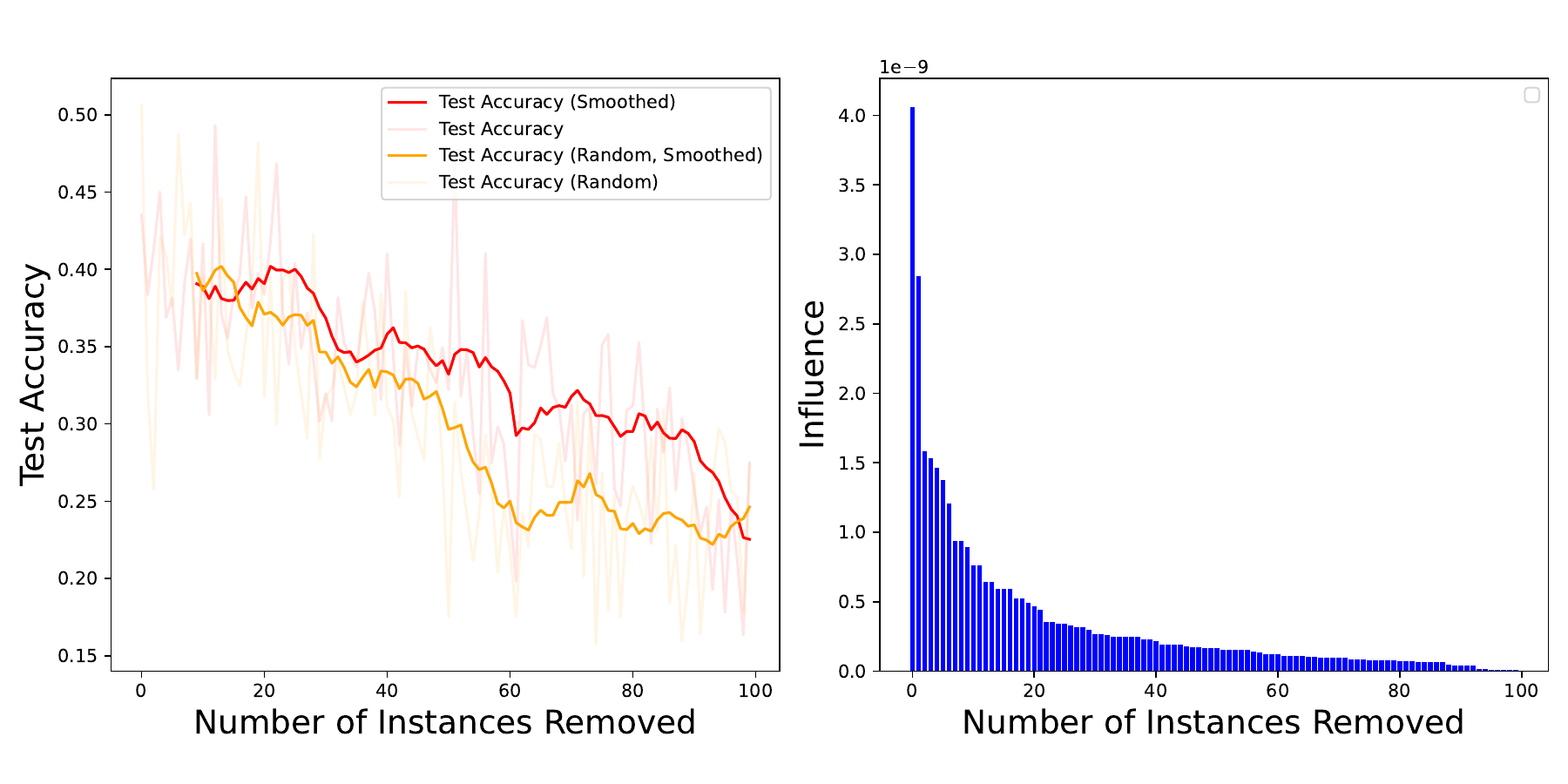}
\caption{Data attribution effectiveness on MNIST dataset at the instance level.}
\label{fig:attack-ins}
\vspace{-12pt}
\end{figure}

\subsection{Results of Harmful Data Removal}
Another application of our influence functions is to evaluate how harmful or helpful one task is for our model. To validate our method, we deliberately corrupted the training data by randomly flipping labels for 80\% of tasks and 40\% of MNIST samples to create harmful tasks. Figure~\ref{fig:removal} presents results on two key metrics: test accuracy and the fraction of mislabeled tasks identified across different proportions of training data checked.

The left plot shows that our IS-based approach consistently outperforms random removal across all checking ratios. When examining 25\% of the training data, the IS-based method achieves a test accuracy of approximately 62.5\%, while random removal yields only 55\% accuracy. This significant performance gap of 7.5 percentage points indicates that our method effectively identifies and removes truly harmful tasks rather than arbitrary ones.

The right plot further validates our approach's effectiveness in identifying mislabeled tasks. The IS-based method demonstrates remarkably efficient detection, identifying nearly 100\% of mislabeled tasks after checking only 40\% of the training data. In contrast, the random removal baseline shows a linear relationship between the fraction checked and identified, achieving only about 45\% detection rate even after examining 95\% of the training data. This substantial difference in detection efficiency (approximately $2.2\times$ better) underscores our method's strong capability in targeting harmful tasks.

\subsection{Effectiveness of IF}
Here, we aim to show that our proposed influence functions can indeed be used to attribute data. To investigate the effectiveness, we first calculate the influence scores for all tasks or instances in the training set based on our proposed task-IF and instance-IF. Using these scores, we iteratively remove tasks or instances in descending order of influence score and subsequently retrain the model to examine its robustness.

As shown in Figures \ref{fig:attack-task} and \ref{fig:attack-ins}, the removal of high-influence tasks or instances leads to a noticeable decrease in test accuracy. Specifically, at the task level, removing the top-influence tasks results in a gradual decline in accuracy from around 0.62 to 0.58, while at the instance level, a similar pattern is observed, with accuracy dropping from approximately 0.40 to 0.25. In contrast, random data removal has a less pronounced effect on accuracy, underscoring that targeted removal based on influence scores is more effective in degrading model performance.

These results suggest that the influence score calculation effectively identifies critical data points, with high-influence tasks or instances playing a pivotal role in maintaining model performance. This analysis demonstrates the effectiveness of influence scores in pinpointing valuable data.

% \subsection{Ablation Study}

\section{Conclusion}
This paper presents data attribution evaluation methods for meta learning, addressing challenges like training inefficiencies and noise from mislabeled data. By introducing task-IF and instance-IF, we offer a comprehensive approach for evaluating data across tasks and instances. Our framework captures both direct and indirect impacts of data points on meta parameters, improving efficiency and scalability.

{
    \small
    \bibliographystyle{ieeenat_fullname}
    \bibliography{main}
}

% WARNING: do not forget to delete the supplementary pages from your submission 
\appendix
\onecolumn
\section{Omitted Proofs}

\subsection{Evaluating the Influence of Tasks}
\begin{theorem}\label{app:total_deri} The total gradient of the $i$-th task related outer loss $L_O\left(\lambda, \theta_i(\lambda); D_i^{val}\right)$ with respect to $\lambda$ can be written as: 
\begin{align*}
&\mathrm{D}_{\lambda}L_O\left(\lambda, \theta_i(\lambda); D_i^{val}\right)\\
=&   \partial_{\lambda}L_O\left( \lambda, \theta_i(\lambda); D_i^{val}\right) +\frac{\mathrm{d}\theta_i(\lambda)} {\mathrm{d}\lambda} \cdot \partial_{\theta_i} L_O\left( \lambda, \theta_i(\lambda); D_i^{val} \right). 
\end{align*}
The term $\frac{\mathrm{d}\theta_i(\lambda)} {\mathrm{d}\lambda}$ can be calculated by
\begin{equation}\label{app:deri}
\frac{\mathrm{d}\theta_i(\lambda)} {\mathrm{d}\lambda} = -\partial_{\lambda}\partial_{\theta_i}L_I\left(\lambda, \theta_i(\lambda); \mathcal{D}^{tr}_i \right) \cdot H_{i,\text{in}}^{-1}, 
\end{equation}
where $H_{i,\text{in}}$ is the $i$-th inner-level Hessian matrix, defined as $H_{i,\text{in}} = {\partial_{\theta_i}\partial_{\theta_i}L_I\left(\lambda, \theta_i(\lambda); \mathcal{D}^{tr}_i \right)}$.
\end{theorem}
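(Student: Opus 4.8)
The plan is to prove the two claims of Theorem~\ref{app:total_deri} separately. The first claim---the expression for the total gradient $\mathrm{D}_{\lambda}L_O$---is essentially the multivariate chain rule applied to the composite function $\lambda \mapsto L_O(\lambda, \theta_i(\lambda); D_i^{val})$. Here $\lambda$ enters $L_O$ both directly (as its first argument) and indirectly through $\theta_i(\lambda)$. First I would invoke the definition of the total derivative recalled in the Notation paragraph, namely $\mathrm{D}_\lambda L = \partial_\lambda L + \frac{\mathrm{d}\theta(\lambda)}{\mathrm{d}\lambda}\cdot \partial_\theta L$, and simply specialize it to $L = L_O$ and $\theta = \theta_i$. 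This part is immediate and requires no real work beyond citing the chain rule; it just records the decomposition into a direct-effect term and an indirect-effect term.

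The substantive part is the second claim, the closed form \eqref{app:deri} for $\frac{\mathrm{d}\theta_i(\lambda)}{\mathrm{d}\lambda}$, which I would establish by implicit differentiation of the inner-level optimality condition. The starting point is that $\theta_i(\lambda)$ is defined as the minimizer of the inner objective $L_I(\lambda, \theta_i; \mathcal{D}^{tr}_i)$ in \eqref{original_inner_loss}, so it satisfies the first-order stationarity condition
\begin{equation*}
\partial_{\theta_i} L_I\left(\lambda, \theta_i(\lambda); \mathcal{D}^{tr}_i\right) = 0.
\end{equation*}
Since this holds identically in $\lambda$ (the inner problem is re-solved for each $\lambda$), I would differentiate the whole identity totally with respect to $\lambda$. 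Applying the chain rule to the left-hand side, the $\lambda$-dependence appears both explicitly and through $\theta_i(\lambda)$, yielding
\begin{equation*}
\partial_{\lambda}\partial_{\theta_i} L_I + \frac{\mathrm{d}\theta_i(\lambda)}{\mathrm{d}\lambda}\cdot \partial_{\theta_i}\partial_{\theta_i} L_I = 0,
\end{equation*}
where all derivatives are evaluated at $(\lambda, \theta_i(\lambda))$. Recognizing the second-order block $\partial_{\theta_i}\partial_{\theta_i} L_I$ as the inner Hessian $H_{i,\text{in}}$ and solving for $\frac{\mathrm{d}\theta_i(\lambda)}{\mathrm{d}\lambda}$ gives exactly \eqref{app:deri}.

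The main obstacle---or at least the point demanding the most care---is twofold. First, one must justify that $\theta_i(\lambda)$ is differentiable in $\lambda$ and that the stationarity condition can be differentiated, which rests on an implicit function theorem argument; the invertibility of $H_{i,\text{in}}$ is precisely the nondegeneracy hypothesis that makes this valid, and it is guaranteed here because the $\ell_2$-regularization term $\frac{\delta}{2}\|\theta_i - \lambda\|^2$ in \eqref{inner-loss-d} adds $\delta I$ to the Hessian, ensuring positive definiteness near the minimizer. Second, I would be careful about transpose/ordering conventions for the mixed partial $\partial_{\lambda}\partial_{\theta_i} L_I$ so that the matrix dimensions in the product with $H_{i,\text{in}}^{-1}$ are consistent with the way $\frac{\mathrm{d}\theta_i(\lambda)}{\mathrm{d}\lambda}$ is contracted against $\partial_{\theta_i} L_O$ in the first claim. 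Once the implicit function theorem is invoked and the bookkeeping of the Jacobian orientation is fixed, the result follows directly from rearranging the differentiated optimality condition.
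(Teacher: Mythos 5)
Your proposal is correct and follows essentially the same route as the paper's proof: the chain rule for the decomposition of $\mathrm{D}_\lambda L_O$, followed by implicit differentiation of the inner-level stationarity condition $\partial_{\theta_i}L_I(\lambda,\theta_i(\lambda);\mathcal{D}^{tr}_i)=0$ and an appeal to the implicit function theorem to solve for $\frac{\mathrm{d}\theta_i(\lambda)}{\mathrm{d}\lambda}$. Your added remarks on why $H_{i,\text{in}}$ is invertible (the $\delta I$ contribution from the $\ell_2$-regularizer) and on Jacobian orientation are sound points of care that the paper's proof leaves implicit.
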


\begin{proof}
From the Chain Rule, the total gradient of $L_O\left(\lambda, \theta_i(\lambda); D_i^{val}\right)$, denoted as $\mathrm{D}_{\lambda}L_O\left(\lambda, \theta_i(\lambda); D_i^{val}\right)$, is calculated by:
\begin{align*}
&\mathrm{D}_{\lambda}L_O\left(\lambda, \theta_i(\lambda); D_i^{val}\right)\\
=&   \partial_{\lambda}L_O\left( \lambda, \theta_i(\lambda); D_i^{val}\right) +\frac{\mathrm{d}\theta_i(\lambda)} {\mathrm{d}\lambda} \cdot \partial_{\theta_i} L_O\left( \lambda, \theta_i(\lambda); D_i^{val} \right). 
\end{align*}
We will begin to derive the calculation method for $\frac{\mathrm{d}\theta_i(\lambda)} {\mathrm{d}\lambda}$ term. From the optimal condition of the $i$-th inner level: $\partial_{\theta}L_I(\lambda, \theta_i(\lambda);  D_i^{val})=0$. Then take derivative respect to $\lambda$
\begin{equation*}
   \partial_{\lambda} \partial_{\theta}L_I(\lambda, \theta_i(\lambda); {D}^{tr}_i) + \frac{\mathrm{d}\theta_i(\lambda )} {\mathrm{d}\lambda }  \cdot \partial_{\theta}\partial_{\theta}L_I(\lambda, \theta_i(\lambda); {D}^{tr}_i) = 0
\end{equation*}
Then from Implicit Function Theorem, we can calculate $\frac{\mathrm{d}\theta_i(\lambda)} {\mathrm{d}\lambda}$ term as:
\begin{equation*}
\frac{\mathrm{d}\theta_i(\lambda )} {\mathrm{d}\lambda }= -\partial_{\lambda}\partial_{\theta}L_I(\lambda, \theta_i(\lambda); {D}^{tr}_i )\cdot {\partial_{\theta}\partial_{\theta}L_I(\lambda, \theta_i(\lambda); {D}^{tr}_i )}^{-1}
\end{equation*}
Noting here ${\partial_{\theta}\partial_{\theta}L_I(\lambda, \theta_i(\lambda); {D}^{tr}_i )}$ is the Hessian matrix of the $i$-th inner level loss w.r.t $\theta_i(\lambda)$, we denote it as $H_{i,\text{in}}$. Then, the final expression of $\frac{\mathrm{d}\theta_i(\lambda)} {\mathrm{d}\lambda}$ is:
\begin{equation*}
\frac{\mathrm{d}\theta_i(\lambda)} {\mathrm{d}\lambda} = -\partial_{\lambda}\partial_{\theta_i}L_I\left(\lambda, \theta_i(\lambda); {D}^{tr}_i \right) \cdot H_{i,\text{in}}^{-1}.
\end{equation*}
\end{proof}

\begin{definition}\label{app:outer-total-Hessian} The total Hessian matrix  of the outer total loss $L_{\text{Total}}(\lambda, \theta(\lambda); \mathcal{D})$ with respect to $\lambda$ is defined as:
\begin{equation*}
H_{\lambda, \text{Total}} =\sum_{i\in\mathcal{I}}\left({ \mathrm{D}_{\lambda}\mathrm{D}_{\lambda}L_O\left(\lambda, \theta_i(\lambda); D_i^{val}\right) }\right). 
\end{equation*}
\end{definition}

\begin{theorem}($\text{task-IF}$)\label{app:task-IF}
Define the task-IF of the $k$-th task in the meta parameter $\lambda^*$ as
   \begin{align*}\label{app:task:para-if}
    &\text{task-IF}(D_k;\lambda^*, \theta_k(\lambda^*))\\
    =& - \left(\delta\cdot I + H_{\lambda, \text{Total}} \right)^{-1} \cdot {\mathrm{D}_{\lambda}L_O\left( \lambda^*, \theta_k(\lambda^*);D^{val}_k\right)}.
\end{align*}
Then after the removal of the $k$-th task, the retrained meta parameter $\lambda^*_{-k}$ can be estimated by
\begin{equation*}
    \lambda^*_{-k} \approx \lambda^* - \text{para-IF}(D_k;\lambda^*, \theta_k(\lambda^*)). 
\end{equation*}
\end{theorem}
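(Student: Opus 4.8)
The plan is to mimic the classical influence-function derivation from the Preliminaries, but to replace every \emph{direct} stationarity condition by its \emph{total} counterpart, since in the bilevel setting it is the total gradient $\mathrm{D}_\lambda L_{\text{Total}}$ that vanishes at $\lambda^*$, not the direct gradient $\partial_\lambda L_{\text{Total}}$. Concretely, I would introduce the up-weighted meta objective
$$\lambda^*_\epsilon = \argmin_\lambda \Big[ L_{\text{Total}}(\lambda, \theta(\lambda); \mathcal{D}) + \epsilon\, L_O(\lambda, \theta_k(\lambda); D_k^{val}) \Big],$$
and observe that $\epsilon = -1$ exactly cancels the $k$-th outer term, recovering the retrained objective that defines $\lambda^*_{-k}$. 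Note that the inner training of task $k$ becomes irrelevant at $\epsilon=-1$ because the coefficient of $L_O^k$ is then zero, so this up-weighting formulation is consistent with full task removal.

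First I would write the optimality condition for $\lambda^*_\epsilon$. Since $\lambda^*_\epsilon$ minimizes a function of $\lambda$ alone, after substituting each $\theta_i(\lambda)$, its \emph{total} derivative must vanish:
$$\sum_{i\in\mathcal{I}} \mathrm{D}_\lambda L_O\big(\lambda^*_\epsilon, \theta_i(\lambda^*_\epsilon); D_i^{val}\big) + \delta\lambda^*_\epsilon + \epsilon\, \mathrm{D}_\lambda L_O\big(\lambda^*_\epsilon, \theta_k(\lambda^*_\epsilon); D_k^{val}\big) = 0,$$
where each total gradient is computed via Theorem~\ref{total_deri}. At $\epsilon = 0$ this is precisely the stationarity of $\lambda^*$. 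Next I would differentiate this identity with respect to $\epsilon$ at $\epsilon=0$ (equivalently, apply the implicit function theorem to the map $\epsilon \mapsto \lambda^*_\epsilon$). The explicit $\epsilon$-term contributes $\mathrm{D}_\lambda L_O(\lambda^*, \theta_k(\lambda^*); D_k^{val})$, while differentiating the first two summands with respect to $\lambda$ — again \emph{totally}, since each $\theta_i$ depends on $\lambda$ — produces exactly $\delta I + H_{\lambda,\text{Total}}$, with $H_{\lambda,\text{Total}}$ the total Hessian of Definition~\ref{outer-total-Hessian}. Solving the resulting linear system gives
$$\frac{\mathrm{d}\lambda^*_\epsilon}{\mathrm{d}\epsilon}\Big|_{\epsilon=0} = -\big(\delta I + H_{\lambda,\text{Total}}\big)^{-1} \mathrm{D}_\lambda L_O(\lambda^*, \theta_k(\lambda^*); D_k^{val}),$$
which is the claimed task-IF. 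A first-order expansion $\lambda^*_\epsilon \approx \lambda^* + \epsilon\,\tfrac{\mathrm{d}\lambda^*_\epsilon}{\mathrm{d}\epsilon}|_{\epsilon=0}$ evaluated at $\epsilon=-1$ then yields $\lambda^*_{-k} \approx \lambda^* - \text{task-IF}$.

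The main obstacle, and the crux distinguishing this from the direct approach in~(\ref{if}), is justifying that the second differentiation step produces the \emph{total} Hessian rather than the direct Hessian $H_{\lambda,\text{direct}}$. This requires carefully applying the chain rule through $\theta_i(\lambda)$ a second time, which is why $H_{\lambda,\text{Total}} = \sum_i \mathrm{D}_\lambda\mathrm{D}_\lambda L_O^i$ expands into the four terms recorded in Remark~\ref{remark:4.4}, including the second-order sensitivity $\mathrm{d}^2\theta_i/\mathrm{d}\lambda^2$. I would also need invertibility of $\delta I + H_{\lambda,\text{Total}}$, which holds for $\delta$ large enough, both to invoke the implicit function theorem and to render the linear approximation meaningful. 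Beyond these points the argument is a routine Taylor expansion, so the conceptual weight lies entirely in correctly bookkeeping direct versus total derivatives.
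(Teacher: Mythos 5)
Your proposal is correct and follows essentially the same route as the paper's proof: up-weight the $k$-th outer term by $\epsilon$, impose the \emph{total}-gradient stationarity condition on $\lambda^*_\epsilon$, linearize around $\lambda^*$ (your differentiation in $\epsilon$ via the implicit function theorem is just the paper's first-order Taylor expansion of the stationarity condition in disguise), and evaluate at $\epsilon=-1$. The only additions beyond the paper are your explicit remarks on invertibility of $\delta I + H_{\lambda,\text{Total}}$ and on why $\epsilon=-1$ also neutralizes the $k$-th inner problem, both of which the paper leaves implicit.
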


\begin{proof}
% The structure of the meta-learning process is outlined as follows:
% The inner objective:
% \begin{equation*}
%     \theta_i(\lambda) = \argmin_{\theta_i} L_I(\lambda, \theta_i; D_i^{tr}).
% \end{equation*}
% The inner loss function $L_I(\lambda, \theta_i; D_i^{tr})$ is the empirical risk of $\lambda$ and $\theta_i$ on dataset $D_i^{tr}$.
The outer objective is determined by calculating the average validation error on the task-specific parameters we obtained from the inner objective across multiple tasks as:
\begin{equation}\label{app:original_outer_loss}
    \lambda^* = \argmin_{\lambda}\sum_{i\in \mathcal{I}}  L_O\left(\lambda, \theta_i(\lambda); D_i^{val}\right)+\frac{\delta}{2}\|\lambda\|^2.
\end{equation}
We upweight the term in the total loss function (\ref{app:original_outer_loss}) related to task $k$ by $\epsilon$ and obtain 
\begin{equation}
\lambda_{\epsilon} \triangleq  \argmin_{\lambda}  \sum_{i\in\mathcal{I}}L_O(\lambda, \theta_i; D_i^{val}) + \epsilon \cdot L_O(\lambda, \theta_k; D_k^{val}) +\frac{\delta}{2}\|\lambda\|^2.
\end{equation}
Because of the minimization condition, we have
\begin{equation}\label{app:task_total_loss}
      \sum_{i\in\mathcal{I}} {\mathrm{D}_{\lambda}L_O\left( \lambda_{\epsilon}, \theta_i(\lambda_{\epsilon}); D_i^{val}\right)} + \epsilon \cdot  {\mathrm{D}_{\lambda}L_O\left( \lambda_{\epsilon}, \theta_k(\lambda_{\epsilon}); D_i^{val}\right)} + \delta\cdot\lambda_{\epsilon} = 0
\end{equation}
where ${\mathrm{D}_{\lambda}L_O\left( \lambda_{\epsilon}, \theta_i(\lambda_{\epsilon}); D_i^{val}\right)}$ is the total derivative with respect to $\lambda$ defined in Definition \ref{app:total_deri}, which is
\begin{equation*}
     {\mathrm{D}_{\lambda}L_O\left( \lambda_{\epsilon}, \theta_i(\lambda_{\epsilon}); D_i^{val}\right)} = \partial_{\lambda}L_O\left( \lambda_{\epsilon}, \theta_i(\lambda_{\epsilon}); D_i^{val}\right) + \frac{\mathrm{d}\theta_i(\lambda_{\epsilon})} {\mathrm{d}\lambda_{\epsilon}} \cdot \partial_{\theta} L_O\left( \lambda_{\epsilon}, \theta_i(\lambda_{\epsilon}); D_i^{val}\right),
\end{equation*}
where $\frac{\mathrm{d}\theta_i(\lambda_{\epsilon})} {\mathrm{d}\lambda_{\epsilon}}$ is obtained by
\begin{equation*}
\frac{\mathrm{d}\theta_i(\lambda_{\epsilon})} {\mathrm{d}\lambda_{\epsilon}}= -\partial_{\lambda}\partial_{\theta}L_I(\lambda, \theta_i(\lambda_{\epsilon}); D_i^{tr}) \cdot {\partial_{\theta}\partial_{\theta}L_I(\lambda, \theta_i(\lambda_{\epsilon}); D_i^{tr})}^{-1}.
\end{equation*}
Then, for $i\in\mathcal{I}$, we perform a Taylor expansion of ${\mathrm{D}_{\lambda}L_O\left( \lambda_{\epsilon}, \theta_i(\lambda_{\epsilon}); D_i^{val}\right)}$ around the point $(\lambda^*, \theta_i(\lambda^*))$, then we have
\begin{equation}\label{app:task_outer_loss}
    \begin{split}
    {\mathrm{D}_{\lambda}L_O\left( \lambda_{\epsilon}, \theta_i(\lambda_{\epsilon}); D_i^{val}\right)}\approx &{\mathrm{D}_{\lambda}L_O}(\lambda^*, \theta_i(\lambda^*); D_i^{val}) + {\mathrm{D}_{\lambda}\mathrm{D}_{\lambda}L_O}(\lambda^*, \theta_i(\lambda^*); D_i^{val}) \cdot \left(\lambda_{\epsilon} -  \lambda^*\right)
    \end{split}
\end{equation}
Substituting Equation (\ref{app:task_outer_loss}) into (\ref{app:task_total_loss}) and repeating the same process for all the tasks, we have:
\begin{equation}\label{app:after_Taylor}
    \begin{split}
\delta\cdot\lambda_{\epsilon} + \sum_{i\in\mathcal{I}} {\mathrm{D}_{\lambda}L_O\left( \lambda^*, \theta_i(\lambda^*); D_i^{val}\right)}+ \epsilon \cdot  {\mathrm{D}_{\lambda}L_O\left( \lambda^*, \theta_k(\lambda^*); D_k^{val}\right)}\\
+\left(\delta\cdot I +   \sum_{i\in\mathcal{I}}  {\mathrm{D}_{\lambda}\mathrm{D}_{\lambda}L_O}(\lambda^*, \theta_i(\lambda^*); D_i^{val})\right) \cdot \left({\lambda_{\epsilon}} -{\lambda^*}\right) = 0.
    \end{split}
\end{equation}
The high order terms have been dropped in above equation. From Equation (\ref{app:original_outer_loss}), we can derive an optimal condition for $\lambda^*$ as:
\begin{equation*}
    \sum_{i\in\mathcal{I}} {\mathrm{D}_{\lambda}L_O\left( \lambda^*, \theta_i(\lambda^*); D_i^{val}\right)} + \delta\cdot\lambda^* = 0.
\end{equation*}
Then the first term in Equation (\ref{app:after_Taylor}) equals $0$. By simple algebraic manipulation, we have
\begin{equation*}
    {\lambda_{\epsilon}} -{\lambda^*} = -\epsilon \cdot \left(\delta\cdot I +   \sum_{i\in\mathcal{I}}  {\mathrm{D}_{\lambda}\mathrm{D}_{\lambda}L_O}(\lambda^*, \theta_i(\lambda^*); D_i^{val})\right)^{-1} \cdot {\mathrm{D}_{\lambda}L_O\left( \lambda^*, \theta_k(\lambda^*); D_k^{val}\right)}.
\end{equation*}
Revising the definition of the total Hessian matrix $H_{\lambda, \text{Total}}$ in Definition \ref{app:outer-total-Hessian}, the above equation can be rewritten as:
\begin{equation}
    {\lambda_{\epsilon}} -{\lambda^*} = -\epsilon \cdot \left(\delta\cdot I + H_{\lambda, \text{Total}} \right)^{-1} \cdot {\mathrm{D}_{\lambda}L_O\left( \lambda^*, \theta_k(\lambda^*); D_k^{val}\right)}.
\end{equation}
Letting $\epsilon\rightarrow 0$, we can define the task-IF as
% We denote the total Hessian of $i$-th task as $H^i_{\lambda}$, then perform above expansion for all the tasks, equation (\ref{app:task_total_loss}) becomes
% \begin{align*}
%      \sum_{i\in\mathcal{I}} {\mathrm{D}_{\lambda}L_O\left( \lambda^*, \theta_i(\lambda^*)\right)} +\left(\delta\cdot I +   \sum_{i\in\mathcal{I}}  H^i_{\lambda}\right) \cdot \left({\lambda_{\epsilon}} -{\lambda^*}\right) + \epsilon \cdot  {\mathrm{D}_{\lambda}L_O\left( \lambda^*, \theta_k(\lambda^*)\right)} = 0
% \end{align*}
% From (\ref{app:original_outer_loss}), the first term in the above equation equals $0$. Then we have q\begin{align*}
%     {\lambda_{\epsilon}} -{\lambda^*} = -\epsilon \cdot \left(\delta\cdot I+  \sum_{i\in\mathcal{I}}  H^i_{\lambda} \right)^{-1} \cdot {\mathrm{D}_{\lambda}L_O\left( \lambda^*, \theta_k(\lambda^*)\right)}
% \end{align*}
% Noting that from Definition (\ref{app:outer-total-Hessian}), $  \sum_{i\in\mathcal{I}}  H^i_{\lambda} = H_{\lambda}$.
% Define the para-IF as
\begin{align*}
    \text{task-IF}(k) = \left.\frac{\mathrm{d}{\lambda_{\epsilon}}}{\mathrm{d}\epsilon}\right|_{\epsilon = 0} = - \left(\delta\cdot I+H_{\lambda, \text{Total}}\right)^{-1} \cdot {\mathrm{D}_{\lambda}L_O\left( \lambda^*, \theta_k(\lambda^*); D_k^{val}\right)}.
\end{align*}
When set $\epsilon= -1 $, $\lambda_{-1}= \lambda^*_{-k}$, we can use a first-order approximation to estimate $\lambda_{- 1}$ as an approximation for $\lambda^*_{-k}$:
\begin{equation*}
    \lambda^*_{-k}\approx \lambda^* - \cdot \text{para-IF}(k) .
\end{equation*}
\end{proof}

\begin{remark}\label{app:remark:4.4}
  Note that the task influence function depends on the total Hessian matrix. 
  For simplicity, we denote $L_O\left(\lambda, \theta_i(\lambda); D_i^{val}\right)$ as $L_O^i$, then it can be written as 
\begin{equation*}\label{app:Total-Hessian}
\begin{split}
H_{\lambda, \text{Total}} 
% =&\sum_{i\in\mathcal{I}}\left({ \mathrm{D}_{\lambda}\mathrm{D}_{\lambda}L_O\left(\lambda, \theta_i(\lambda); D_i^{val}\right) }\right)\\
% = & \mathrm{D}_{\lambda} \left( \partial_{\lambda}L_O\left( \lambda, \theta_i(\lambda)\right) +\frac{\mathrm{d}\theta_i(\lambda)} {\mathrm{d}\lambda} \cdot \partial_{\theta_i} L_O\left( \lambda, \theta_i(\lambda)\right)\right)\\
=&\sum_{i\in\mathcal{I}}\left(\partial_{\lambda}\partial_{\lambda}L^i_O + 2\cdot \frac{\mathrm{d}\theta_i(\lambda)} {\mathrm{d}\lambda} \cdot \partial_{\theta_i}\partial_{\lambda} L_O^i\right.\\
& \left.+ \frac{\mathrm{d}^2\theta_i(\lambda)} {\mathrm{d}\lambda^2}\cdot \partial_{\theta_i} L_O^i +\left(\frac{\mathrm{d}\theta_i(\lambda)} {\mathrm{d}\lambda}\right)^2\cdot \partial_{\theta_i}\partial_{\theta_i} L_O^i\right). 
\end{split}
\end{equation*}
\end{remark}

\begin{proof}
From Definition \ref{app:outer-total-Hessian}, the total Hessian matrix $H_{\lambda, \text{Total}}$ is defined as:
\begin{equation*}
H_{\lambda, \text{Total}} =\sum_{i\in\mathcal{I}}\left({ \mathrm{D}_{\lambda}\mathrm{D}_{\lambda}L_O\left(\lambda, \theta_i(\lambda); D_i^{val}\right) }\right).
\end{equation*}
Then, we can expand the total Hessian matrix $H_{\lambda, \text{Total}}$ as:
\begin{equation*}
\begin{split}
H_{\lambda, \text{Total}}&=\sum_{i\in\mathcal{I}}\left({ \mathrm{D}_{\lambda}\mathrm{D}_{\lambda}L_O\left(\lambda, \theta_i(\lambda); D_i^{val}\right) }\right)\\
=& \sum_{i\in\mathcal{I}}\mathrm{D}_{\lambda} \left( \partial_{\lambda}L_O\left( \lambda, \theta_i(\lambda)\right) +\frac{\mathrm{d}\theta_i(\lambda)} {\mathrm{d}\lambda} \cdot \partial_{\theta_i} L_O\left( \lambda, \theta_i(\lambda)\right)\right)\\
=&\sum_{i\in\mathcal{I}}\left(\partial_{\lambda}\partial_{\lambda}L^i_O + 2\cdot \frac{\mathrm{d}\theta_i(\lambda)} {\mathrm{d}\lambda} \cdot \partial_{\theta_i}\partial_{\lambda} L_O^i\right.\\
& \left.+ \frac{\mathrm{d}^2\theta_i(\lambda)} {\mathrm{d}\lambda^2}\cdot \partial_{\theta_i} L_O^i +\left(\frac{\mathrm{d}\theta_i(\lambda)} {\mathrm{d}\lambda}\right)^2\cdot \partial_{\theta_i}\partial_{\theta_i} L_O^i\right).
\end{split}
\end{equation*}
\end{proof}

\subsection{Evaluating the Influence of Individual Instance}
\begin{theorem}[Instance-IF for Validation Data]\label{app:instance-IF for val}
The influence function of the validation data 
$\tilde{z} = (\tilde{x}, \tilde{y})$ is 
   \begin{align*}\label{app:task:para-if}
    &\text{instance-IF}(\tilde{z}; \lambda^*, \theta_{k}(\lambda^*))\\
    &= - \left(\delta\cdot I + H_{\lambda, \text{Total}} \right)^{-1} \nabla \ell(\tilde{z}; \theta_k(\lambda^*)).
\end{align*}
After the removal of $\tilde{z}$, the retrained optimal meta parameter $\lambda^*_{-\tilde{z}}$ can be estimated by
\begin{equation*}
    \lambda^*_{-\tilde{z}} \approx \lambda^* - \text{instance-IF}(\tilde{z};\lambda^*, \theta_{k}(\lambda^*)). 
\end{equation*}
\end{theorem}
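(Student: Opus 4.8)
The plan is to replay verbatim the argument that established Theorem~\ref{app:task-IF} (task-IF), changing only the perturbed term: instead of up-weighting the whole outer task loss $L_O(\lambda,\theta_k(\lambda);D_k^{val})$, I up-weight the single validation-point loss $\ell(\tilde{z};\theta_k(\lambda))$. This substitution is legitimate precisely because $\tilde{z}$ is a \emph{validation} point: it never enters any inner objective $L_I$, so none of the task maps $\theta_i(\lambda)$ change when $\tilde z$ is perturbed, and in particular the total Hessian $H_{\lambda,\text{Total}}$ of the unperturbed outer objective (Definition~\ref{app:outer-total-Hessian}) is identical to the one used in task-IF. Concretely, I introduce the $\epsilon$-perturbed meta parameter
$$\lambda_\epsilon = \argmin_\lambda \sum_{i\in\mathcal{I}} L_O(\lambda,\theta_i(\lambda);D_i^{val}) + \epsilon\cdot\ell(\tilde{z};\theta_k(\lambda)) + \frac{\delta}{2}\|\lambda\|^2,$$
and write its first-order optimality condition in terms of total derivatives with respect to $\lambda$ (using Theorem~\ref{app:total_deri}):
$$\sum_{i\in\mathcal{I}} \mathrm{D}_\lambda L_O(\lambda_\epsilon,\theta_i(\lambda_\epsilon);D_i^{val}) + \epsilon\cdot \mathrm{D}_\lambda\ell(\tilde{z};\theta_k(\lambda_\epsilon)) + \delta\cdot\lambda_\epsilon = 0.$$

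Next I would Taylor-expand each $\mathrm{D}_\lambda L_O(\lambda_\epsilon,\theta_i(\lambda_\epsilon);D_i^{val})$ about the unperturbed optimum $(\lambda^*,\theta_i(\lambda^*))$ exactly as in (\ref{app:task_outer_loss}), retaining the zeroth-order term and the linear term $\mathrm{D}_\lambda\mathrm{D}_\lambda L_O(\lambda^*,\theta_i(\lambda^*);D_i^{val})\cdot(\lambda_\epsilon-\lambda^*)$ and discarding higher orders. Summing over $i$, the second-order coefficient assembles into $\delta\cdot I + H_{\lambda,\text{Total}}$ by Definition~\ref{app:outer-total-Hessian}, while the zeroth-order sum $\sum_i \mathrm{D}_\lambda L_O(\lambda^*,\theta_i(\lambda^*);D_i^{val}) + \delta\lambda^*$ vanishes by the optimality of $\lambda^*$ for the unperturbed problem (\ref{app:original_outer_loss}). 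Since the term $\epsilon\cdot\mathrm{D}_\lambda\ell(\tilde z;\theta_k(\lambda_\epsilon))$ is already first order in $\epsilon$, I evaluate it at $\lambda^*$, and solving the linear relation yields
$$\lambda_\epsilon-\lambda^* = -\epsilon\cdot\left(\delta\cdot I + H_{\lambda,\text{Total}}\right)^{-1}\cdot \mathrm{D}_\lambda\ell(\tilde{z};\theta_k(\lambda^*)).$$
Differentiating at $\epsilon=0$ gives the instance-IF, and substituting $\epsilon=-1$ with a first-order approximation gives $\lambda^*_{-\tilde z}\approx \lambda^* - \text{instance-IF}(\tilde z;\lambda^*,\theta_k(\lambda^*))$.

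Since the calculation is a near-verbatim specialization of task-IF, there is no hard new estimate; the one point that genuinely deserves care is the meaning of the numerator. Because $\ell(\tilde z;\theta_k(\lambda))$ depends on $\lambda$ \emph{only} through $\theta_k(\lambda)$, its total gradient is $\mathrm{D}_\lambda\ell(\tilde z;\theta_k(\lambda^*)) = \frac{\mathrm{d}\theta_k(\lambda)}{\mathrm{d}\lambda}\cdot\nabla_{\theta_k}\ell(\tilde z;\theta_k(\lambda^*))$, which is a vector in $\lambda$-space — exactly the object on which $(\delta\cdot I + H_{\lambda,\text{Total}})^{-1}$ acts. I would therefore make explicit that the symbol $\nabla\ell(\tilde z;\theta_k(\lambda^*))$ in the statement denotes this total gradient (the dimensional consistency of the formula forces this reading), so that the derivation stays faithful to the bilevel total-derivative bookkeeping rather than collapsing to a bare direct gradient in $\theta$-space.
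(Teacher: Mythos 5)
Your proposal follows essentially the same route as the paper's own proof: up-weight $\ell(\tilde z;\theta_k(\lambda))$ by $\epsilon$, write the total-derivative optimality condition for $\lambda_\epsilon$, Taylor-expand about $\lambda^*$, cancel the zeroth-order term by optimality of $\lambda^*$, and solve the resulting linear system. Your closing remark that the numerator must be read as the total gradient $\mathrm{D}_\lambda\ell(\tilde z;\theta_k(\lambda^*))=\frac{\mathrm{d}\theta_k(\lambda)}{\mathrm{d}\lambda}\cdot\nabla_{\theta_k}\ell(\tilde z;\theta_k(\lambda^*))$ is a correct and worthwhile clarification of the paper's ambiguous notation, and it matches how the quantity is actually computed in the paper's Algorithm~\ref{alg:ins_vali}.
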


\begin{proof}
Consider evaluating the influence of the data $\tilde{z} = (\tilde{x}, \tilde{y})$ in $\mathcal{D}^{val}_k$ in parameter $\lambda$, the loss function and parameter after the removal is defined as follows:
\begin{equation}
    \begin{split}
        \lambda^*_{-\tilde{z}} = \argmin_{\lambda} \sum_{i\in\mathcal{I}} L_O(\lambda, \theta_i(\lambda); \mathcal{D}^{val}_k)  -   \ell(\tilde{z};\theta_k(\lambda)) + \frac{\delta}{2} \|\lambda\|^2.
    \end{split}
\end{equation}
To estimate $\lambda^*_{-\tilde{z}}$, firstly, up-weight the term in $L_O$ related to $\tilde{z}$ by $\epsilon$ and obtain the outer level loss function as
\begin{equation*}
\begin{split}
    \sum_{i\in\mathcal{I}} L_O(\lambda, \theta_i(\lambda); \mathcal{D}^{val}_i)  + \epsilon\cdot   \ell(\tilde{z};\theta_k(\lambda)) + \frac{\delta}{2} \|\lambda\|^2.
\end{split}
\end{equation*}
Then a set of new parameters $\lambda_{\epsilon}$ are defined by
\begin{equation}\label{app:instance: lambda}
   \lambda_{\epsilon}  = \argmin_{\lambda}     \sum_{i\in\mathcal{I}} L_O(\lambda, \theta_i(\lambda); \mathcal{D}^{val}_i)  + \epsilon\cdot   \ell(\tilde{z};\theta_k(\lambda)) + \frac{\delta}{2} \|\lambda\|^2.
\end{equation}
The from the minimizing condition in (\ref{app:instance: lambda}), we can obtain:
\begin{equation*}
    \sum_{i\in\mathcal{I}} \mathrm{D}_{\lambda} L_O(\lambda_{\epsilon}, \theta_i(\lambda_{\epsilon}); \mathcal{D}^{val}_i)  + \epsilon \cdot    \nabla_{\lambda} \ell(\tilde{z};\theta_k(\lambda_{\epsilon})) + \delta\cdot \lambda_{\epsilon} = 0
\end{equation*}
Perform Taylor expansion at $\lambda_{\epsilon} = \lambda^*$,
\begin{equation}\label{app:instance:after-Taylor}
\begin{split}
    \sum_{i\in\mathcal{I}} \mathrm{D}_{\lambda} L_O(\lambda^*, \theta_i(\lambda^*); \mathcal{D}^{val}_i)  + \epsilon \cdot    \nabla_{\lambda} \ell(\tilde{z};\theta_k(\lambda^*))+ \delta\cdot (\lambda_{\epsilon} - \lambda^*) \\
    +\sum_{i\in\mathcal{I}} \mathrm{D}_{\lambda}\mathrm{D}_{\lambda} L_O(\lambda^*, \theta_i(\lambda^*); \mathcal{D}^{val}_i) \cdot \left(  \lambda_{\epsilon} -  \lambda^*    \right) \approx 0
\end{split}
\end{equation}
From the definition of $\lambda^*$ we can find it as the minimizer of the original outer level loss function $\sum_{i\in\mathcal{I}}L_O(\lambda, \theta_i(\lambda); \mathcal{D}^{val}_i) + \frac{\delta}{2} \|\lambda\|^2$, therefore, the first term in Equation (\ref{app:instance:after-Taylor}) equals $0$. 
Besides, $H_{\lambda, \text{Total}} = \sum_{i\in\mathcal{I}} \mathrm{D}_{\lambda}\mathrm{D}_{\lambda} L_O(\lambda^*, \theta_i(\lambda^*); \mathcal{D}^{val}_i)$ is the total Hessian matrix of the outer level loss function. Then we have
\begin{equation*}
    \lambda_{\epsilon}-  \lambda^*   = -\epsilon \cdot \left( \delta\cdot I + H_{\lambda, \text{Total}} \right)^{-1} \nabla_{\lambda} \ell(\tilde{z};\theta_k(\lambda^*))
\end{equation*}
Let $\epsilon\rightarrow 0$, we can obtain the definition of the instance-level influence function for validation data as follows:
\begin{equation*}
    \text{instance-IF}(\tilde{z}; \lambda^*, \theta_{k}(\lambda^*)) = - \left(\delta\cdot I + H_{\lambda, \text{Total}} \right)^{-1} \nabla \ell(\tilde{z}; \theta_k(\lambda^*)).
\end{equation*}
When $\epsilon = -1$, the influence of $(\tilde{x},\tilde{y})$ is fully removed.
We can use this $\text{instance-IF}(\tilde{z}; \lambda^*, \theta_{k}(\lambda^*))$ to estimate $\lambda^*_{-\tilde{z}}$ when data $\tilde{z}$ is removed with $\lambda^*$ fixed as
\begin{equation*}
    \lambda^*_{-\tilde{z}} \approx \lambda^* - \text{instance-IF}(\tilde{z}; \lambda^*, \theta_{k}(\lambda^*)).
\end{equation*}
\end{proof}

\begin{theorem}\label{app:inner-IF}
    The influence function for the inner level (inner-IF) of $\tilde{z}=(\tilde{x}, \tilde{y})$ from the training set is 
    \begin{equation*}
    \text{inner-IF}(\tilde{z}; \theta_{k}(\lambda^*)) = - H_{k, \text{in}}^{-1}\cdot \nabla_{\theta_k} \ell(\tilde{z}; \theta_{k}(\lambda^*)),
\end{equation*}
where $H_{k, \text{in}} = \partial_{\theta_k}\partial_{\theta_k} L_I(\lambda^*, \theta_k(\lambda^*);\mathcal{D}^{tr}_k)$ is the $k$-th inner-level Hessian matrix.
After the removal of $\tilde{z}$, denote $\theta^-_{k}(\lambda^*)$ as its corresponding retrained task-parameter $\theta^-_{k}(\lambda^*)= \argmin_{\theta_i} L_I(\lambda^*, \theta_i; D_i^{tr}\backslash\tilde{z}))$. Then it can be estimated by 
\begin{equation*}
 \theta^-_{k}(\lambda^*) \approx \theta_{k}(\lambda^*)-  \text{inner-IF}(\tilde{z}; \theta_{k}(\lambda^*)).
\end{equation*}
\end{theorem}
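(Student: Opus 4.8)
The plan is to apply the classical influence function machinery of Koh and Liang directly to the inner-level objective $L_I(\lambda^*, \theta_k; \mathcal{D}^{tr}_k)$, treating $\lambda^*$ as a fixed constant throughout. The crucial observation, already noted in the excerpt, is that during each inner-level training period the meta parameter $\lambda$ is supplied by the outer level and held fixed, so the inner optimization $\theta_k(\lambda^*) = \argmin_{\theta_k} L_I(\lambda^*, \theta_k; \mathcal{D}^{tr}_k)$ is genuinely a single-level M-estimation problem in $\theta_k$. This means no total-derivative subtleties arise here, unlike in the task-IF proof, and the standard argument carries over verbatim.

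First I would introduce the up-weighted inner objective by perturbing the $\tilde{z}$-term by $\epsilon$, defining
\begin{equation*}
\theta_{k,\epsilon}(\lambda^*) = \argmin_{\theta_k}\left[ L_I(\lambda^*, \theta_k; \mathcal{D}^{tr}_k) + \epsilon\cdot \ell(\tilde{z}; \theta_k)\right].
\end{equation*}
Next I would write down the first-order optimality condition $\partial_{\theta_k} L_I(\lambda^*, \theta_{k,\epsilon}; \mathcal{D}^{tr}_k) + \epsilon\cdot \nabla_{\theta_k}\ell(\tilde{z}; \theta_{k,\epsilon}) = 0$, then perform a Taylor expansion of this gradient condition around the unperturbed optimum $\theta_k(\lambda^*)$. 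Using the stationarity of $\theta_k(\lambda^*)$ (so $\partial_{\theta_k} L_I(\lambda^*, \theta_k(\lambda^*); \mathcal{D}^{tr}_k) = 0$) and retaining only first-order terms, the expansion collapses to
\begin{equation*}
\epsilon\cdot \nabla_{\theta_k}\ell(\tilde{z}; \theta_k(\lambda^*)) + H_{k,\text{in}}\cdot \left(\theta_{k,\epsilon}(\lambda^*) - \theta_k(\lambda^*)\right) \approx 0,
\end{equation*}
where $H_{k,\text{in}} = \partial_{\theta_k}\partial_{\theta_k} L_I(\lambda^*, \theta_k(\lambda^*); \mathcal{D}^{tr}_k)$ is exactly the inner-level Hessian already defined in Theorem~\ref{total_deri}. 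Solving for the difference and differentiating with respect to $\epsilon$ at $\epsilon = 0$ yields the stated inner-IF $-H_{k,\text{in}}^{-1}\cdot \nabla_{\theta_k}\ell(\tilde{z}; \theta_k(\lambda^*))$. Finally, setting $\epsilon = -1$ to model full removal of $\tilde{z}$ and applying the first-order approximation gives $\theta^-_k(\lambda^*) \approx \theta_k(\lambda^*) - \text{inner-IF}(\tilde{z}; \theta_k(\lambda^*))$, which is the second claim.

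I do not expect a genuine obstacle here, since this is the textbook influence-function derivation applied to a single-level problem; the only point requiring care is justifying that $\lambda^*$ may be frozen, which I would address at the outset by invoking the sequential structure of the bilevel procedure (the inner loop sees $\lambda^*$ as a constant). A minor technical point worth stating is that the argument implicitly assumes $H_{k,\text{in}}$ is invertible, guaranteed here by the $\frac{\delta}{2}\|\theta_k - \lambda\|^2$ regularization term in the inner loss~(\ref{inner-loss-d}), which adds $\delta\cdot I$ to the Hessian and ensures positive-definiteness; I would remark on this to keep the inverse well-defined.
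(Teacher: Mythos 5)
Your proposal is correct and follows essentially the same route as the paper's own proof: up-weight the $\ell(\tilde{z};\theta_k)$ term by $\epsilon$ in the inner objective with $\lambda^*$ held fixed, Taylor-expand the stationarity condition around $\theta_k(\lambda^*)$, drop the vanishing first-order term, and set $\epsilon=-1$ for full removal. Your added remarks on freezing $\lambda^*$ and on invertibility of $H_{k,\text{in}}$ via the $\frac{\delta}{2}\|\theta_k-\lambda\|^2$ regularizer are sensible clarifications the paper leaves implicit.
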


\begin{proof}
Consider evaluating the influence of the data $\tilde{z} = (\tilde{x}, \tilde{y})$ in $\mathcal{D}^{val}_k$ in parameter $\theta_k(\lambda)$, the loss function and parameter after the removal is defined as follows:
    \begin{equation*}
        \theta^-_{k}(\lambda) = \argmin_{\theta_k}  L_I(\lambda, \theta_k, \mathcal{D}^{tr}_k)  -  \ell(\tilde{z};\theta_k)
    \end{equation*}
Revising the definition of $\theta_{k}(\lambda)$:
\begin{equation*}
    \theta_{k}(\lambda) = \argmin_{\theta} L_I(\lambda, \theta; \mathcal{D}^{tr}_k)=\sum_{z\in\mathcal{D}^{tr}_k}\ell(z;\theta_k) + \frac{\delta}{2} \|\theta_k-\lambda\|^2.
\end{equation*}
    Firstly, up-weight $\ell(\tilde{z};\theta_k)$ in $L_I$ by $\epsilon$, resulting in the modified loss function. Then, a set of new parameters $\theta_{k, \epsilon}(\lambda)$ are defined by:
    \begin{equation}\label{app:instance: theta}
    \begin{split}
        \theta_{k, \epsilon}(\lambda)  = \argmin_{\theta}\sum_{z\in\mathcal{D}^{tr}_k}\ell(z;\theta_k) + \frac{\delta}{2} \|\theta_k-\lambda\|^2 + \epsilon \cdot \ell(\tilde{z};\theta_k)= L_I(\lambda, \theta_k; \mathcal{D}^{tr}_k)  + \epsilon \cdot \ell(\tilde{z};\theta_k).
    \end{split}
    \end{equation}
    The from the minimizing condition in (\ref{app:instance: theta}), we can obtain:
    \begin{equation*}
       \nabla_{\theta} L_I(\lambda, \theta_{k, \epsilon}(\lambda); \mathcal{D}^{tr}_k) + \epsilon \cdot    \nabla_{\theta} \ell(\tilde{z};\theta_{k, \epsilon}(\lambda)) = 0
    \end{equation*}
    Perform Taylor expansion at $\theta_{k, \epsilon}(\lambda) = \theta_{k}(\lambda)$, 
    \begin{equation*}
       \nabla_{\theta} L_I(\lambda, \theta_k; \mathcal{D}^{tr}_k)  + \epsilon \cdot    \nabla_{\theta} \ell(\tilde{z};\theta_{k}(\lambda)) + \nabla^2_{\theta} L_I(\lambda, \theta_k, \mathcal{D}^{tr}_k) \cdot \left(  \theta_{k, \epsilon}(\lambda) -  \theta_{k}(\lambda) \right)\approx 0
    \end{equation*}

    From the definition of $\theta_{i}(\lambda)$ we can find it as the minimizer of $L_I(\mathcal{D}_{tr}^i, \theta_{i}(\lambda), \lambda)$, therefore, the first term in above equation equals $0$. Then we have
    \begin{equation*}
        \theta_{i, \epsilon}(\lambda) -  \theta_{i}(\lambda)   = -\epsilon \cdot  
    H_{\theta, i}^{-1} \cdot \nabla_{\theta} \ell(x,y; \theta_{i}(\lambda))
    \end{equation*}
    where $H_{\theta,i} =  \nabla^2_{\theta_i} L_I(\mathcal{D}_{tr}^i, \theta_{i}(\lambda), \lambda)$.
    
    Let $\epsilon\rightarrow 0$, we can obtain the definition of the inner-level influence function as follows:
    \begin{equation*}
        \text{inner-IF}(x,y; \theta_{k}(\lambda)) \triangleq \left.\frac{\mathrm{d} {{\theta}_{k, \epsilon}(\lambda)}}{\mathrm{d}\epsilon}\right |_{\epsilon = 0} = - H_{\theta_k}^{-1}\cdot \nabla_{\theta_k} \ell(\tilde{x},\tilde{y}; \theta_{k}(\lambda))
    \end{equation*}
    When $\epsilon = -1$, the influence of $(\tilde{x},\tilde{y})$ is fully removed.
    We can use this $\text{inner-IF}(x,y; \theta_{i}(\lambda))$ to estimate $\theta^-_{i}(\lambda)$ when data $(x,y)$ is removed with $\lambda$ fixed as 
    \begin{equation*}
     \theta^-_{i}(\lambda) \approx \theta_{i}(\lambda)-   \text{inner-IF}(x,y; \theta_{i}(\lambda))\triangleq \tilde{\theta}_i(\lambda)
    \end{equation*}
    \end{proof}

\begin{theorem}\label{app:prop:6}
    If the $k$-th task related parameter changes from $\theta_k$ to $\theta_k^-$, we can estimate the related loss function difference $$L_O\left(\left(\lambda, \theta^-_k(\lambda)\right); D_k^{val}\right) - L_O\left(\lambda, \theta_k(\lambda); D_k^{val}\right)$$ by $P(\lambda,\theta_{k}(\lambda);\tilde{z})$, which is defined as
    $$-{\nabla_{\theta} L_O\left(\lambda, \theta_k(\lambda); D_k^{val}\right)^{\mathrm{T}}\cdot \text{inner-IF}(\tilde{z}; \theta_{k}(\lambda))}.$$
%     \begin{align*}
%        & P(\lambda,\theta_{k}(\lambda);\tilde{x},\tilde{y})\\
%        =& -{\nabla_{\theta} L_O\left(\lambda, \theta_k(\lambda); D_k^{val}\right)^{\mathrm{T}}\cdot \text{task-IF}(\tilde{x},\tilde{y}; \theta_{k}(\lambda))}
% \end{align*}
\end{theorem}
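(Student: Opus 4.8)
The plan is to treat this as a straightforward first-order Taylor argument that chains together with the inner-level result already established in Theorem \ref{app:inner-IF}. The key realization is that the outer loss $L_O$ depends on the task-specific parameter only through its second argument, so the loss difference induced by swapping $\theta_k(\lambda)$ for $\theta_k^-(\lambda)$ is governed entirely by the displacement $\theta_k^-(\lambda) - \theta_k(\lambda)$ in parameter space, a quantity we already have a closed-form estimate for.

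Concretely, I would first expand $L_O\left(\lambda, \theta_k^-(\lambda); D_k^{val}\right)$ to first order in $\theta$ around the point $\theta_k(\lambda)$, giving
\begin{equation*}
L_O\left(\lambda, \theta_k^-(\lambda); D_k^{val}\right) \approx L_O\left(\lambda, \theta_k(\lambda); D_k^{val}\right) + \nabla_{\theta} L_O\left(\lambda, \theta_k(\lambda); D_k^{val}\right)^{\mathrm{T}} \cdot \left(\theta_k^-(\lambda) - \theta_k(\lambda)\right),
\end{equation*}
so that the loss difference of interest is, to leading order, $\nabla_{\theta} L_O\left(\lambda, \theta_k(\lambda); D_k^{val}\right)^{\mathrm{T}} \cdot \left(\theta_k^-(\lambda) - \theta_k(\lambda)\right)$. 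The second step is to substitute the estimate of the parameter displacement furnished by Theorem \ref{app:inner-IF}, namely $\theta_k^-(\lambda) - \theta_k(\lambda) \approx -\,\text{inner-IF}(\tilde{z}; \theta_k(\lambda))$. Plugging this in immediately yields
\begin{equation*}
L_O\left(\lambda, \theta_k^-(\lambda); D_k^{val}\right) - L_O\left(\lambda, \theta_k(\lambda); D_k^{val}\right) \approx -\nabla_{\theta} L_O\left(\lambda, \theta_k(\lambda); D_k^{val}\right)^{\mathrm{T}} \cdot \text{inner-IF}(\tilde{z}; \theta_k(\lambda)) = P(\lambda, \theta_k(\lambda); \tilde{z}),
\end{equation*}
which is exactly the claimed estimate.

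I expect the genuine mathematical content to be minimal; the argument is essentially two linearizations composed in sequence. The main subtlety, and the place I would be most careful, is justifying that dropping the higher-order terms is legitimate: both the Taylor expansion of $L_O$ and the inner-IF approximation from Theorem \ref{app:inner-IF} are first-order, so their composition accumulates error that is quadratic in $\|\theta_k^-(\lambda) - \theta_k(\lambda)\|$. This displacement is of order $O(1/|\mathcal{D}_k^{tr}|)$ since it reflects the removal of a single training instance, so the neglected terms are of order $O(1/|\mathcal{D}_k^{tr}|^2)$ and vanish faster than the retained linear term; I would state this as the regime in which the approximation is valid, relying on the smoothness (twice-differentiability) of $\ell$ and $L_O$ already assumed throughout. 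Beyond that bookkeeping, no further machinery is needed.
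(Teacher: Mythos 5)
Your proposal is correct and follows exactly the same route as the paper's proof: a first-order Taylor expansion of $L_O$ in its $\theta$-argument around $\theta_k(\lambda)$, followed by substituting the displacement estimate $\theta_k^-(\lambda) - \theta_k(\lambda) \approx -\,\text{inner-IF}(\tilde{z}; \theta_k(\lambda))$ from Theorem~\ref{app:inner-IF}. Your additional remark on the order of the accumulated error is sensible bookkeeping that the paper omits but does not change the argument.
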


\begin{proof}
    The loss function difference can be estimated by the first-order Taylor expansion as:
    \begin{equation*}
        \begin{split}
            &L_O\left(\left(\lambda, \theta^-_k(\lambda)\right); D_k^{val}\right) - L_O\left(\lambda, \theta_k(\lambda); D_k^{val}\right)\\
= & \nabla_{\theta} L_O\left(\lambda, \theta_k(\lambda); D_k^{val}\right)^{\mathrm{T}}\cdot (\theta^-_k(\lambda) - \theta_k(\lambda))\\            
            \approx & - \nabla_{\theta} L_O\left(\lambda, \theta_k(\lambda); D_k^{val}\right)^{\mathrm{T}}\cdot \text{inner-IF}(\tilde{z}; \theta_{k}(\lambda)).
        \end{split}
    \end{equation*}
    The second equation is obtained by Theorem \ref{app:inner-IF}. Then the proof is completed.
\end{proof}

\begin{proposition}[Instance-IF for Training Data]\label{app:instance-IF for tra}
    Let $\zeta\rightarrow 0$, we can obtain the influence function of the outer level as follows:
\begin{equation*}
    \text{instance-IF}(\tilde{z}; \lambda^*, \theta_{k}(\lambda^*)) = - H_{\lambda, \text{Total}}^{-1}\cdot \mathrm{D}_{\lambda} P(\lambda^*,\theta_{k}(\lambda^*);\tilde{z}).
\end{equation*}
After the removal of $\tilde{z}$,  the optimal meta parameter $\lambda^*_{-\tilde{z}}$ can be estimated by 
\begin{equation*}
 \lambda^*_{-\tilde{z}} \approx \lambda^*  +  \text{instance-IF}(\tilde{z}; \lambda^*, \theta_{k}(\lambda^*)).
\end{equation*}
\end{proposition}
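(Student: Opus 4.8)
The plan is to treat this as a standard influence-function derivation, but applied to the surrogate term $P(\lambda,\theta_k(\lambda);\tilde{z})$ rather than to a loss term that depends on $\tilde{z}$ explicitly. The conceptual difficulty — that the outer objective in (\ref{outer-loss-d}) never sees the training point $\tilde{z}$ directly — has already been resolved by Theorem~\ref{prop:6}, which shows that the effect of deleting $\tilde{z}$ on the $k$-th outer loss is captured, to first order, by $P$. So the strategy is to inject $\zeta\cdot P$ into the total objective, locate the perturbed minimizer $\lambda^*_{-\tilde{z},\zeta}$, and read off the influence function as its $\zeta$-derivative at $\zeta=0$, mirroring the arguments already carried out for Theorem~\ref{task-IF} and Theorem~\ref{instance-IF for val}.

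Concretely, first I would write the first-order stationarity condition for the perturbed problem,
\[
\mathrm{D}_{\lambda} L_{\text{Total}}\big(\lambda_\zeta,\theta(\lambda_\zeta);\mathcal{D}\big) + \zeta\cdot \mathrm{D}_{\lambda} P\big(\lambda_\zeta,\theta_k(\lambda_\zeta);\tilde{z}\big) = 0,
\]
where the total derivative $\mathrm{D}_\lambda$ is used because, as stressed in Section~\ref{method:task}, only the total gradient of the bilevel objective vanishes at the optimum. Next I would Taylor-expand the first term around $(\lambda^*,\theta(\lambda^*))$, obtaining $\mathrm{D}_\lambda L_{\text{Total}}(\lambda^*,\theta(\lambda^*);\mathcal{D}) + \big(\delta\cdot I + H_{\lambda,\text{Total}}\big)\cdot(\lambda_\zeta-\lambda^*)$ up to higher-order terms, where the $\delta\cdot I$ comes from the $\tfrac{\delta}{2}\|\lambda\|^2$ regularizer and $H_{\lambda,\text{Total}}$ is the total Hessian of Definition~\ref{outer-total-Hessian}. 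The zeroth-order piece vanishes by optimality of $\lambda^*$ for the unperturbed total loss, and to first order I may replace $\mathrm{D}_\lambda P(\lambda_\zeta,\theta_k(\lambda_\zeta);\tilde{z})$ by its value at $\lambda^*$. Solving the resulting linear relation gives $\lambda_\zeta-\lambda^* = -\zeta\,\big(\delta\cdot I + H_{\lambda,\text{Total}}\big)^{-1}\,\mathrm{D}_\lambda P(\lambda^*,\theta_k(\lambda^*);\tilde{z})$, and differentiating at $\zeta=0$ yields the claimed instance-IF. Setting $\zeta=1$ for full removal and using the linear approximation $\lambda^*_{-\tilde{z}}\approx\lambda^*+\text{instance-IF}$ completes the argument; note the sign follows because here $\zeta$ runs from $0$ to $1$, in contrast to the $\epsilon=-1$ convention used for the task and validation cases.

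I expect the only genuinely delicate point to be the bookkeeping of the total derivative $\mathrm{D}_\lambda P$. Since $P$ is itself built from $\text{inner-IF}(\tilde{z};\theta_k(\lambda))$, which contains $H_{k,\text{in}}^{-1}$ and a gradient of $\ell$, differentiating $P$ through the $\lambda$-dependence of $\theta_k(\lambda)$ brings in $\tfrac{\mathrm{d}\theta_k(\lambda)}{\mathrm{d}\lambda}$ from Theorem~\ref{total_deri} and, in principle, third-order derivatives of the inner loss; the statement keeps $\mathrm{D}_\lambda P$ abstract, so I would not expand it here and would instead defer its practical evaluation to the acceleration techniques of Section~\ref{accelerate}. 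A second, lesser subtlety is that the regularizer contributes $\delta\cdot I$ to the Hessian, which should be absorbed into (or combined with) $H_{\lambda,\text{Total}}$ to match the exact form of the statement. Everything else is a routine repetition of the Taylor-expansion-plus-implicit-function machinery already established earlier in this section.
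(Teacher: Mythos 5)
Your proposal is correct and follows essentially the same route as the paper's proof: inject the surrogate term $\zeta\cdot P$ (justified by Theorem~\ref{prop:6}) into the outer objective, write the total-gradient stationarity condition, Taylor-expand at $\lambda^*$, cancel the zeroth-order term by optimality, and read off the influence function at $\zeta=0$ with $\zeta=1$ giving full removal and the $+$ sign. The only cosmetic difference is that you carry the $\delta\cdot I$ regularizer term through the Hessian, whereas the paper's appendix proof drops the regularizer in this section and states the result with $H_{\lambda,\text{Total}}^{-1}$ alone; your version is actually the more consistent with Theorems~\ref{task-IF} and~\ref{instance-IF for val}.
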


\begin{proof}
In the following, we will derive the influence of removing $\tilde{z}$ to the meta parameter $\lambda^*$.
Follow the settings in the Preliminary part, we expand the form of the outer level loss function as:
\begin{align*}
    &\sum_{i\in \mathcal{I}} L_O\left(\lambda, \theta_i(\lambda); D_i^{val}\right)\\
     =& \sum_{i\in \mathcal{I}} \sum_{z\in\mathcal{D}^{val}_i}\ell(z;\theta_i(\lambda)) + \frac{\delta}{2} \|\theta_i(\lambda)-\lambda\|^2.
\end{align*}
If the data $\tilde{z}$ we want to remove is in the training dataset of the $k$-th task, then $\theta_k(\lambda)$ will change to $\theta^-_k(\lambda)$. And we can estimate the loss function difference with the help of Theorem \ref{app:prop:6} as follows:
\begin{align*}
        L_O\left(\lambda, \theta^-_k(\lambda); D_k^{val}\right) - L_O\left(\lambda, \theta_k(\lambda); D_k^{val}\right) \approx P(\lambda,\theta_{k}(\lambda);\tilde{z}).
\end{align*}
Then $P(\lambda,\theta_{k}(\lambda);\tilde{z})$ is the actual change in the outer-level loss function resulting from the remove of $\tilde{z}$. We add this term into the total loss function and up-weight it by a small $\zeta$ from $0$ to $1$. When $\zeta=1$, the influence of remove is fully add into the loss function.
\begin{equation*}
    \sum_{i\in \mathcal{I}} L_O\left(\lambda, \theta_i(\lambda); D_i^{val}\right) + \zeta \cdot P(\lambda,\theta_{k}(\lambda);\tilde{z}).
\end{equation*}
And a set of parameter $\lambda^*_{\zeta}$ is obtained by the minimization process as
\begin{equation*}
    \lambda^*_{\zeta} = \argmin_{\lambda}\left( \sum_{i\in \mathcal{I}} L_O\left(\lambda, \theta_i(\lambda); D_i^{val}\right) + \zeta \cdot P(\lambda,\theta_{k}(\lambda);\tilde{z})\right).
\end{equation*}
Naturally we have the following optimal condition:
\begin{equation*}
    \sum_{i\in \mathcal{I}} \mathrm{D}_{\lambda} L_O\left(\lambda^*_{\zeta}, \theta_i(\lambda^*_{\zeta}); D_i^{val}\right) + \zeta \cdot \mathrm{D}_{\lambda} P(\lambda^*_{\zeta},\theta_{k}(\lambda^*_{\zeta});\tilde{z}) = 0.
\end{equation*}
Perform a Taylor expand at $\lambda^*$, 
\begin{align*}
    \sum_{i\in \mathcal{I}}\mathrm{D}_{\lambda}L_O\left(\lambda^*, \theta_i(\lambda^*); D_i^{val}\right) + \zeta \cdot \mathrm{D}_{\lambda} P(\lambda^*,\theta_{k}(\lambda^*);\tilde{z}) +  \sum_{i\in \mathcal{I}} \mathrm{D}_{\lambda}\mathrm{D}_{\lambda}L_O\left(\lambda^*, \theta_i(\lambda^*); D_i^{val}\right) \cdot \left(  \lambda^*_{\zeta} -  \lambda^*    \right) \approx 0.
\end{align*}
Noting here $\lambda^*$ is the minimizer for the original loss function $\sum_{i\in \mathcal{I}} L_O\left(\lambda, \theta_i(\lambda); D_i^{val}\right)$, therefore the first term equal $0$, then we have
\begin{equation*}
    \lambda^*_{\zeta} -   \lambda^*  = - \zeta \cdot  H_{\lambda, \text{Total}}^{-1} \cdot \mathrm{D}_{\lambda} P(\lambda^*,\theta_{k}(\lambda^*);\tilde{z}).
\end{equation*}
where $H_{\lambda, \text{Total}}=  \sum_{i\in \mathcal{I}} \mathrm{D}_{\lambda}\mathrm{D}_{\lambda}L_O\left(\lambda^*, \theta_i(\lambda^*); D_i^{val}\right)$. Let $\zeta\rightarrow 0$, we can obtain the definition of the instance level influence function as follows:
\begin{equation*}
    \text{instance-IF}(\tilde{z}; \lambda^*, \theta_{k}(\lambda^*)) \triangleq \left.\frac{\mathrm{d} {\lambda^*_{\zeta}}}{\mathrm{d}{\zeta}}\right |_{{\zeta} = 0} = - H_{\lambda, \text{Total}}^{-1}\cdot \mathrm{D}_{\lambda} P(\lambda^*,\theta_{k}(\lambda^*);\tilde{z})
\end{equation*}
When $\zeta = 1$, the influence of removing $\tilde{z}$ is fully added into the outer level loss function.
We can use this $\text{instance-IF}(\tilde{z}; \lambda^*, \theta_{k}(\lambda^*))$ to estimate the meta parameter $\lambda^*_-$ when data $\tilde{z}$ is removed as follows:
\begin{equation*}
 \lambda^*_- \approx \lambda^*  +   \text{instance-IF}(\tilde{z}; \lambda^*, \theta_{k}(\lambda^*)).
\end{equation*}
\end{proof}

\subsection{Applications of Influence Functions}
\noindent{\bf iHVP Acceleration via EK-FAC}
We utilize EK-FAC to perform an approximation for $H_{i, \text{in}}$, and then compute the inversion efficiently. Revising the definition of $H_{i, \text{in}}$:
$$H_{i, \text{in}} = \partial_{\theta}\partial_{\theta} L_I(\lambda^*, \theta_i(\lambda^*);\mathcal{D}^{tr}_i).$$
The EK-FAC method relies on two approximations of the Fisher information matrix $G_{{\theta}}$, which is equivalent to $H_{i,\text{in}}$ when the loss function $\ell$ in $L_I(\lambda^*, \theta_i(\lambda^*);\mathcal{D}^{tr}_i)$ is specified as cross-entropy. Firstly, assume that the derivatives of the weights in different layers are uncorrelated, which implies that $H_{i,\text{in}}$ has a block-diagonal structure.

Suppose model $\hat{f}_\theta$ can be denoted by $\hat{f}_{\theta}(x) = f_{\theta_L} \circ \cdots \circ f_{\theta_1}(x)$. We fold the bias into the weights and vectorize the parameters in the $l$-th layer into a vector $\theta_{l}\in\mathbb{R}^{d_l}$, $d_l\in \mathbb{N}$ is the number of $l$-th layer parameters.

Then $G_{{\theta}}$ can be replaced by $\left(G_1({\theta}), \cdots, G_L({\theta})\right)$, where 
$$G_l({\theta}) \triangleq n^{-1}\sum_{i=1}^n\nabla_{{\theta}_l}L_I(\lambda^*, \theta_i(\lambda^*);\mathcal{D}^{tr}_i) \nabla_{\theta_l}L_I(\lambda^*, \theta_i(\lambda^*);\mathcal{D}^{tr}_i)^{\mathrm{T}}.$$ 
Denote $h_{l}$, $o_l$ as the output and pre-activated output of $l$-th layer. 
Define ${\Omega}_{l-1} = \hat{G}_l(\theta) \triangleq \frac{1}{n} \sum_{i=1}^{n} h_{l-1}\left(x_{i}\right) h_{l-1}\left(x_{i}\right)^{T} \otimes \frac{1}{n} \sum_{i=1}^{n} \nabla_{o_{l}} \ell_{i} \nabla_{o_{l}} \ell_{i}^{T}$.
Then $G_l(\theta)$ can be approximated by
\begin{equation*}
{G}_l(\theta) \approx {\Omega}_{l-1}\otimes {\Gamma_{l}}.
\end{equation*}
Furthermore, in order to accelerate transpose operation and introduce the damping term, perform eigenvalue decomposition of matrix ${\Omega}_{l-1}$ and ${\Gamma_{l}}$ and obtain the corresponding decomposition results as $Q_{\Omega} {\Lambda}_{{\Omega}} {Q}_{{\Omega}}^{\top}$ and ${Q}_{\Gamma}{\Lambda}_{\Gamma} {Q}_{\Gamma}^{\top}$. Then the inverse of $ \hat{H}_l(\theta)$ can be obtained by
\begin{equation}\label{app:ekfac:1}
     \hat{H}_l(\theta)^{-1} \approx \left(\hat{G}_{l}\left(\hat{g}\right)+\lambda_{l} I_{d_{l}}\right)^{-1}=\left(Q_{\Omega_{l-1}} \otimes Q_{\Gamma_{l}}\right)\left(\Lambda_{\Omega_{l-1}} \otimes \Lambda_{\Gamma_{l}}+\lambda_{l} I_{d_{l}}\right)^{-1}\left(Q_{\Omega_{l-1}} \otimes Q_{\Gamma_{l}}\right)^{\mathrm{T}}.
\end{equation}
Besides, \citet{george2018fast} proposed a new method that corrects the error in equation \ref{app:ekfac:1} which sets the $i$-th diagonal element of $\Lambda_{\Omega_{l-1}} \otimes \Lambda_{\Gamma_{l}}$ as $\Lambda_{ii}^{*} = n^{-1} \sum_{j =1}^n \left( \left( Q_{\Omega_{l-1}}\otimes Q_{\Gamma_{l}}\right)\nabla_{\theta_l}\ell_j\right)^2_i.$\\

\begin{definition}[Evaluation Function for Meta Learning] 
Given a new task specific dataset $D_{new} = D_{new}^{tr} \cup  D_{new}^{val}$, the few-shot learned parameter $\theta'$ for the task is obtained by minimizing the loss $L_I(\lambda^*, \theta; D_{new}^{tr})$. We can evaluate the performance of the meta parameter $\lambda^*$ based on the loss of $\theta'$ on the validation dataset, expressed as $\sum_{z\in D_{new}^{val}}\ell(z; \theta')$.
\end{definition}
Based on this metric, we propose a calculation method for the task and instance Influence Score, defined as $\text{IS} = \sum_{z \in D_{\text{new}}^{\text{val}}} \nabla_{\theta'} \ell(z; \theta') \cdot \text{IF}$.

\begin{proof}
Assume we consider the contribution of a set of data $D$, define the initial optimal meta parameter as $\lambda^*$, and the retrained optimal meta parameter after removing the data $D$ as $\lambda^*_{-D}$. We define the evaluation metric as the test loss difference:
\begin{equation*}
    \begin{split}
    \sum_{z\in D_{new}^{val}}\ell(z; \theta'(\lambda^*)) - \sum_{z\in D_{new}^{val}}\ell(z; \theta'(\lambda^*_{-D}))
    \end{split}
\end{equation*}
We can estimate this difference by a Taylor expansion as follows:
\begin{equation*}
    \begin{split}
        &\sum_{z\in D_{new}^{val}}\ell(z; \theta'(\lambda^*)) - \sum_{z\in D_{new}^{val}}\ell(z; \theta'(\lambda^*_{-D}))\\
= & \sum_{z\in D_{new}^{val}} \nabla_{\theta'} \ell(z; \theta'(\lambda^*))\cdot (  \theta'(\lambda^*) - \theta'(\lambda^*_{-D}))\\
= &  \sum_{z\in D_{new}^{val}}  \nabla_{\theta'} \ell(z; \theta'(\lambda^*))\cdot  \frac{\mathrm{d}\theta'(\lambda^*)}{\mathrm{d}\lambda^*} \left(  \lambda^* - \lambda^*_{-D}\right).
    \end{split}
\end{equation*}
To make our method more efficient, we neglect the influence in $\frac{\mathrm{d}\theta'(\lambda^*)}{\mathrm{d}\lambda^*}$ and directly estimate the loss difference by 
\begin{equation*}
    \sum_{z\in D_{new}^{val}}  \nabla_{\theta'} \ell(z; \theta'(\lambda^*))\cdot  \left(  \lambda^* - \lambda^*_{-D}\right),
\end{equation*}
in which $ \lambda^* - \lambda^*_{-D}$ can be estimated by our proposed IF. Then we define the Influence Score(IS) as $\text{IS} = \sum_{z \in D_{\text{new}}^{\text{val}}} \nabla_{\theta'} \ell(z; \theta'(\lambda^*)) \cdot \text{IF}$, and use it to evaluate the influence of the data $D$.
\end{proof}

\section{Full Algorithms}
\begin{algorithm}
\caption{Task-IF\label{alg:1}}
\begin{algorithmic}[1]
    \STATE {\bf Input:} \\
    {\bf Data:} 
    Training Dataset $\mathcal{D} = \cup_{i\in \mathcal{I}}\{\mathcal{D}^{tr}_i\cup \mathcal{D}^{val}_i\}$, task $k$ to be evaluated.\\
    {\bf Parameter: } 
    The learned meta parameter $\lambda^*$, the task-related parameters $\theta_i$ for $i\in\mathcal{I}$. \\
\STATE Subsample a task subset $\mathcal{I}_S$. For the detailed methods, there are multiple choices, including random sampling, difficulty-based sampling, uncertainty-based sampling.
\STATE Compute the {\bf total gradient} by Algorithm \ref{alg:total-gradient}.
\STATE Use {\bf Neumann Series Approximation method} to compute the $\text{task-IF}$ by Algorithm \ref{alg:Total-Hessian}.
\STATE Approximate the model retained when the $k$-th task is removed: $\lambda_-^* = \lambda^* - \text{task-IF}$
\STATE {\bf Return: }$\text{task-IF}$,  $\lambda_-^*$.
\end{algorithmic}
\end{algorithm}

\begin{algorithm}
\caption{Direct IF\label{alg:baseline}}
\begin{algorithmic}[1]
    \STATE {\bf Input:} \\
    {\bf Data:} 
    Training Dataset $\mathcal{D} = \cup_{i\in \mathcal{I}}\{\mathcal{D}^{tr}_i\cup \mathcal{D}^{val}_i\}$, task $k$ to be evaluated.\\
    {\bf Parameter: } 
    The learned meta-parameter $\lambda^*$, the task-related parameters $\theta_i$ for $i\in\mathcal{I}$. \\
\STATE Subsample a task subset $\mathcal{I}_S$. For the detailed methods, there are multiple choices, including random sampling, difficulty-based sampling, uncertainty-based sampling.
 \STATE Compute the partial gradient of the $k$-th outer loss function as $$T^k = \partial_{\lambda}L_O(\lambda^*, \theta_k; \mathcal{D}^{val}_k).$$
 \STATE Compute the $\text{direct-IF}(k)$ based on $T^k$:
$$\text{direct-IF} = -\left(\sum_{i\in\mathcal{I}} \partial_{\lambda}\partial_{\lambda}L_O(\lambda^*, \theta_i; \mathcal{D}^{val}_i) + \delta \cdot I\right)^{-1}\cdot T^k.$$
\STATE Approximate the model retained when the $k$-th task is removed: $\lambda_-^* = \lambda^* - \text{direct-IF}$.
\STATE {\bf Return: }$\text{direct-IF}(k)$, $\lambda_-^*$.
\end{algorithmic}
\end{algorithm}

\begin{algorithm}
\caption{Total Gradient Computation for Task\label{alg:total-gradient}}
\begin{algorithmic}[1]
    \STATE {\bf Input:} \\
    {\bf Data:} 
    Training Dataset $\mathcal{D} = \cup_{i\in \mathcal{I}}\{\mathcal{D}^{tr}_i\cup \mathcal{D}^{val}_i\}$, task $k$ to be evaluated.\\
    {\bf Parameter: } 
    the learned meta-parameter $\lambda^*$, the task-related parameters $\theta_i$ for $i\in\mathcal{I}$. \\
    \STATE Use EK-FAC to compute the middle gradient $M$ as
    \begin{equation*}
        M_k = \partial_{\lambda}\partial_{\theta}L_I\left(\lambda^*, \theta_k; \mathcal{D}^{tr}_k\right) \cdot {{\partial_{\theta}\partial_{\theta}}^{-1}L_I\left(\lambda^*, \theta_k; \mathcal{D}^{tr}_k\right)}  \cdot \partial_{\theta} L_O(\lambda^*, \theta_k; \mathcal{D}^{val}_k).
    \end{equation*}
 \STATE Compute the total derivative of the $k$-th outer loss function with respect to $\lambda^*$ as $T^k$:
$$T^k = \partial_{\lambda}L_O(\lambda^*, \theta_k; \mathcal{D}^{val}_k) -M_k$$.
\STATE {\bf Return: }Total Gradient of the $k$-th task $T_k$.
\end{algorithmic}
\end{algorithm}

\begin{algorithm}
\caption{Neumann Expansion Based Total Hessian Computation\label{alg:Total-Hessian}}
\begin{algorithmic}[1]
    \STATE {\bf Input:} \\
{\bf Data:} Training Dataset $\mathcal{D} = \cup_{i\in \mathcal{I}}\{\mathcal{D}^{tr}_i\cup \mathcal{D}^{val}_i\}$, task $k$ to be evaluated.\\
    {\bf Parameter: } The learned meta-parameter $\lambda^*$, the task-related parameters $\theta_i$ for $i\in\mathcal{I}$. \\
    {\bf Task Gradient:} $T$\\
\STATE $j\xleftarrow{}1$\\
    $T\xleftarrow{}I_0$
\IF{$\|I_j - I_{j-1}\|_1 > \zeta$}
    \FOR{$i\in \mathcal{I}$}
        \STATE The total Hessian $H^i$ related to the $i$-th task is defined as 
        \begin{equation*}
            \begin{split}
&\partial_{\lambda}\partial_{\lambda}L_O\left(\lambda^*, \theta_i; \mathcal{D}^{val}_i\right)+ \|\partial_{\theta} L_O\left(\lambda^*, \theta_i; \mathcal{D}^{val}_i\right)\|_{1}\cdot I\\
& - 2\cdot  \partial_{\lambda}\partial_{\theta}L_I\left(\lambda^*, \theta_i; \mathcal{D}^{tr}_i\right) \cdot {{\partial_{\theta}\partial_{\theta}}^{-1}L_I\left(\lambda^*, \theta_i; \mathcal{D}^{tr}_i\right)}  \cdot \partial_{\theta}\partial_{\lambda}L_O\left(\lambda^*, \theta_i; \mathcal{D}^{val}_i\right),
            \end{split}
        \end{equation*}
        where $I$ is the identity matrix.
\ENDFOR
\STATE Use EK-FAC to compute the Hessian-vector product $S_j\triangleq \sum_{i\in\mathcal{I}}H^i\cdot  I_j$.
\STATE Compute $I_{j+1}$ by
\begin{equation*}
   I_{j+1}= I_{j} -  \delta\cdot I_{j} + S_j +  T
\end{equation*}
\STATE $j\xleftarrow{}j+1$
\ENDIF
\STATE {\bf Return: }$\text{IF} = I_{j}$.
\end{algorithmic}
\end{algorithm}

\begin{algorithm}
\caption{Instance-level IF for Validation Data\label{alg:ins_vali}}
\begin{algorithmic}[1]
    \STATE {\bf Input:} \\
    {\bf Data:}
    Training Dataset $\mathcal{D} = \cup_{i\in \mathcal{I}}\{\mathcal{D}^{tr}_i\cup \mathcal{D}^{val}_i\}$, data $(\tilde{x}, \tilde{y})$ to be evaluated, the corresponding task $k$,\\
    {\bf Parameter: } The learned meta-parameter $\lambda^*$, the task-related parameters $\theta_i$ for $i\in\mathcal{I}$. \\
    \STATE Use EK-FAC to compute the middle gradient $M$ as
\begin{equation*}
M = \partial_{\lambda}\partial_{\theta}L_I\left(\lambda^*, \theta_k; \mathcal{D}^{tr}_k\right) \cdot {{\partial_{\theta}\partial_{\theta}}^{-1}L_I\left(\lambda^*, \theta_k; \mathcal{D}^{tr}_k\right)}  \cdot \partial_{\theta} \ell(\tilde{x}, \tilde{y};\theta_k).
\end{equation*}
 \STATE Compute the total derivative of the $k$-th outer loss function with respect to $\lambda^*$ as $T$:
$$T = \partial_{\lambda}\ell(\tilde{x}, \tilde{y};\theta_k) -M.$$
\STATE Take $T$ as the gradient to be the input, use {\bf Neumann Series Approximation method} to compute the $\text{instance-IF}$ for validation data by Algorithm \ref{alg:Total-Hessian}.
\STATE Approximate the model retained when the $k$-th task is removed: $\lambda_-^* = \lambda^* - \text{instance-IF}$
\STATE {\bf Return: }$\text{instance-IF}$, $\lambda_-^*$.
\end{algorithmic}
\end{algorithm}

\begin{algorithm}
\caption{Instance-IF for Training Data\label{alg:instance-IF}}
\begin{algorithmic}[1]
    \STATE {\bf Input:} \\
    {\bf Data:}
    Training Dataset $\mathcal{D} = \cup_{i\in \mathcal{I}}\{\mathcal{D}^{tr}_i\cup \mathcal{D}^{val}_i\}$, data $(\tilde{x}, \tilde{y})$ to be evaluated, the corresponding task $k$,\\
    {\bf Parameter: } The learned meta-parameter $\lambda^*$, the task-related parameters $\theta_i$ for $i\in\mathcal{I}$. \\
\STATE Compute inner-IF for the $k$-th task as:
\begin{equation*}
    \text{inner-IF} = -  {\partial_{\theta}\partial_{\theta}}^{-1} L_I\left(\lambda^*, \theta_{k};\mathcal{D}^{tr}_k\right)\cdot \partial_{\theta} \ell(\tilde{x}, \tilde{y}; \theta_{k}).
\end{equation*}
\STATE Compute the outer influence term $P_k$ as 
\begin{equation*}
           P_k = -\partial_{\theta} L_O\left(\lambda^*, \theta_k; D_k^{val}\right)^{\mathrm{T}}\cdot \text{inner-IF}.
\end{equation*}

\STATE Take $T$ as the gradient to be the input, use {\bf Neumann Series Approximation method} to compute the $\text{instance-IF}$ for validation data by Algorithm \ref{alg:Total-Hessian}.
\STATE Approximate the model retained when the data is removed: $\lambda_-^* = \lambda^* + \text{instance-IF}$
\STATE {\bf Return: }$\text{instance-IF}$, $\lambda_-^*$.
\end{algorithmic}
\end{algorithm}

% \begin{algorithm}
% \caption{Self-Influence Score\label{alg:self_influence}}
% \begin{algorithmic}[1]
%     \STATE {\bf Input:} Training Dataset $\mathcal{D} = \cup_{i\in \mathcal{I}}\{\mathcal{D}^{tr}_i\cup \mathcal{D}^{val}_i\}$, task $k$ which contains the data $(x,y)$ to be evaluated, the mata-parameter $\lambda^*$, all the inner parameters $\theta_i(\lambda^*)$, $i\in\mathcal{I}$, outer loss function form $L_O(\left(\lambda, \theta(\lambda)\right); D^{val})$.and inner loss function form $ L_I(\mathcal{D}_{tr}^i,\theta_i, \lambda) = \frac{1}{|\mathcal{D}_{tr}^i|} \sum_{(x,y)\in\mathcal{D}_{tr}^i}\ell(x_i,y_i;\theta_i) + \frac{\delta}{2} \|\theta_i-\lambda\|^2$.
% \STATE Use Algorithm \ref{alg:instance-IF} to obtain the instance IF $\text{ins-IF}(x,y; \lambda^*)$.
% \STATE Obtain the gradient as
% \begin{equation*}
%     T = \nabla_{\lambda} \ell(x,y;\lambda^*)
% \end{equation*}
% \STATE The final instance self-influence score is computed by
% \begin{equation*}
%     \text{IS}(x,y;\lambda^*) = T^{\mathrm{T}}\cdot \text{ins-IF}(x,y; \lambda^*)
% \end{equation*}
% \STATE {\bf Return: }$\text{IS}(x,y;\lambda^*)$.
% \end{algorithmic}
% \end{algorithm}

\clearpage
\section{Additional Visualization}

\begin{figure}[htbp]
    \centering
    \textbf{Mini-ImageNet Dataset}\\[10pt]
    \begin{subfigure}[b]{0.45\textwidth}
        \includegraphics[width=\textwidth,height=0.3\textwidth]{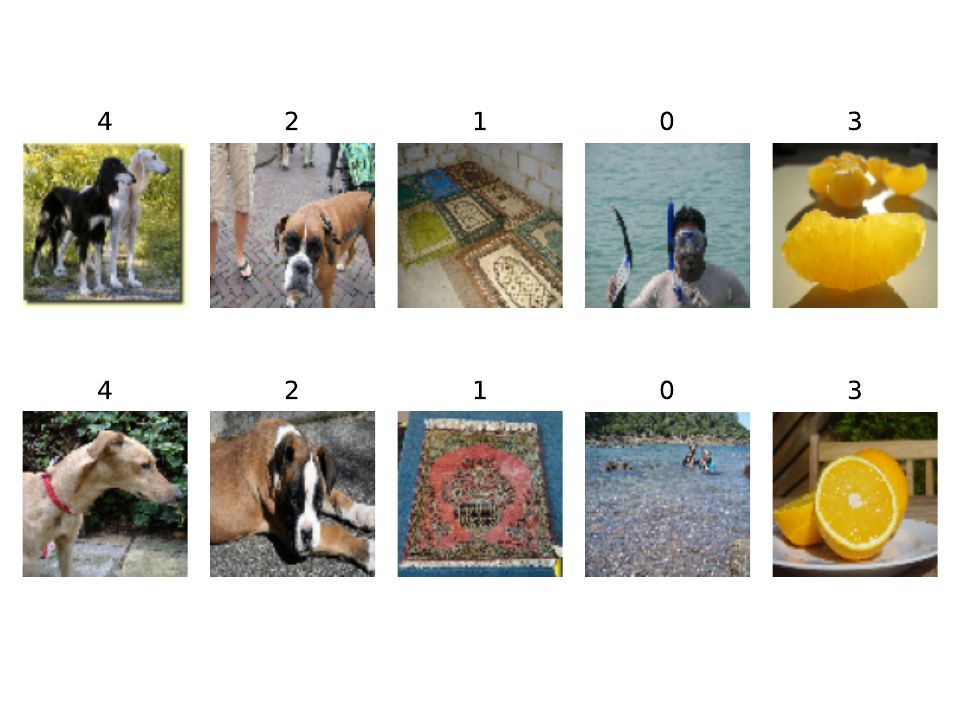}
        \caption{Influence score: 0.000878}
    \end{subfigure}
    \hfill
    \begin{subfigure}[b]{0.45\textwidth}
        \includegraphics[width=\textwidth,height=0.3\textwidth]{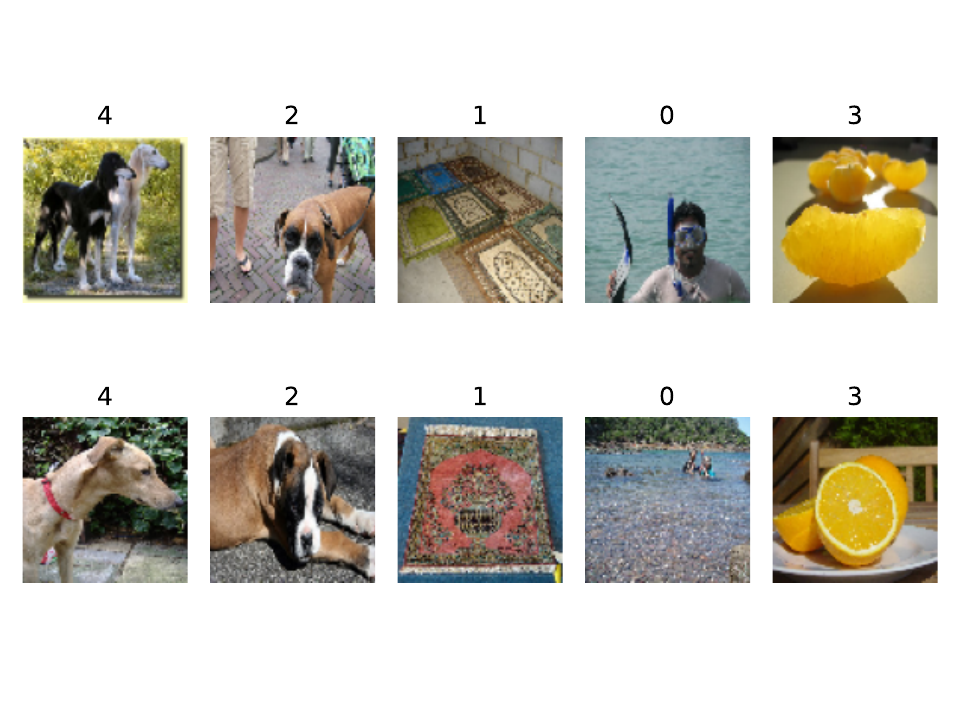}
        \caption{Influence score: 0.000879}
    \end{subfigure}

    \begin{subfigure}[b]{0.45\textwidth}
        \includegraphics[width=\textwidth,height=0.3\textwidth]{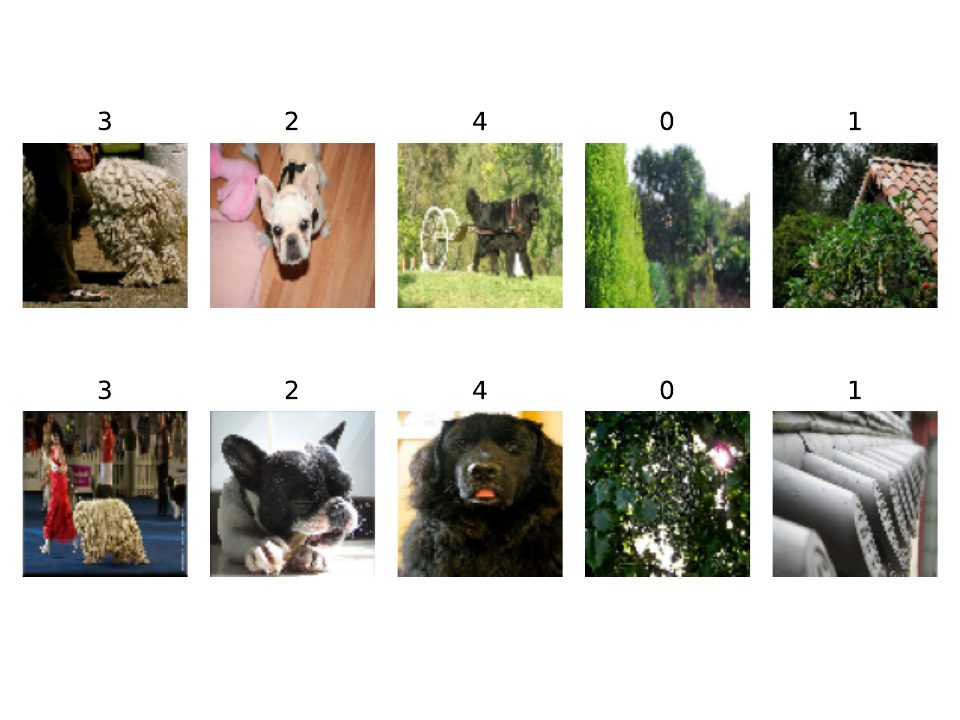}
        \caption{Influence score: 0.000554}
    \end{subfigure}
    \hfill
    \begin{subfigure}[b]{0.45\textwidth}
        \includegraphics[width=\textwidth,height=0.3\textwidth]{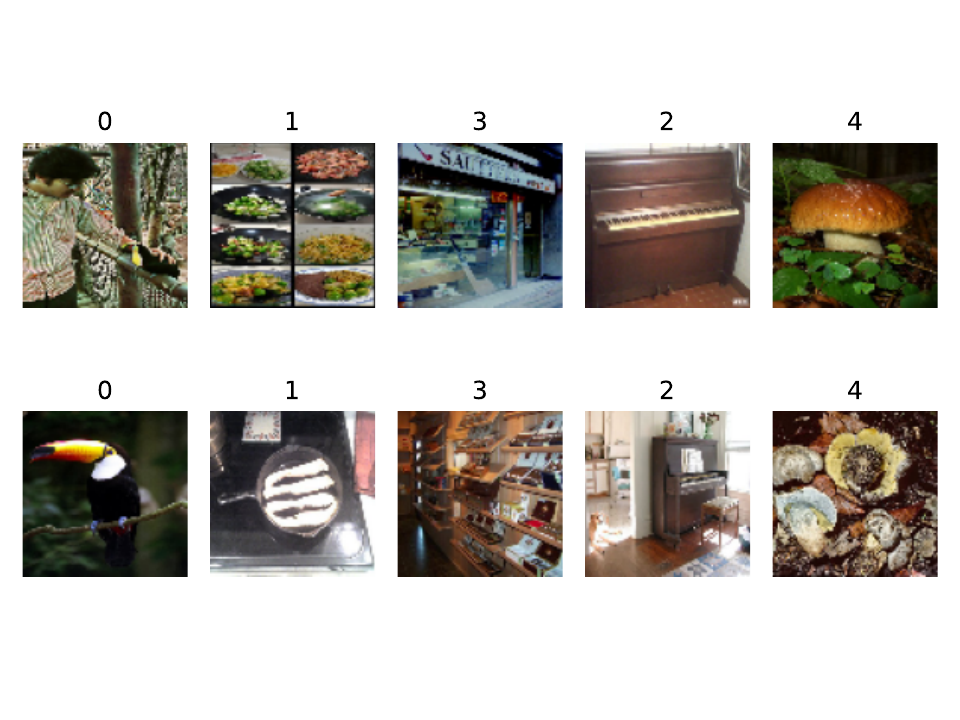}
        \caption{Influence score: 0.000577}
    \end{subfigure}

    \textbf{Omniglot Dataset}\\[10pt]
    \begin{subfigure}[b]{0.45\textwidth}
        \includegraphics[width=\textwidth,height=0.3\textwidth]{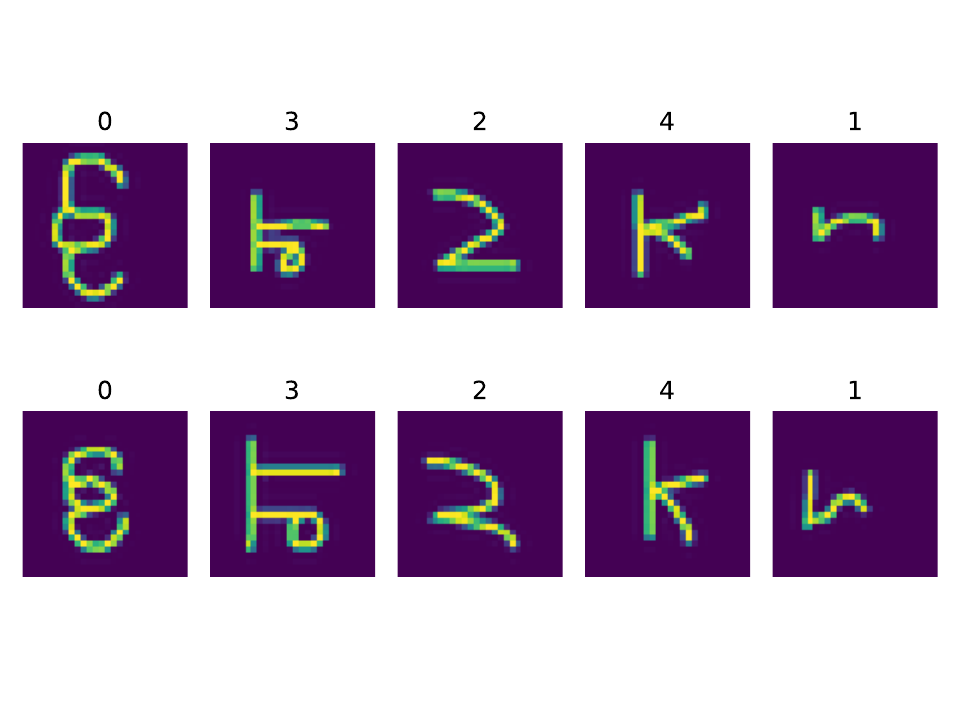}
        \caption{Influence score: 0.000413}
    \end{subfigure}
    \hfill
    \begin{subfigure}[b]{0.45\textwidth}
        \includegraphics[width=\textwidth,height=0.3\textwidth]{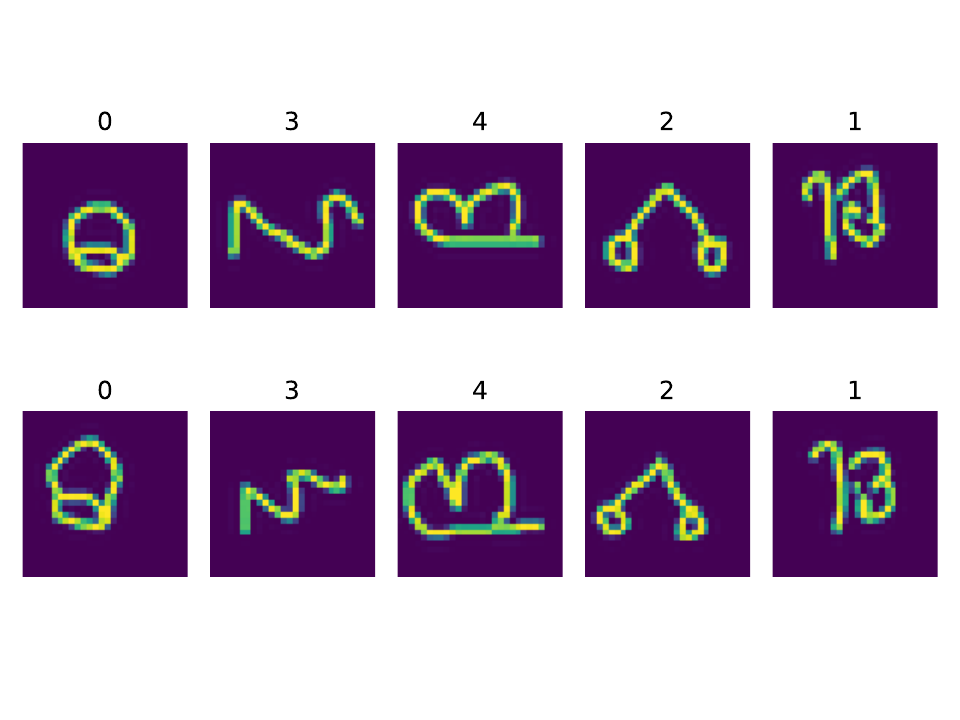}
        \caption{Influence score: 0.000424}
    \end{subfigure}

    \begin{subfigure}[b]{0.45\textwidth}
        \includegraphics[width=\textwidth,height=0.3\textwidth]{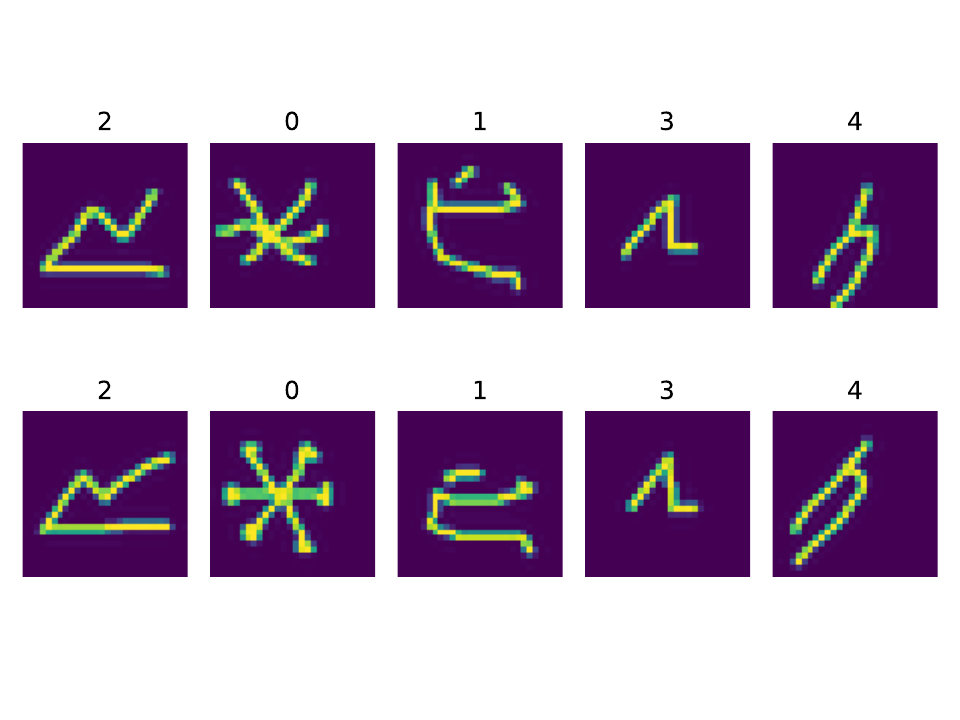}
        \caption{Influence score: 0.000507}
    \end{subfigure}
    \hfill
    \begin{subfigure}[b]{0.45\textwidth}
        \includegraphics[width=\textwidth,height=0.3\textwidth]{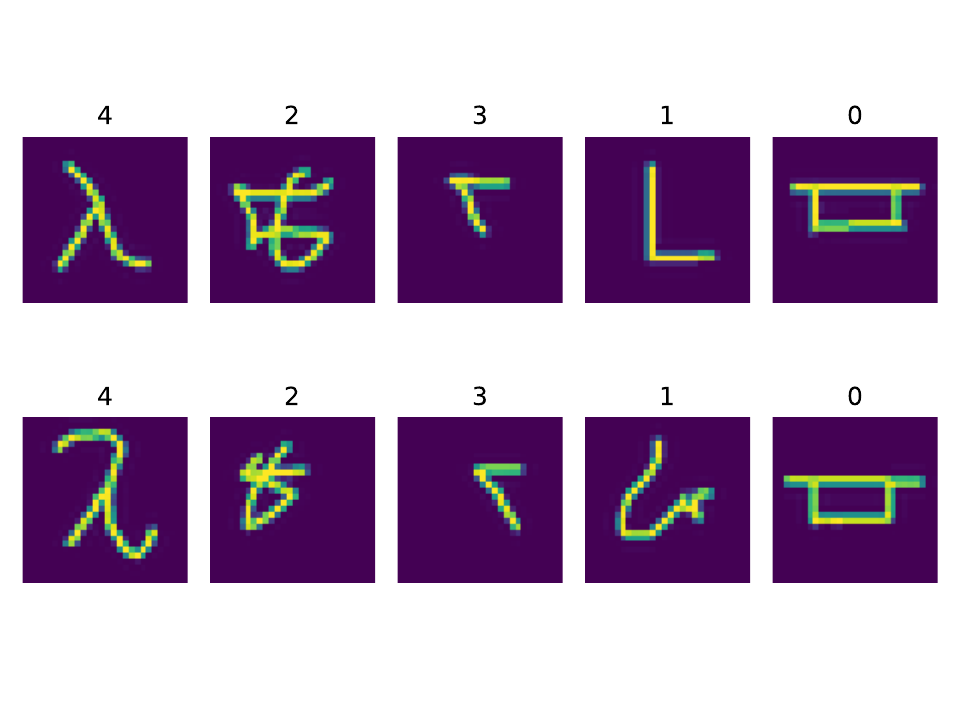}
        \caption{Influence score: 0.000511}
    \end{subfigure}

    \textbf{MNIST Dataset}\\[10pt]
    \begin{subfigure}[b]{0.45\textwidth}
        \includegraphics[width=\textwidth,height=0.3\textwidth]{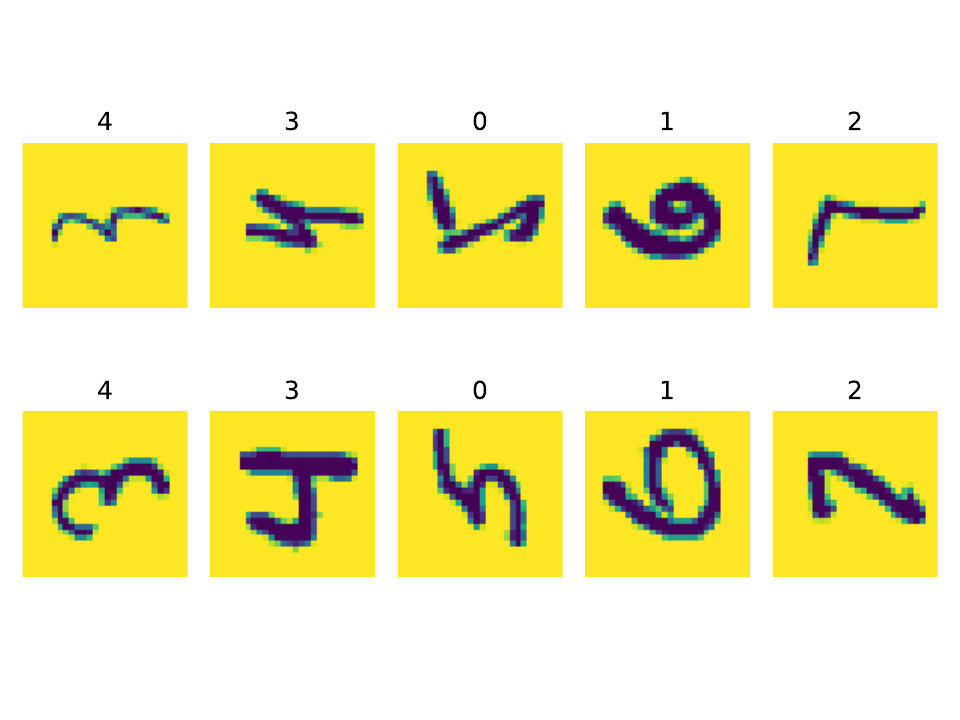}
        \caption{Influence score: 0.000492}
    \end{subfigure}
    \hfill
    \begin{subfigure}[b]{0.45\textwidth}
        \includegraphics[width=\textwidth,height=0.3\textwidth]{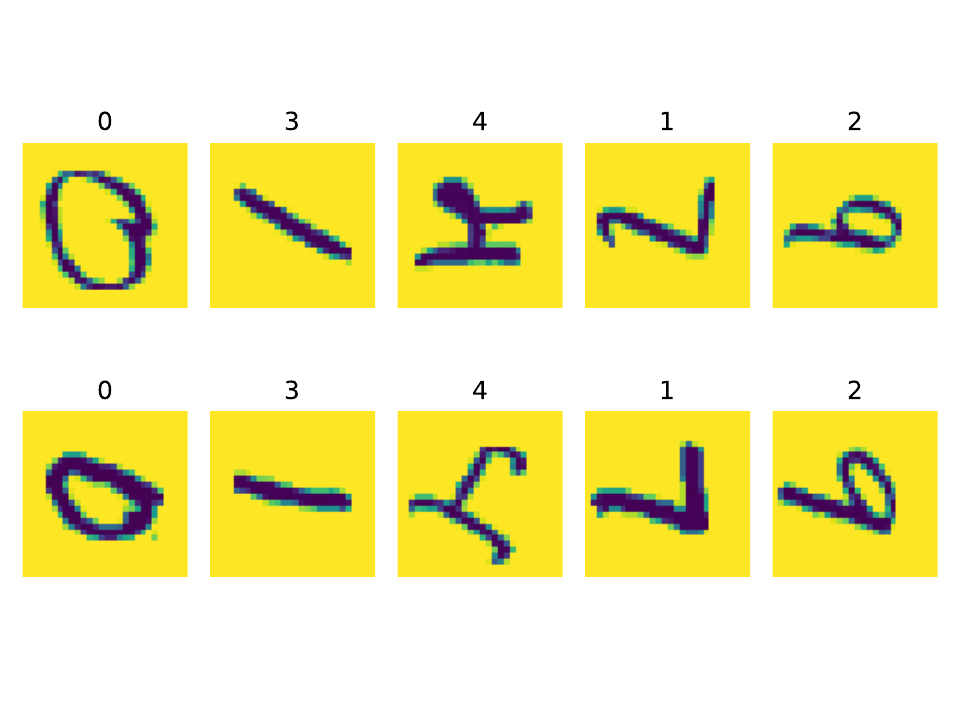}
        \caption{Influence score: 0.000498}
    \end{subfigure}

    \begin{subfigure}[b]{0.45\textwidth}
        \includegraphics[width=\textwidth,height=0.3\textwidth]{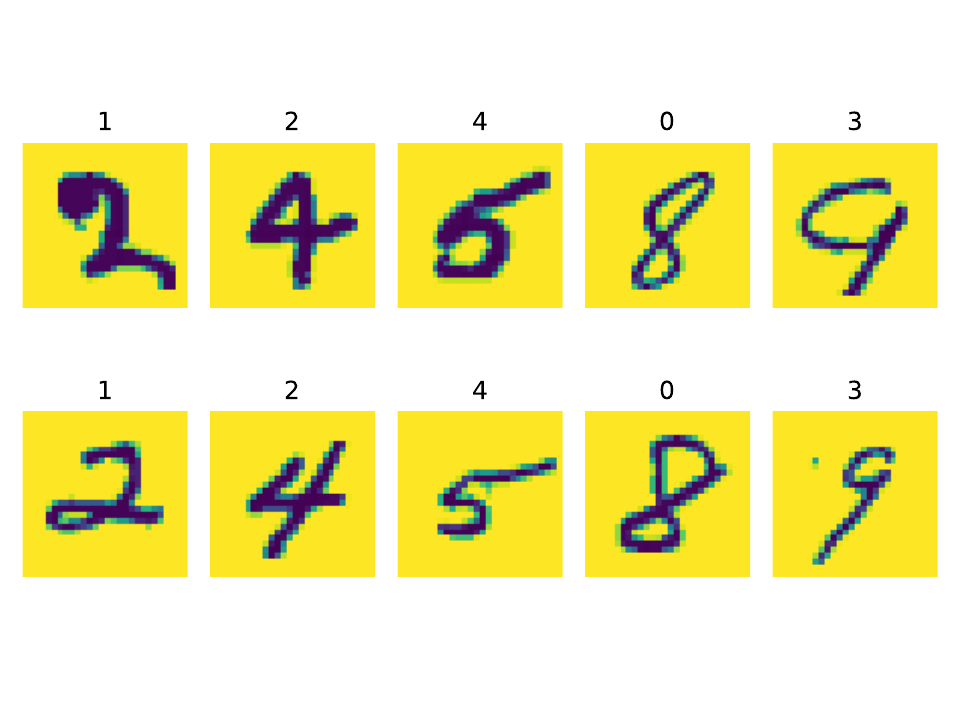}
        \caption{Influence score: 0.000537}
    \end{subfigure}
    \hfill
    \begin{subfigure}[b]{0.45\textwidth}
        \includegraphics[width=\textwidth,height=0.3\textwidth]{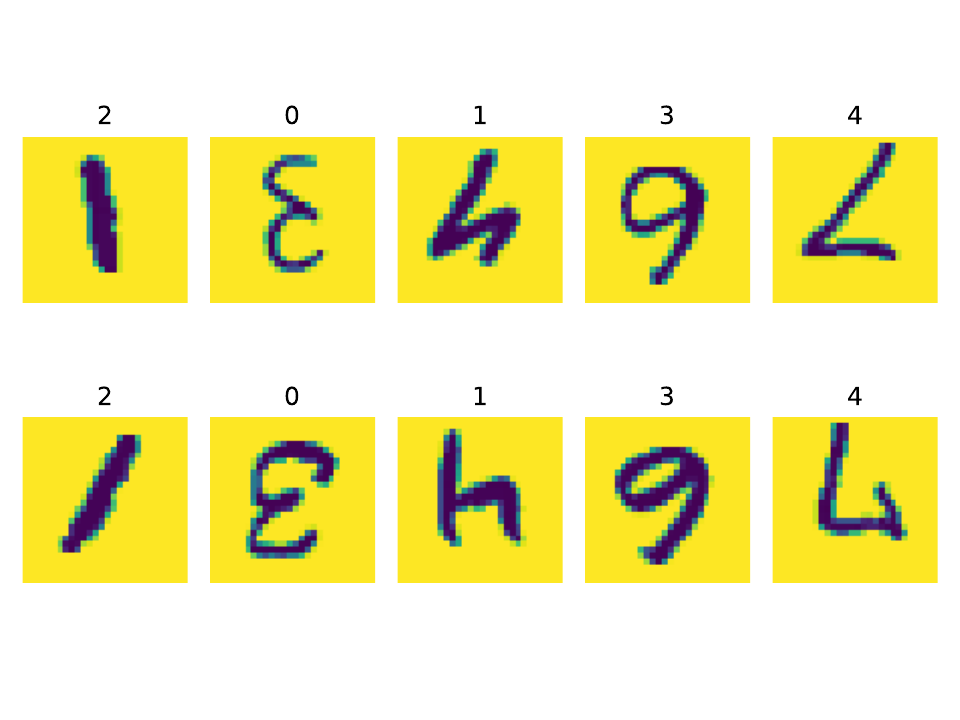}
        \caption{Influence score: 0.000557}
    \end{subfigure}
\end{figure}

\end{document}